\newtheorem{definition}{Definition}
\declaretheorem[name=Theorem]{theorem}
\newtheorem{lemma}{Lemma}
\newtheorem{corollary}{Corollary}
\newcommand{\op}{\operatorname*}
\definecolor{blueish}{RGB}{103, 135, 176}
\definecolor{reddish}{RGB}{ 205, 102, 7}
\crefname{equation}{}{}
\Crefname{equation}{}{}
\Crefname{definition}{Definition}{Definitions}
\Crefname{lemma}{Lemma}{Lemmas}
\crefname{lemma}{Lemma}{Lemmas}
\Crefname{corollary}{Corollary}{Corollaries}
\crefname{corollary}{Corollary}{Corollaries}
\definecolor{codegreen}{rgb}{0,0.6,0}
\definecolor{codegray}{rgb}{0.5,0.5,0.5}
\definecolor{codepurple}{rgb}{0.58,0,0.82}
\definecolor{backcolour}{rgb}{0.95,0.95,0.92}
\lstdefinestyle{mystyle}{
	backgroundcolor=\color{backcolour},   
	commentstyle=\color{codegreen},
	keywordstyle=\color{magenta},
	numberstyle=\tiny\color{codegray},
	stringstyle=\color{codepurple},
	basicstyle=\footnotesize,
	breakatwhitespace=false,         
	breaklines=true,                 
	captionpos=b,                    
	keepspaces=true,                 
	numbers=left,                    
	numbersep=5pt,                  
	showspaces=false,                
	showstringspaces=false,
	showtabs=false,                  
	tabsize=2
}
\newcommand\appendToPath[1]
	\providecommand*{\Ginput@path}{}
	\g@addto@macro\Ginput@path{{#1}}
	\providecommand*{\input@path}{}
	\g@addto@macro\input@path{{#1}}
\title{Regret Bounds for Gaussian-Process Optimization in Large Domains}
\author{%
	Manuel W\"uthrich\thanks{\texttt{manuel.wuthrich@pm.me}} \\
	MPI for Intelligent Systems\\
	T\"ubingen, Germany \\
	\And
	Bernhard Sch\"olkopf\\
	MPI for Intelligent Systems\\
	T\"ubingen, Germany \\
	\And 
	Andreas Krause\\
	ETH Zurich\\
	Zurich, Switzerland\\
}
\begin{document}

	\global\long\def\ref{\Cref}

\maketitle

\begin{abstract}%
The goal of this paper is to characterize Gaussian-Process optimization in the setting where the function domain is large relative to the number of admissible function evaluations, i.e., where it is impossible to find the global optimum. We provide upper bounds on the suboptimality (Bayesian simple regret) of the solution found by optimization strategies that are closely related to the widely used expected improvement (EI) and upper confidence bound (UCB) algorithms. These regret bounds illuminate the relationship between the number of evaluations, the domain size (i.e. cardinality of finite domains / Lipschitz constant of the covariance function in continuous domains), and the optimality of the retrieved function value. In particular, we show that even when the number of evaluations is far too small to find the global optimum, we can find nontrivial function values (e.g. values that achieve a certain ratio with the optimal value).

\end{abstract}

\section{Introduction}

\label{sec:introduction}

In practice, nonconvex, gradient-free optimization problems arise frequently, for instance when tuning the parameters
of an algorithm or optimizing the controller of a physical system (see e.g. \cite{shashariari_16_bayes_opt_survey})
for more concrete examples). Different versions of this problem have been
addressed e.g. in the multi-armed-bandit and the Bayesian-optimization literature (which we review below).
However, the situation where the number 
of admissible function evaluations is too small to identify the 
global optimum has so far received little attention, despite arising frequently in practice. For instance, when optimizing the hyper-parameters of a machine-learning method or the controller-parameters of a physical system, it is typically impossible to explore the domain exhaustively.
We take first steps towards a better understanding of this setting through a theoretical analysis based on the adaptive-submodularity
framework by~\cite{golovin11adaptive}. It is not the purpose of the present paper to propose a novel algorithm with better performance (we modify the EI and UCB strategies merely to facilitate the proof), but rather to gain an understanding of how GP-optimization performs in the aforementioned setting, as a function of the number of evaluations and the domain size.

As is done typically, we model the uncertainty about the underlying function as a Gaussian Process (GP). The question is how close we can get
to the global optimum with $T$ function evaluations, where $T$ is
small relative to the domain of the function (i.e., it is impossible to identify the global optimum with high
confidence with only $T$ evaluations). 
As a performance measure, we use the Bayesian simple regret, i.e., the
expected difference between the optimal function value and the value
attained by the algorithm. For the discrete case with domain size $N$, we derive a problem-independent regret
bound (i.e., it only depends on $T,N$ and is worst-case in terms
of the GP prior) for two optimization algorithms that are closely
related to the well-known expected improvement (EI, \cite{jones_1998})
and the upper confidence bound (UCB, \cite{auer_2002,gpucb}) methods.
In contrast to related work, our bounds are non-vacuous even when
we can only explore a small fraction of the function. We extend this result to continuous domains and show that the resulting bound scales better with the size of the function domain (equivalently, the Lipschitz constant of the covariance function) than related work (\cite{Grunewalder2010-uu}).

\subsection{Related Work}
The multi-armed bandit literature is closest to the present paper.
In the multi-armed bandit problem, the agent faces a row of $N$ slot
machines (also called one-armed bandits, hence the name). The agent
can decide at each of $T$ rounds which lever (arm) to pull, with
the objective of maximizing the payoff. This problem was originally
proposed in \cite{robbins_1952} to study the trade-off between exploration
and exploitation in a generic and principled way. Since then, this
problem has been studied extensively and important theoretical results
have been established.

Each of the $N$ arms has an associated reward distribution, with
a fixed but unknown mean $F_{n}$. When pulling arm $n$ at round
$t$, we obtain a payoff $Y_{t}:=F_{n}+E_{n,t}$ where the noise is
zero mean $\mathbb{E}[E_{n,t}]=0$, independent across time steps
and arms, and identically distributed for a given arm across time
steps. Let us denote the arm pulled at time $t$ by $A_{t}$. Performance
is typically analyzed in terms of the regret, which is defined as
the difference between the mean payoff of the optimal arm and the
one pulled at time $t$ 
\begin{align}
	R_{t}:=\max_{n\in[N]}F_{n}-F_{A_{t}}.
\end{align}
Here, we are interested in the setting where $E_{n,t}$ is small or
even zero and we are allowed to pull only few arms $T<N$. We classify
related work according to the information about the mean $F_{n}$ and
the noise $E_{n,t}$ that is available to the agent.

\subsubsection{No Prior Information}

Traditionally, most authors considered the case where only very
basic information about the payoff $F_{n}+E_{n,t}$ is available to the
agent, e.g. that its distribution is a member of a given family of distributions
or that it takes values in an interval, typically $[0,1]$.

In a seminal paper, \citet{lai_1985} showed that the cumulative regret
$\sum_{t\in[T]}R_{t}$ grows at least logarithmically in $T$ and
proposed a policy that achieves this rate asymptotically. Later, \citet{auer_2002}
proposed an upper confidence bound (UCB) strategy and proved that
it achieves logarithmic cumulative regret in $T$ uniformly in time,
not just asymptotically. Since then, many related results
have been obtained for UCB and other strategies, such as Thompson sampling
\cite{agrawal_2012,kaufmann_2012}.
A number of similar results have also been obtained for the objective
of best arm identification \cite{madani_2004,bubeck_2009,audibert_2010},
where we do not care about the cumulative regret, but only about the
lowest regret attained.

However, all of these bounds are nontrivial only when the number of
plays is larger than the number of arms $T>N$. This is not surprising,
since no algorithm can be guaranteed to perform well with a low number
of plays with such limited information. To see this, consider an example
where $F_{i}=1$ and $F_{n}=0~\forall n\neq i$. Since the agent has
no prior information about which is the right arm $i$, the best it
can do is to randomly try out one after the other. Hence it is clear
that to obtain meaningful bounds for $T<N$ we need more prior knowledge
about the reward distributions of the arms.
\subsubsection{Structural Prior Information}
In \cite{dani_2008} the authors consider the problem where the mean
rewards at each bandit are a linear function of an unknown, $K$-dimensional
vector. However, similarly to the work above, the difficulty addressed
in this paper is mainly the noise, i.e. the fact that the same arm
does not always yield the same reward. For $E_{n,t}=0$, the $\mathcal{O}(K)$
cumulative regret bounds derived in this paper are trivial, since
with $\mathcal{O}(K)$ evaluations we can simply identify the linear function.

\subsubsection{Gaussian Prior}

More closely related to our work, a number of authors have considered
Gaussian priors. \citet{bull_2011} for instance provides asymptotic
convergence rates for the EI strategy with Gaussian-process priors
on the unknown function.

\citet{gpucb} propose a UCB algorithm for the Gaussian-process setting.
The authors give finite-time bounds on the cumulative regret.
However, these bounds are intended for a different setting than the one
we consider here: 1) They allow for noisy observations and 2) they 
are only meaningful if we are allowed to make a sufficient number of 
evaluations to attain low uncertainty over the entire GP. Please see \ref{sec:gpucb} for more details on the second point.

\citet{Russo2014-pj} use a similar
analysis to derive bounds for Thompson sampling, which are therefore
subject to similar limitations. In contrast, the bounds in 
\citet{Russo2016-xz} do not depend on the entropy of the entire GP,
but rather on the entropy of the optimal action. However,
our goal here is to derive bounds that are meaningful even when 
the optimal action cannot be found, i.e. its entropy remains large.

\citet{freitas_2012} complement the work in \citep{gpucb} by providing regret
bounds for the setting where the function can be observed without
noise. However, these are asymptotic and therefore not applicable
to the setting we have in mind. 

Similarly to the present paper, \citet{Grunewalder2010-uu} analyze
the Bayesian simple regret of GP optimization. They provide a lower and
an upper bound on the regret of the optimal policy for GPs on continuous domains with covariance
functions that satisfy a continuity assumption.
Here, we build on this work and derive a bound with an improved 
dependence on the Lipschitz constant of the covariance function, i.e., our bound 
scales better with decreasing length-scales (and, equivalently, larger domains).
Unlike \citep{Grunewalder2010-uu}, we also consider GPs on finite domains without
any restrictions on the covariance.

\subsubsection{Adaptive Submodularity}
The adaptive-submodularity framework of \cite{golovin11adaptive} is
in principle well suited for the kind of analysis we are interested
in. However, we will show that the problem at hand is not adaptively
submodular, but our proof is inspired by that framework.

\section{Problem Definition}

\subsection{The Finite Case}\label{sec:po_problem_statement} 
In this section we specify the type
of bandit problem we are interested in more formally. The goal is
to learn about a function with domain $\mathcal{A}=[N]$ (we will use $[N]$ to denote the set $\{1,...,N\})$ and co-domain $\mathbb{R}$.
We represent the function as a sequence $F=(F_{n})_{n\in[N]}$. Our
prior belief about the function $F$ is assumed to be Gaussian 
\begin{align}
	F\sim & \mathcal{N}(\mu,\Sigma).\label{eq:ps_prior}
\end{align}
At each of the $T$ iterations, we pick an action (arm) $A_{t}$ from
$[N]$ at which we evaluate the function. After each action, an observation
$Y_{t}\in\mathbb{R}$ is returned to the agent 
\begin{align}
	Y_{t}:=F_{A_{t}}.
\end{align}
Note that here we restrict ourselves to the case where the function
can be evaluated without noise.

For convenience, we introduce some additional random variables, based
on which the optimization algorithm will pick where to evaluate the
function next. We denote the posterior mean and covariance at time
$t$ by 
\begin{align}
	M_{t}:&=\mathbb{E}[F|A_{:t},Y_{:t}]\\
	C_{t}: & =\mathbb{COV}[F|A_{:t},Y_{:t}].
\end{align}
In addition, we will need the maximum and minimum observations up to
time $t$ 
\begin{align}
	\hat{Y}_{t}: & =\max_{k\in[t]}Y_{k}\quad\forall t\in\{1,..,T\}\\
	\check{Y}_{t}: & =\min_{k\in[t]}Y_{k}\quad\forall t\in\{1,..,T\}.
\end{align}
Furthermore, we will make statements about the difference between
the smallest and the largest observed value
\begin{align}
	\hat{\check{Y}}_{t}: & =\hat{Y}_{t}-\check{Y}_{t}\quad\forall t\in\{1,..,T\}.
\end{align}
Finally, for notational convenience we define 
\begin{align}
	\hat{Y},\check{Y},\hat{\check{Y}}: & =\hat{Y}_{T},\check{Y}_{T},\hat{\check{Y}}_{T}.
\end{align}
Analogously, let us define the function minimum $\check{F}:=\min_{n\in[N]}F_{n}$, maximum $\hat{F}: =\max_{n\in[N]}F_{n}$ and difference $\hat{\check{F}}:  =\hat{F}-\check{F}$.

\subsubsection{Problem Instances}
A problem instance is defined by the tuple $(N,T,\mu,\Sigma)$, i.e.
the domain size $N\in\mathbb{N}_{>0}$, the number of rounds $T\in\mathbb{N}_{>0}$
and the prior \ref{eq:ps_prior} with mean $\mu\in\mathbb{R}^N$
and covariance $\Sigma\in\mathbb{S}_{+}^{n}$, where we use $\mathbb{S}_{+}^{n}$ to denote the set of positive semidefinite matrices of size $n$.

\subsection{The Continuous Case}
The definitions in the continuous case are analogous. A problem instance here is defined by $(\mathcal{A}, T, \mu, k)$, i.e. the function domain $\mathcal{A}$, the number of rounds $T\in\mathbb{N}_{>0}$, a mean function 
 $\mu:\mathcal{A}\to \mathbb{R}$ and a positive semi-definite kernel $k:\mathcal{A}^2 \to \mathbb{R}$.

\section{Results}
In this section we provide bounds on the Bayesian simple regret 
that hold for the two different optimization algorithms we describe
in the following.

\subsection{Optimization Algorithms}
  Two of the most widely used GP-optimization algorithms are the
 expected improvement (EI) (\cite{jones_1998}) and the upper confidence bound (UCB) (\cite{auer_2002,gpucb}) methods.
 In the following, we define two optimization policies that are closely related:
 \begin{definition}\label{def:ei} The
	expected improvement \cite{bull_2011} is defined as 
	\begin{align}
	\op{ei}(\tau): & =\int_{-\infty}^{\infty}\max\{x-\tau,0\}\mathcal{N}(x)dx =\mathcal{N}(\tau)-\tau\Phi^{c}(\tau)
	\end{align}
	where $\mathcal{N}$ is the standard normal density function and
	$\Phi^{c}$ is the complementary cumulative density function of a
	standard normal distribution. Furthermore, we use the notation 
	\begin{align}
	\op{ei}(\tau|\mu,\sigma) & =\sigma\op{ei}\left(\frac{\tau-\mu}{\sigma}\right)\label{eq:ei_normalization} =\int_{-\infty}^{\infty}\max\{x-\tau,0\}\mathcal{N}(x|\mu,\sigma)dx.
	\end{align}
\end{definition}

\begin{definition}[EI2]\label{def:ei_extremization} An
	agent follows the EI2 strategy when it picks its actions according
	to 
	\begin{align}
		A_{t+1} & =\op{argmax}_{n\in[N]}\max\left\{ \op{ei}\left(\hat{Y}_{t}|M_{t}^{n},\sqrt{C_{t}^{nn}}\right),\op{ei}\left(-\check{Y}_{t}|-M_{t}^{n},\sqrt{C_{t}^{nn}}\right)\right\} 
	\end{align} %
	with the expected improvement $\op{ei}$ as defined in \ref{def:ei}.
\end{definition}

\begin{definition}[UCB2]\label{def:ucb_extremization} An agent follows the UCB2 strategy when it picks its actions according
	to 
	\begin{align}
		A_{t+1} & =\op{argmax}_{n\in[N]}\max\left\{ -\hat{Y}_{t}+M_{t}^{n}+\sqrt{C_{t}^{nn}2\log N},\check{Y}_{t}-M_{t}^{n}+\sqrt{C_{t}^{nn}2\log N}\right\} .
	\end{align}
\end{definition} 
The main difference to the standard versions of these methods (see \ref{def:ei_extremization_standard} and \ref{def:ucb_extremization_standard}) is that the algorithms here are symmetrical in
the sense that they are invariant to flipping the sign of the GP.
They maximize \emph{and} minimize at the same time by picking the
point which we expect to either increase the observed maximum \emph{or}
decrease the observed minimum the most. This symmetry is important
for our proof, whether the same bound also holds for following the
standard, one-sided EI or UCB strategies is an open question, we discuss this point in detail in \ref{sec:relation_to_standard}.

\subsection{Upper Bound on the Bayesian Simple Regret for the Extremization Problem}
Here, we provide a regret bound for function extremization (i.e. the goal
is to find both the minimum and the maximum) from which the regret bound
for function maximization will follow straightforwardly. Note that 
the bound is problem-independent in the sense that it does not depend
on the prior $(\mu,\Sigma)$.

\begin{restatable}{theorem}{maintheorem}\label{theorem:main}
	For any instance
	$(N,T,\mu,\Sigma)$ of the problem defined in \ref{sec:po_problem_statement}
	with $N\ge T\ge500$, if we follow either the EI2 (\ref{def:ei_extremization})
	or the UCB2 (\ref{def:ucb_extremization}) strategy, we have 
	\begin{equation}
	\frac{\mathbb{E}\left[\hat{\check{F}}\right]-\mathbb{E}\left[\hat{\check{Y}}\right]}{\mathbb{E}\left[\hat{\check{F}}\right]}\le1-\left(1-T^{-\frac{1}{2\sqrt{\pi}}}\right)\sqrt{\frac{\log(T)-\log\left(3\log^{\frac{3}{2}}(T)\right)}{\log(N)}}.
	\end{equation}
	This guarantees that the expected difference between the maximum and
	minimum retrieved function value achieves a certain ratio with respect
	to the expected difference between the global maximum and the global
	minimum.
\end{restatable}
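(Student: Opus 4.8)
Since $\mathbb{E}[\hat{\check F}]\ge 0$, the claim is equivalent to the absolute bound $\mathbb{E}[\hat{\check Y}]\ge\bigl(1-T^{-1/(2\sqrt{\pi})}\bigr)\sqrt{(\log T-\log(3\log^{3/2}T))/\log N}\;\mathbb{E}[\hat{\check F}]$, and this is what I would prove. I would obtain it by multiplying together two estimates: (i) a \emph{static} estimate, that already a good reference policy with a budget of $T$ points --- e.g.\ the best fixed set $S$ of $T$ points --- achieves $\mathbb{E}[\max_{n\in S}F_n-\min_{n\in S}F_n]\ge\sqrt{(\log T-\log(3\log^{3/2}T))/\log N}\,\mathbb{E}[\hat{\check F}]$ for \emph{every} Gaussian prior $(\mu,\Sigma)$; and (ii) a \emph{near-optimality} estimate, that running EI2 (\ref{def:ei_extremization}) or UCB2 (\ref{def:ucb_extremization}) attains at least a $1-T^{-1/(2\sqrt\pi)}$ fraction of what that reference policy attains. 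Estimate (i) is a Gaussian comparison statement: the ratio is smallest when the $F_n$ behave like independent variables of a common scale, in which case $\mathbb{E}[\max\text{ of }T]/\mathbb{E}[\max\text{ of }N]\asymp\sqrt{\log T/\log N}$; it follows from standard sub-Gaussian maximal inequalities (an upper bound of order $\bar\sigma\sqrt{2\log N}$ on the excursion of $\max_n F_n$ above its mean, and a matching lower bound for the best $T$ coordinates), with the hypothesis $T\ge 500$ used to fold the lower-order corrections into the stated closed form $\log(T/(3\log^{3/2}T))$.

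For estimate (ii) I would use the amortized / greedy-coverage argument underlying the adaptive-submodularity framework. The first step is a one-step improvement identity: since $Y_{t+1}=F_{A_{t+1}}$ is, conditionally on the history $A_{:t},Y_{:t}$, Gaussian with mean $M_t^{A_{t+1}}$ and variance $C_t^{A_{t+1}A_{t+1}}$, and since $\hat{\check Y}_{t+1}-\hat{\check Y}_t=\max\{Y_{t+1}-\hat Y_t,0\}+\max\{\check Y_t-Y_{t+1},0\}$, the conditional expected increment equals $\op{ei}(\hat Y_t\,|\,M_t^{A_{t+1}},\sqrt{C_t^{A_{t+1}A_{t+1}}})+\op{ei}(-\check Y_t\,|\,-M_t^{A_{t+1}},\sqrt{C_t^{A_{t+1}A_{t+1}}})$ by \ref{def:ei}. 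For EI2 the action is chosen to maximise (the larger of) these two terms, so the conditional expected increment is at least $G_t:=\max_n\max\{\op{ei}(\hat Y_t\,|\,M_t^n,\sqrt{C_t^{nn}}),\op{ei}(-\check Y_t\,|\,-M_t^n,\sqrt{C_t^{nn}})\}$, the best achievable one-step gain over the current posterior; for UCB2 a short comparison --- using $\op{ei}(z)\ge\mathcal N(z)/(z^2+1)$, the Mills bound $\Phi^c(z)\le\mathcal N(z)/z$, and the threshold $\beta=\sqrt{2\log N}$ --- shows the selected point still has expected improvement of the same order as $G_t$. Telescoping gives $\mathbb{E}[\hat{\check Y}]=\sum_{t=1}^{T-1}\mathbb{E}[\hat{\check Y}_{t+1}-\hat{\check Y}_t]\ge\sum_{t=1}^{T-1}\mathbb{E}[G_t]$. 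It then remains to compare $\sum_t G_t$ to the value of the reference policy: one lower-bounds $G_t$ by the expected improvement obtained by adjoining one more point to $t$ essentially-independent Gaussians of the prevailing scale, which decays no faster than order $1/(t\sqrt{\log t})$, and a careful evaluation of the resulting product/sum --- sharper than the crude $\prod(1-a_t)\le e^{-\sum a_t}$, which would only yield a generic $1-1/e$ --- produces exactly the factor $1-T^{-1/(2\sqrt\pi)}$, the exponent $1/(2\sqrt\pi)$ tracing back to the constant $\mathcal N(0)=1/\sqrt{2\pi}$ in the sharp $\op{ei}$ estimate near the threshold. Multiplying (i) and (ii) and noting the prevailing scale cancels yields the claim.

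\textbf{Main obstacle.} The delicate part is reconciling the two halves of estimate (ii) uniformly over all posteriors reachable along a run of the algorithm: the lower bound on $G_t$ must be expressed through the same scale that controls the conditional residual range $\mathbb{E}[\hat{\check F}\,|\,A_{:t},Y_{:t}]-\hat{\check Y}_t$, and one has to exclude pathological posteriors (an unexplored point with a wildly optimistic mean, or with an enormous variance) by exploiting that the greedy rule would already have selected such a point --- so the argument must be carried out along the realized trajectory rather than for an arbitrary posterior. Establishing the static estimate (i) with a clean, non-asymptotic constant in the numerator, and pushing the constant in (ii) all the way to $1-T^{-1/(2\sqrt\pi)}$, are the two places where the bulk of the technical work lies.
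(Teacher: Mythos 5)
Your decomposition of the bound into a product of (i) a static approximation factor $\sqrt{\log T/\log N}$ achieved by a best fixed $T$-subset and (ii) a greedy near-optimality factor $1-T^{-1/(2\sqrt{\pi})}$ relative to that reference policy is not how the paper argues, and both halves contain genuine gaps. The more serious one is (ii): a ``greedy attains a $(1-\varepsilon)$ fraction of a reference policy'' guarantee is exactly what adaptive submodularity would deliver, but the paper proves (Appendix \ref{app:as}) that this problem is \emph{not} adaptively submodular, so no off-the-shelf amortized/coverage argument applies, and you give no substitute. The paper's actual mechanism never compares to a reference policy: it establishes a per-step inequality $\beta\,\mathbb{E}[\hat{\check{F}}]-\mathbb{E}[\hat{\check{Y}}_t]\le\alpha\,(\mathbb{E}[\hat{\check{Y}}_{t+1}]-\mathbb{E}[\hat{\check{Y}}_t])$ with a \emph{shrunken target} $\beta<1$, which telescopes to $(1-e^{-(T-1)/\alpha})\beta\,\mathbb{E}[\hat{\check{F}}]\le\mathbb{E}[\hat{\check{Y}}]$ (\ref{lemma:bo_main_inequality}). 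The per-step inequality is obtained by lower-bounding the one-step gain by $\op{ei}$ at the UCB-maximizing index (your one-step identity here is correct and matches \ref{lemma:bound_rhs}) and upper-bounding the residual potential by a multivariate expected improvement, which is in turn controlled by a union bound giving the $\sqrt{2\log N}+O(1/\log N)$ term (\ref{lemma:bound_nei}). The two factors in the final bound therefore do not have the meaning you assign them: $\sqrt{\log T/\log N}$ arises as $b/a$ from the admissible choice of $\beta$ (with $a=\sqrt{2\log N}$ from the union bound and $b\approx\sqrt{2\log T-\dots}$ chosen so that $\alpha\le-2/\op{ei}'(b)$), and $1-T^{-1/(2\sqrt{\pi})}$ is the resulting $1-e^{-(T-1)/\alpha}$; they are entangled through $\beta$, not a product of two independently provable ratios.

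Estimate (i) is also unproven and far from ``standard'': for an arbitrary correlated Gaussian prior, a uniform lower bound on what the \emph{best fixed} $T$-subset captures, relative to $\mathbb{E}[\hat{\check{F}}]$, is a nontrivial Gaussian comparison/minoration statement (the matching lower bound for the best $T$ coordinates is the hard direction, and adaptivity can genuinely help over any fixed subset, so it is not even clear the factorization through a fixed set can yield the stated constant). The i.i.d.\ computation you invoke only supports the \emph{lower} bound of \ref{lemma:lowerregretbound}, i.e.\ tightness, not the upper bound. Finally, your closing claim that summing per-step gains of order $1/(t\sqrt{\log t})$ ``produces exactly the factor $1-T^{-1/(2\sqrt{\pi})}$'' is asserted rather than derived; in the paper that exponent comes from evaluating $e^{T\op{ei}'(b)/2}$ at the specific $b$ of \ref{eq:def_b} using the Mills-ratio bounds of \ref{lemma:bounds_on_normal_ccdf}, a computation your outline does not reach.
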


\begin{proof} The full proof can be found
	in the supplementary material in \ref{sec:proof}, here we only give an outline. The proof is inspired
	by the adaptive submodularity framework by \cite{golovin11adaptive}.
	The problem at hand can be understood as finding the policy (optimization
	algorithm) which maximizes an expected utility. 
	Finding this optimal policy would require solving a partially observable
	Markov decision process (POMDP), which is intractable in most relevant
	situations. Instead, a common approach is to use a greedy policy which
	maximizes the expected single-step increase
	in utility at each time step $t$, which is in our case 
	\begin{align}
		A_{t+1}=\op{argmax}_{a}\left(\mathbb{E}\left[\hat{Y}_{t+1}-\check{Y}_{t+1}|A_{t+1}=a,A_{:t},Y_{:t}\right]-\mathbb{E}\left[\hat{Y}_{t}-\check{Y}_{t}|A_{:t},Y_{:t}\right]\right).
	\end{align}
	This corresponds to the EI2 (\ref{def:ei_extremization}) strategy.
	The task now is to show that the greedy policy will not perform much
	worse than the optimal policy. \citet{golovin11adaptive} show that
	this holds for problems that are adaptively submodular (among some
	other conditions). In the present problem, we can roughly translate
	this condition to: The progress we make at a given time step has to
	be proportional to how far the current best value is from the global
	optimum. While our problem is not adaptively submodular (see \ref{app:as}), we show that a similar condition holds (which leads to similar guarantees).
\end{proof}

\subsection{Upper and Lower Bound on the Bayesian Simple Regret for the Maximization Problem}

For maximization, the goal is to minimize the Bayesian simple regret,
i.e. the expected difference between the globally maximal function
value and the best value found by the optimization algorithm 
\begin{equation}
\mathbb{E}[\hat{F}]-\mathbb{E}[\hat{Y}].
\end{equation}
As is often done in the literature (see e.g. \cite{gpucb}), we restrict ourselves here to centered GPs (i.e. zero prior mean; naturally, the mean will change during the optimization). 
To be invariant to scaling of the prior distribution, we normalize the regret with the expected global maximum:
\begin{equation}
\text{normreg}:=\frac{\mathbb{E}\left[\hat{F}\right]-\mathbb{E}\left[\hat{Y}\right]}{\mathbb{E}\left[\hat{F}\right]}.
\end{equation}

\subsubsection{Upper Regret Bound for the EI2 and UCB2 Policies}
We obtain the following upper bound on this normalized Bayesian simple regret:

\begin{corollary}\label{coro:maximization} For any instance $(N,T,\mu,\Sigma)$
	of the problem defined in \ref{sec:po_problem_statement} with zero
	mean $\mu_{n}=0~\forall n$ and $N\ge T\ge500$, if we follow
	either the EI2 (\ref{def:ei_extremization}) or the UCB2 (\ref{def:ucb_extremization})
	strategy, we have 
	\begin{equation}
	\op{normreg}\le1-\left(1-T^{-\frac{1}{2\sqrt{\pi}}}\right)\sqrt{\frac{\log(T)-\log\left(3\log^{\frac{3}{2}}(T)\right)}{\log(N)}}.
	\end{equation}
	This implies a bound on the ratio between the expected maximum found
	by the algorithm and the expected global maximum.
	
\end{corollary}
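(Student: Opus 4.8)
The plan is to derive \Cref{coro:maximization} directly from \Cref{theorem:main} by exploiting the symmetry of the centered GP prior. First I would observe that since $\mu=0$, the prior distribution of $F$ is symmetric under the sign flip $F\mapsto -F$, and this symmetry is preserved by the EI2 and UCB2 policies (which are exactly the symmetric/sign-invariant versions of EI and UCB, as emphasized after \Cref{def:ucb_extremization}). Consequently the joint law of $(\hat F,\check F)$ and of $(\hat Y,\check Y)$ is invariant under swapping $\hat\cdot \leftrightarrow -\check\cdot$, which yields $\mathbb{E}[\hat F] = \mathbb{E}[-\check F]$ and $\mathbb{E}[\hat Y] = \mathbb{E}[-\check Y]$. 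Therefore $\mathbb{E}[\hat{\check F}] = \mathbb{E}[\hat F]-\mathbb{E}[\check F] = 2\,\mathbb{E}[\hat F]$ and likewise $\mathbb{E}[\hat{\check Y}] = 2\,\mathbb{E}[\hat Y]$.

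Given these two identities, the normalized extremization regret and the normalized maximization regret coincide exactly:
\begin{equation}
\op{normreg} = \frac{\mathbb{E}[\hat F]-\mathbb{E}[\hat Y]}{\mathbb{E}[\hat F]} = \frac{2\mathbb{E}[\hat F]-2\mathbb{E}[\hat Y]}{2\mathbb{E}[\hat F]} = \frac{\mathbb{E}[\hat{\check F}]-\mathbb{E}[\hat{\check Y}]}{\mathbb{E}[\hat{\check F}]}.
\end{equation}
Then I would simply invoke \Cref{theorem:main}, whose hypotheses ($N\ge T\ge 500$, following EI2 or UCB2) are exactly those assumed in the corollary, to upper-bound the right-hand side by $1-\left(1-T^{-\frac{1}{2\sqrt\pi}}\right)\sqrt{\tfrac{\log(T)-\log(3\log^{3/2}(T))}{\log(N)}}$, which is the claimed bound.

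One technical point to handle carefully is the possibility that $\mathbb{E}[\hat F]=0$, which would make the ratio ill-defined; but for a centered GP with nonzero covariance this cannot happen (the maximum of finitely many jointly Gaussian variables has strictly positive expectation unless all variances vanish, in which case the statement is vacuous), and in any case the quantity $\op{normreg}$ is only meaningful when $\mathbb{E}[\hat F]>0$. The main — and really the only — obstacle is to argue the sign-flip invariance of the EI2 and UCB2 policies rigorously: one must verify that running the policy on the process $-F$ produces, at every round, actions with the same distribution as running it on $F$, using that $\op{ei}(\hat Y_t \mid M_t^n,\sqrt{C_t^{nn}})$ and $\op{ei}(-\check Y_t \mid -M_t^n,\sqrt{C_t^{nn}})$ swap roles under the flip (and analogously the two UCB2 terms). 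This is essentially a bookkeeping argument that mirrors the symmetry discussion already given for \Cref{theorem:main}, so I expect it to be routine rather than deep; the substance of the corollary is entirely carried by \Cref{theorem:main}.
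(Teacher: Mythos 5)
Your proposal is correct and follows essentially the same route as the paper's own proof: use the sign-flip symmetry of the centered prior together with the symmetry of EI2/UCB2 to get $\mathbb{E}[\hat{\check F}]=2\mathbb{E}[\hat F]$ and $\mathbb{E}[\hat{\check Y}]=2\mathbb{E}[\hat Y]$, then substitute into \Cref{theorem:main}. Your additional remarks on the degenerate case $\mathbb{E}[\hat F]=0$ and on verifying the policy's sign-invariance are sensible refinements of the same argument, not a different approach.
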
\begin{proof} The Gaussian prior
	is symmetric about $0$, as the mean is $0$ everywhere (i.e. the probability density satisfies $p(F=f)=p(F=-f)~\forall f$).
	Since also both the EI2 and the UCB2 policies are symmetric, we have $\mathbb{E}\left[\hat{F}\right]=-\mathbb{E}\left[\check{F}\right]$ and hence 
	\begin{equation}
	\mathbb{E}\left[\hat{\check{F}}\right]=\mathbb{E}\left[\hat{F}\right]-\mathbb{E}\left[\check{F}\right]=2\mathbb{E}\left[\hat{F}\right]
	\end{equation}
	and similarly $\mathbb{E}\left[\hat{\check{Y}}\right]=2\mathbb{E}\left[\hat{Y}\right]$.
	Substituting this in \ref{theorem:main}, \ref{coro:maximization}
	follows.\end{proof}

In \ref{fig:regret_plot}, we plot this bound, and we observe that
we obtain nontrivial regret bounds even when evaluating the function
only at a small fraction of its domain. 
\begin{figure}
	\centering
	\begin{minipage}{0.48\textwidth}
		\centering \includegraphics[width=1.0\linewidth]{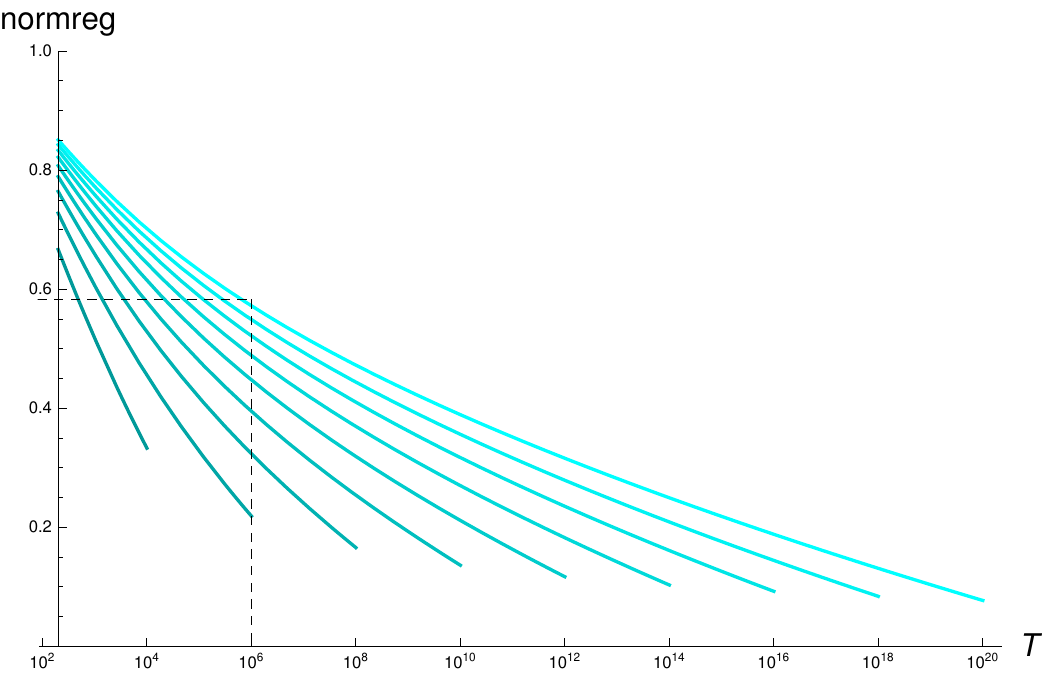}
		\caption{The upper bound on the regret from \ref{coro:maximization} as a function
			of the number of evaluations $T$. Each curve corresponds to a different
			domain size $N$ and is plotted for $T\in\{500,...,N\}$, hence $N$
			can be read from the end point of each curve.}
		\label{fig:regret_plot} 
	\end{minipage}\hfill
	\begin{minipage}{0.48\textwidth}
		\centering \includegraphics[width=1.0\linewidth]{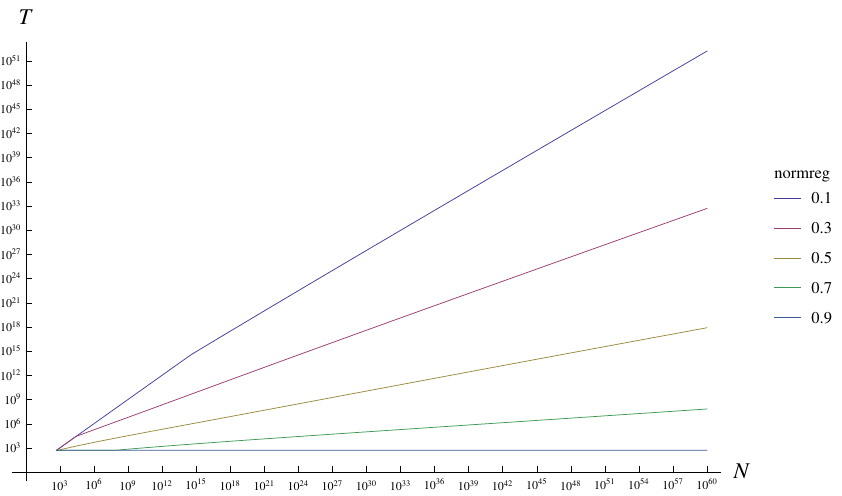}
		\caption{An upper bound on the required number of evaluations $T$ as a function
			of the domain size $N$ (implied by \ref{coro:maximization}). Each
			curve corresponds to a given regret which we want to achieve.}
		\label{fig:required_steps_plot} 
	\end{minipage}
\end{figure}
For instance, if we pick the curve that corresponds to $N=10^{20}$
(the rightmost curve), and we choose $T=10^{6}$, we achieve a regret
of about $0.6$, as indicated in the figure. This means that we can
expect to find a function value of about $40\%$ of the expected global
maximum by evaluating the function at only a fraction of $10^{-14}$
of its domain, and this holds for any prior covariance $\Sigma$.

In \ref{fig:required_steps_plot}, we plot the upper bound on the required
number of evaluations $T$, implied by \ref{coro:maximization}, as
a function of the domain size $N$. 
We observe that $T$ seems to scale polynomially with
$N$ (i.e. linearly in log space) with an order that depends on the
regret we want to achieve. Indeed, it is easy to see from \ref{coro:maximization}
that $\ensuremath{\forall\epsilon>0~\exists K:\forall N\ge T\ge K:}$ 
\begin{equation}
\text{normreg}\le1-\frac{\sqrt{\log T}}{\sqrt{\log N}}+\epsilon \label{eq:asymptotic_regret}
\end{equation}
which implies that $\ensuremath{\forall\epsilon>0~\exists K:\forall N\ge T \ge K:}$
\begin{equation}
T\le N^{(1-\text{normreg}+\epsilon)^{2}}. \label{eq:asymptotic_T}
\end{equation}
This means for instance that, if we accept to achieve only $20\%$
of the global maximum, the required number of evaluations grows very
slowly with about $T\approx N^{1/25}$, but still polynomially. 

\subsubsection{Lower Regret Bound for the Optimal Policy}
The upper bound from \ref{coro:maximization} is tight, as we can see by comparing \ref{eq:asymptotic_regret} to the following result: 
\begin{restatable}[Lower Bound]{lemma}{lowerregretbound}\label{lemma:lowerregretbound} For the instance of the
	problem defined in \ref{sec:po_problem_statement} with $\mu=\mathbf{0}$
	and $\Sigma=\mathbf{I}$, the following lower bound on the regret holds for
	the optimal strategy: $\forall\epsilon>0~\exists K:\forall N\ge T\ge K:$
	\begin{equation}
	\op{normreg}\ge1-\frac{\sqrt{\log T}}{\sqrt{\log N}}-\epsilon.
	\end{equation}
\end{restatable}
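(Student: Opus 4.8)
The plan is to exploit the very special structure of this instance. With $\mu=\mathbf 0$ and $\Sigma=\mathbf I$ the coordinates $F_1,\dots,F_N$ are i.i.d.\ standard normal, and because the observations are noiseless, conditioning on the history $(A_{:t},Y_{:t})=(A_{:t},F_{A_{:t}})$ leaves the posterior over every not-yet-observed coordinate equal to $\mathcal N(0,1)$ and independent of the past. Since $\mathbb E[\hat F]$ does not depend on the policy, minimizing $\op{normreg}$ amounts to maximizing $\mathbb E[\hat Y]$, so it suffices to upper-bound $\mathbb E[\hat Y]$ over all policies, lower-bound $\mathbb E[\hat F]$, and compare with the classical asymptotics of the expected maximum of i.i.d.\ Gaussians. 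Throughout I write $g(k):=\mathbb E[\max\{Z_1,\dots,Z_k\}]$ for $Z_i$ i.i.d.\ $\mathcal N(0,1)$.

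\textbf{Step 1: adaptivity buys nothing.} I would first note that, without loss of generality, an optimal policy never re-queries an arm (a repeat cannot change $\hat Y_T$ and wastes a round). For such a policy I would show, by induction on $t$, that $(F_{A_1},\dots,F_{A_t})$ is a vector of $t$ i.i.d.\ $\mathcal N(0,1)$ variables: $A_{t+1}$ is a (possibly randomized) function of $(A_{:t},F_{A_{:t}})$ and, being a fresh index, $F_{A_{t+1}}$ is independent of that $\sigma$-field and marginally standard normal. Hence $\hat Y=\max_{t\in[T]}F_{A_t}$ has the law of the maximum of $T$ i.i.d.\ standard Gaussians, so $\mathbb E[\hat Y]\le g(T)$ for \emph{every} policy (with equality whenever the queried arms are distinct), and in particular for the optimal one; while $\hat F=\max_{n\in[N]}F_n$ is literally the maximum of $N$ i.i.d.\ standard Gaussians, so $\mathbb E[\hat F]=g(N)$.

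\textbf{Step 2: Gaussian extreme values.} Next I would invoke two standard facts. The moment-generating-function (Jensen) bound gives $g(T)\le\sqrt{2\log T}$. For the lower bound I would use $g(N)=\sqrt{2\log N}\,(1+o(1))$ as $N\to\infty$ --- e.g.\ via concentration of the maximum around the $(1-1/N)$-quantile of $\mathcal N(0,1)$, or a direct second-moment argument --- which yields: for every $\delta>0$ there is $K$ with $g(N)\ge(1-\delta)\sqrt{2\log N}$ whenever $N\ge K$.

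\textbf{Step 3: assembling the bound.} Combining the two steps, for all $N\ge T\ge K$,
\[
\op{normreg}=1-\frac{\mathbb E[\hat Y]}{\mathbb E[\hat F]}\ \ge\ 1-\frac{\sqrt{2\log T}}{(1-\delta)\sqrt{2\log N}}\ =\ 1-\frac{1}{1-\delta}\sqrt{\tfrac{\log T}{\log N}}\ \ge\ 1-\sqrt{\tfrac{\log T}{\log N}}-\tfrac{\delta}{1-\delta},
\]
where the last inequality uses $\sqrt{\log T/\log N}\le1$ (because $T\le N$). Given $\epsilon>0$, choosing $\delta$ with $\delta/(1-\delta)\le\epsilon$ and the matching $K$ gives the claim. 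I expect Step 1 to be the conceptual crux: making precise why, under an independent prior with noiseless feedback, no adaptive strategy can beat simply drawing $T$ fresh i.i.d.\ samples --- this is exactly what collapses the optimization to order statistics --- after which the rest is routine Gaussian extreme-value theory.
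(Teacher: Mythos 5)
Your argument is correct and follows essentially the same route as the paper's proof: reduce to the fact that under the i.i.d.\ prior with noiseless feedback no adaptive policy can beat sampling $T$ distinct arms (so $\mathbb{E}[\hat Y]$ is at most the expected maximum of $T$ i.i.d.\ standard normals), then apply the standard asymptotics $\mathbb{E}[\max_k Z_k]\sim\sqrt{2\log K}$ to both $\hat Y$ and $\hat F$. Your Step~1 merely spells out in more detail the optimality claim that the paper asserts in one line, so there is nothing substantively different to compare.
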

\begin{proof}
Here, sampling without replacement is an optimal strategy, which yields the bound above, see \ref{app:lowerregretbound} for the full proof.
\end{proof}
It follows from \Cref{lemma:lowerregretbound} that $\ensuremath{\forall\epsilon>0~\exists K:\forall N\ge T\ge K:}$
\begin{equation}
T\ge N^{(1-\text{normreg}-\epsilon)^{2}}.
\end{equation}
Hence, for a given regret, the required number of evaluations $T$ grows
polynomially  in the domain size $N$, even for the optimal policy, albeit with low degree. This means
that if the domain size grows exponentially in the problem dimension,
we will inherit this exponential growth also for the necessary number
of evaluations.
However, since our bounds are problem independent and hence worst-case in terms of the prior covariance $\Sigma$, this result does
not exclude the possibility that for certain covariances we might
be able to obtain polynomial scaling of the required
number of evaluations in the dimension of the problem.

\subsubsection{Lower Regret Bound for Prior-Independent Policies}

It is important to note that a uniform random policy will not achieve
the bound from \ref{coro:maximization} in general. In fact, despite the bound from \ref{coro:maximization} being
independent of the prior $(\mu, \Sigma)$, it cannot be achieved by any policy that
is independent of the prior:

\begin{restatable}{lemma}{priordependentpolicies}\label{lemma:priordependentpolicies}
	
	For any optimization policy which does not depend on the prior $(\mu,\Sigma)$,
	there exists an instance of the problem defined in \ref{sec:po_problem_statement}
	where 
	\begin{equation}
	\op{normreg}\ge1-\frac{T}{N}\label{blablalladfa},
	\end{equation}
	which is clearly worse for $T\ll N$ than the bound in \ref{coro:maximization}.
\end{restatable}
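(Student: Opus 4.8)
The plan is to construct a hard instance tailored to the given prior-independent policy. The key observation is that a policy which does not depend on $(\mu,\Sigma)$ must, in particular, commit to a (possibly randomized) sequence of query points before seeing any structure in the prior — or more precisely, its behavior on a prior that is ``flat'' in the relevant sense reduces to blind sampling. So I would take $\Sigma = \mathbf{I}$ (independent arms) and $\mu = \mathbf{0}$; under this prior the observations $Y_{:t}$ carry no information about the unqueried coordinates, so for \emph{any} policy the set of $T$ queried indices $\{A_1,\dots,A_T\}$ is a fixed (or randomized but observation-independent) subset $S\subseteq[N]$ of size at most $T$. This is the same reduction used in the proof of \Cref{lemma:lowerregretbound}, so I can invoke that setup.

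Next I would exploit prior-independence to move to a \emph{spiked} prior that the policy cannot adapt to. Concretely, fix the subset $S$ (of size $\le T$) that the policy would query on the $\mathbf{I}$-prior; since the policy ignores the prior, it queries exactly this same $S$ (or the same distribution over subsets) on a different instance. Now choose the adversarial instance: let $\mu=\mathbf{0}$ and let $\Sigma$ be a rank-one-ish perturbation concentrating essentially all the variance on a single coordinate $i^\star$ chosen outside $S$ — e.g. $\Sigma = \varepsilon \mathbf{I} + L\, e_{i^\star} e_{i^\star}^\top$ with $L$ large. Then $\hat F \approx F_{i^\star}$ with high probability and $\mathbb{E}[\hat F]$ is of order $\sqrt{L}$, while the policy, never querying $i^\star$, obtains $\hat Y = \max_{n\in S} F_n = O(\varepsilon^{1/2}\sqrt{\log T})$, which is negligible compared to $\sqrt L$. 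Hence $\op{normreg} = 1 - \mathbb{E}[\hat Y]/\mathbb{E}[\hat F] \to 1$ as $L\to\infty$ — already this beats the claimed $1 - T/N$. To get exactly the stated form $1-T/N$ one instead spreads the spike over \emph{all} $N$ coordinates symmetrically (so that no fixed $S$ is favored): take $\Sigma$ with a common large correlated component plus tiny independent noise, or better, use a discrete mixture prior that places a large value on a uniformly random single coordinate. Averaging over which coordinate is spiked, the probability that the policy's fixed set $S$ contains the good coordinate is $|S|/N \le T/N$; conditioned on a miss, $\hat Y$ is negligible relative to $\hat F$, giving $\mathbb{E}[\hat Y]/\mathbb{E}[\hat F] \le T/N + o(1)$ and hence $\op{normreg} \ge 1 - T/N$ on at least one instance in the family.

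For the averaging argument I would use the standard ``Yao-type'' step: put a uniform distribution over the $N$ rotations of the spiked instance; the policy is deterministic-in-the-prior (it ignores the prior) but may be randomized in its queries, so by Fubini the expected normalized regret over the instance-distribution equals the average over query-randomness of the per-instance average, and in either order one extracts a single instance on which $\op{normreg}\ge 1-T/N$. The final bookkeeping is to check the prior is a valid problem instance — $\mu\in\mathbb{R}^N$, $\Sigma\in\mathbb{S}_+^n$ — which holds since any convex combination of PSD matrices (or any $\varepsilon\mathbf I + $ PSD correction) is PSD, and to confirm the $o(1)$ slack can be absorbed, e.g. by sending the noise level $\varepsilon\to 0$ so that ``miss'' gives $\hat Y\to$ a constant while $\hat F\to$ the spike value, making the inequality exact rather than asymptotic.

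The main obstacle is making the spiked instance simultaneously (i) a legitimate centered Gaussian prior, (ii) symmetric enough that \emph{no} fixed query set $S$ of size $T$ does better than a $T/N$ fraction, and (iii) extremal enough that a ``miss'' drives the ratio $\mathbb{E}[\hat Y]/\mathbb{E}[\hat F]$ essentially to zero. A clean Gaussian construction that achieves all three at once — rather than just asymptotically — requires some care with the covariance eigenstructure; the slickest route is probably to let $\Sigma$ have one huge eigenvalue along a \emph{uniformly random standard basis direction}, but since the prior must be a single fixed $\Sigma$, one realizes this randomness by making $\Sigma$ itself block-symmetric under coordinate permutations (e.g. $\Sigma = a\mathbf I + b\,\mathbf{1}\mathbf{1}^\top$ won't spike a single coordinate, so instead one uses a non-Gaussian mixture or accepts the asymptotic $-\epsilon$ in the statement, which is in fact how \Cref{lemma:lowerregretbound} is phrased anyway). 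I expect the write-up to mirror \Cref{lemma:lowerregretbound}'s proof in \Cref{app:lowerregretbound}, adding only the permutation-symmetrization and the miss-probability bound $|S|/N\le T/N$.
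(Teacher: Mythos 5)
Your final construction is essentially the paper's: spike the variance of a single coordinate that the policy is unlikely to query, keep everything else (near-)degenerate, and average/pigeonhole over which coordinate is spiked. But two steps in your route are shaky, and the paper's version shows how to avoid both. First, your claim that under $\Sigma=\mathbf{I}$ a prior-independent policy's query set is ``fixed or observation-independent'' is false: the policy may condition on the realized values $Y_{:t}$, and since those realizations differ in distribution across your instances ($\mathbf{I}$ versus $\varepsilon\mathbf{I}+L e_{i^\star}e_{i^\star}^\top$), you cannot transfer the query distribution from one instance to the other. This is exactly the coupling you need for the miss-probability bound $|S|/N\le T/N$, and it does not come for free. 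Second, because of this you are forced into $\varepsilon\to 0$, $L\to\infty$ asymptotics and an $o(1)$ slack, whereas the lemma is stated exactly.

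The paper resolves both issues by starting from the fully degenerate prior $\mu=\mathbf{0}$, $\Sigma=\mathbf{0}$: there the observations are deterministically zero, so the policy (up to its own internal randomness) induces a well-defined distribution over query sets, and by pigeonhole some coordinate $K$ is queried with probability at most $T/N$. Perturbing only $\Sigma_{KK}$ to a nonzero value leaves every observation equal to zero until the first time $K$ is queried, so the trajectory — and hence $\Pr[K\text{ queried}]$ — is literally unchanged, and the event $\{K\text{ queried}\}$ is independent of $F_K$. Then $\hat{Y}=\hat{F}$ on that event and $\hat{Y}=0$ otherwise, giving $\mathbb{E}[\hat{Y}]\le\mathbb{E}[\hat{F}]\,T/N$ exactly, with no symmetrization over coordinates, no mixture prior, and no limit. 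Your sketch converges to this at $\varepsilon=0$; committing to that endpoint from the outset removes the gap.
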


\begin{proof}
	Suppose we construct a function that is zero everywhere except in one location. Since the policy has no knowledge of that location, it is possible to place it such that the policy will perform no better than random selection, which yields the regret above. See \ref{app:priordependentpolicies} for the full proof.
\end{proof}

\subsection{Extension to Continuous Domains}

For finite domains, we looked at the setting where the cardinality
of the domain $N$ is much larger than the number of admissible evaluations
$T$. The notion of domain size is less obvious in the continuous
case, in the following we clarify this point before we discuss the results.

\subsubsection{Problem Setting}

In the continuous setting, we characterize the GP by $L_{k}$ and
$\sigma$, which are properties of the kernel $k$: 
\begin{align}
	|k(x,x)-k(x,y)|&\le L_{k}\left\Vert x-y\right\Vert _{\infty}\quad\forall x,y\in\mathcal{A}\\
k(x,x) & \le\sigma^{2}\quad\forall x\in\mathcal{A}
\end{align}%
where $\mathcal{A}$ is the $D$-dimensional unit cube (note that
to use a domain other than the unit cube we can simply rescale). The
setting we are interested in is 
\begin{equation}
T\ll\left(\frac{L_{k}}{2\sigma^{2}}\right)^{D}=:m(L_{k},\sigma,D).
\end{equation}
As we discuss in more detail in \ref{sec:cont_setting}, $m(L_{k},\sigma,D)$ corresponds to the number of points 
we would require to cover the domain such that we could acquire nonzero 
information about any point in the domain. Naturally, to guarantee that 
we can find the global optimum we would require at least that number of evaluations $T$.
Here, in contrast, we consider the setting where $T$ is much smaller and only a small fraction
of the GP can be explored.

\subsubsection{Results}
Here, we adapt \ref{coro:maximization} to continuous domains. The
bound we propose in the following is based on a result from \citep{Grunewalder2010-uu}, which states that for a centered Gaussian Process $\left(G_{a}\right){}_{a\in\mathcal{A}}$
with domain $\mathcal{A}$ being the $D$-dimensional unit cube and a
Lipschitz-continuous kernel $k$ 
\begin{equation}
\left|k(x,x)-k(x,y)\right|\le L_{k}\left\Vert x-y\right\Vert _{\infty}\quad\forall x,y\in\mathcal{A},
\end{equation}
the regret of the optimal policy is bounded by 
\begin{align}
\mathbb{E}\left[\sup_{a\in\mathcal{A}}G(a)-\hat{Y}\right] & \le\sqrt{\frac{2L_{k}}{\left\lfloor T^{1/D}\right\rfloor }}\left(2\sqrt{\log\left(2T\right)}+15\sqrt{D}\right).\label{eq:grunewalder}
\end{align}
Note that \citet{Grunewalder2010-uu} state the bound for the more
general case of Hölder-continuous functions, for simplicity of exposition
we limit ourselves here to the case of Lipschitz-continuous kernels.
\citet{Grunewalder2010-uu} complement this bound with a matching
lower bound (up to log factors). However, as we shall see, we can improve
substantially on this bound in terms of its dependence on the Lipschitz
constant $L_{k}$ if we assume that the variance is bounded, i.e., $k(x,x)\le\sigma^{2}$.
This is particularly relevant for GPs with short length scales (or,
equivalently, large domains) and hence large $L_{k}$.

Interestingly, \citet{Grunewalder2010-uu} obtain \ref{eq:grunewalder} using a policy that selects
the actions a priori (by placing them on a grid), without any feedback
from the observations made. Here, we will refine \ref{eq:grunewalder}
by using this strategy for preselecting
a large set of admissible actions offline and then selecting actions from this
set using EI2 (\ref{def:ei_extremization}) or UCB2 (\ref{def:ucb_extremization})
online. A reasoning along these lines yields the following bound:

\begin{restatable}{theorem}{continuoustheorem}\label{thm:continuous}

For any centered Gaussian Process $\left(G_{a}\right){}_{a\in\mathcal{A}}$,
where $\mathcal{A}$ is the $D$-dimensional unit cube, with kernel
$k$ such that 
\begin{align}
\left|k(x,x)-k(x,y)\right| & \le L_{k}\left\Vert x-y\right\Vert _{\infty}\quad\forall x,y\in\mathcal{A}\\
\sqrt{k(x,x)} & \le\sigma\quad\forall x\in\mathcal{A}
\end{align}
we obtain the following bound on the regret, if we follow the
EI2 (\ref{def:ei_extremization}) or the UCB2 (\ref{def:ucb_extremization})
strategy: 
\begin{align}
  &\mathbb{E}\left[\sup_{a\in\mathcal{A}}G(a)-\hat{Y}\right]\le\nonumber \sqrt{\frac{2\log(L_{k})}{T^{1/D}}}\left(2\sqrt{\log\left(2\left\lceil \frac{L_{k}}{\log(L_{k})}T^{1/D}\right\rceil ^{D}\right)}+15\sqrt{D}\right)+\nonumber \\
 & \quad\quad\quad\quad\quad\sqrt{2}\sigma\left(\sqrt{D\log\left(\left\lceil \frac{L_{k}}{\log(L_{k})}T^{1/D}\right\rceil \right)}-\left(1-T^{-\frac{1}{2\sqrt{\pi}}}\right)\sqrt{\log\left(\frac{T}{3\log^{\frac{3}{2}}(T)}\right)}\right).\label{eq:continuous_bound}
\end{align}
This bound also holds when restricting EI2 or UCB2 to a uniform grid
on the domain $\mathcal{A}$, where each side is divided into $\left\lceil \frac{L_{k}}{\log(L_{k})}T^{1/D}\right\rceil $
segments. Finally, this bound converges to $0$ as $T\to\infty$.

\end{restatable}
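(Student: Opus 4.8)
The plan is to reduce the continuous problem to the finite case by discretizing the domain onto a grid fine enough that the best grid point is close (in expectation) to the true supremum, and then invoke \Cref{coro:maximization} on that grid. The two error terms in \eqref{eq:continuous_bound} correspond exactly to the two sources of suboptimality: (i) the discretization error — the gap between $\sup_{a\in\mathcal{A}}G(a)$ and the maximum of $G$ over the grid — and (ii) the optimization error incurred by EI2/UCB2 when run on the finite grid of size $N = \lceil \frac{L_k}{\log(L_k)}T^{1/D}\rceil^D$.

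\textbf{Step 1: choose the grid.} Divide each side of the unit cube into $m := \lceil \frac{L_k}{\log(L_k)} T^{1/D}\rceil$ equal segments, giving a grid $\mathcal{A}_{\text{grid}}$ of $N = m^D$ points with mesh size $1/m$ in $\|\cdot\|_\infty$. This $N$ is chosen so that, with $T$ evaluations, the ratio $T/N$ is controlled and the hypotheses $N \ge T \ge 500$ of \Cref{coro:maximization} will hold (one should check the edge cases on $L_k$ and $T$, and note $\log N = D\log m$).

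\textbf{Step 2: bound the discretization error.} Here I would re-use the argument of \citet{Grunewalder2010-uu} that produced \eqref{eq:grunewalder}: their bound is obtained precisely by a deterministic grid policy, so the same chaining/Dudley-type estimate gives
\[
\mathbb{E}\Big[\sup_{a\in\mathcal{A}}G(a) - \max_{a\in\mathcal{A}_{\text{grid}}}G(a)\Big] \le \sqrt{\frac{2L_k}{m}}\Big(2\sqrt{\log(2m^D)} + 15\sqrt{D}\Big),
\]
using the Lipschitz bound on $k$ to control the canonical metric on each cell and the mesh size $1/m$. Substituting $m = \lceil \frac{L_k}{\log(L_k)}T^{1/D}\rceil \ge \frac{L_k}{\log(L_k)}T^{1/D}$ makes $\frac{2L_k}{m} \le \frac{2\log(L_k)}{T^{1/D}}$, which yields the first line of \eqref{eq:continuous_bound}. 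This is where the improvement over \eqref{eq:grunewalder} in the dependence on $L_k$ comes from: we spend the extra $L_k$ factor on a much finer grid rather than on the final regret.

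\textbf{Step 3: bound the optimization error on the grid and combine.} Restricted to $\mathcal{A}_{\text{grid}}$, the process is a centered Gaussian vector, so \Cref{coro:maximization} applies with this $N$ and the given $T$:
\[
\mathbb{E}\Big[\max_{a\in\mathcal{A}_{\text{grid}}}G(a)\Big] - \mathbb{E}[\hat{Y}] \le \mathbb{E}\Big[\max_{a\in\mathcal{A}_{\text{grid}}}G(a)\Big]\left(1 - \big(1-T^{-\frac{1}{2\sqrt\pi}}\big)\sqrt{\tfrac{\log T - \log(3\log^{3/2}T)}{\log N}}\right).
\]
To turn the multiplicative bound into the additive second line of \eqref{eq:continuous_bound}, bound $\mathbb{E}[\max_{a\in\mathcal{A}_{\text{grid}}}G(a)]$ above by the standard Gaussian maximal inequality $\sqrt{2\sigma^2\log N} = \sqrt{2}\,\sigma\sqrt{D\log m}$ (valid since $k(x,x)\le\sigma^2$), and note $\sqrt{\log N} = \sqrt{D\log m}$ cancels in the product of that bound with the square-root factor, leaving exactly $\sqrt{2}\sigma\big(\sqrt{D\log m} - (1-T^{-1/(2\sqrt\pi)})\sqrt{\log(T/(3\log^{3/2}T))}\big)$. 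Adding the Step 2 bound to this gives \eqref{eq:continuous_bound}, and since the whole construction only ever evaluates points of $\mathcal{A}_{\text{grid}}$, the final "also holds when restricting to a uniform grid" claim is immediate.

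\textbf{Main obstacle.} The delicate part is Step 2 and the bookkeeping around it: I must make sure the \citet{Grunewalder2010-uu} estimate is stated for the discretization gap $\sup_{\mathcal{A}} - \max_{\text{grid}}$ rather than for the full regret (it is, since their policy is the grid policy), that the Lipschitz-in-$k$ assumption translates correctly into a bound on $\mathbb{E}[(G(x)-G(y))^2] \le L_k\|x-y\|_\infty$ on each cell, and that the monotonicity/non-negativity needed to split $\mathbb{E}[\sup_\mathcal{A} G - \hat Y]$ into the two pieces is legitimate — in particular that $\mathbb{E}[\max_{\text{grid}} G] \ge \mathbb{E}[\hat Y]$ and $\mathbb{E}[\max_{\text{grid}}G]\ge 0$, so that replacing it by its upper bound $\sqrt{2}\sigma\sqrt{D\log m}$ in the multiplicative regret bound is valid. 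Handling the ceilings and the regime restrictions ($L_k$ large enough that $\log L_k > 0$ and $N\ge T\ge 500$) is routine but must be done carefully for the statement to be literally correct.
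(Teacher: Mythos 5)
Your proposal follows essentially the same route as the paper: the same decomposition $\mathbb{E}[\sup_a G(a)-\hat Y]\le \mathbb{E}[\sup_a G(a)-\max_{\text{grid}}G]+\mathbb{E}[\max_{\text{grid}}G-\hat Y]$, the same use of the \citet{Grunewalder2010-uu} grid bound for the first term, the same application of \Cref{coro:maximization} combined with the Gaussian maximal inequality $\mathbb{E}[\max_{\text{grid}}G]\le\sigma\sqrt{2\log N}$ for the second, and the same choice $N=\lceil \frac{L_k}{\log(L_k)}T^{1/D}\rceil^{D}$.

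The one piece you leave out: you establish the bound for EI2/UCB2 \emph{restricted to the grid} and call the grid-restricted claim "immediate," but the theorem's primary statement is for EI2/UCB2 run on the full continuous domain $\mathcal{A}$, which requires a short additional argument. The paper closes this by observing that \Cref{coro:maximization} ultimately rests on lower-bounding the expected one-step increment of the observed maximum (cf. \ref{eq:bo_main_inequality}), and allowing the policy to select from a superset of the grid can only increase that increment, so the same bound carries over. Without some such monotonicity argument, the unrestricted case does not follow from the grid case alone.
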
 
\begin{proof}
The idea here is to pre-select a set of $N$ points at locations $X_{1:N}$
on a grid and then sub-select points from this set during runtime using EI2 (\ref{def:ei_extremization}) or UCB2 (\ref{def:ucb_extremization}).
We bound the regret of this strategy by combining \ref{theorem:main} with the main result from \citep{Grunewalder2010-uu},
the full proof can be found in \ref{app:continuous}.
\end{proof}
The important point to note here is that in \ref{thm:continuous}
the bound grows logarithmically in $L_{k}$, as opposed to the bound
\ref{eq:grunewalder} from \cite{Grunewalder2010-uu}, which grows
with $\sqrt{L_{k}}$. This means that for Gaussian Processes with
high $L_{k}$, i.e. high variability, \ref{eq:continuous_bound} is much lower than \ref{eq:grunewalder} (note that cuboid domains $\mathcal{A}$ can be rescaled to the unit cube by adapting the Lipschitz constant $L_k$ accordingly, hence a large domain is equivalent to a large Lipschitz constant). This allows for meaningful bounds even when the number of allowed evaluations is small relative to the domain-size of the function.
We illustrate this in \ref{fig:continuous}.
Consider for instance the case of $L_{k}=10^{5},\sigma=1,D=2,T=10^{5}$,
where the number of evaluations is far too low to explore the GP ($T=10^{5}\ll m(L_{k},\sigma,D)=2.5\times10^{9}$).
We see from \ref{fig:continuous_D} that our regret bounds (second-darkest
cyan) remain low while the ones from \cite{Grunewalder2010-uu} (second-darkest
magenta) explode.
\begin{figure}
	\centering
	\begin{subfigure}{.45\textwidth}
	  \centering
	  \includegraphics[width=\linewidth]{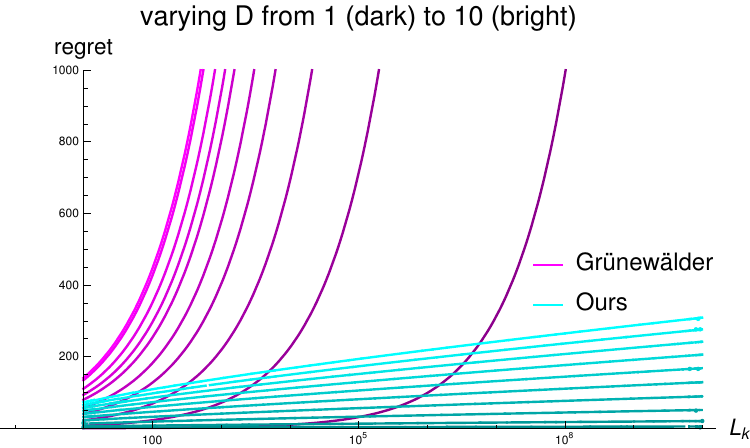}
	  \caption{Different shades correspond to different dimensions $D\in\{1,...,10\}$ from dark to bright, the other parameters are:\\ $\alpha =1, T=10^5, \sigma=1$.}
	  \label{fig:continuous_D}
	\end{subfigure}%
	\hspace{0.5cm}
	\begin{subfigure}{.45\textwidth}
	  \centering
	  \includegraphics[width=\linewidth]{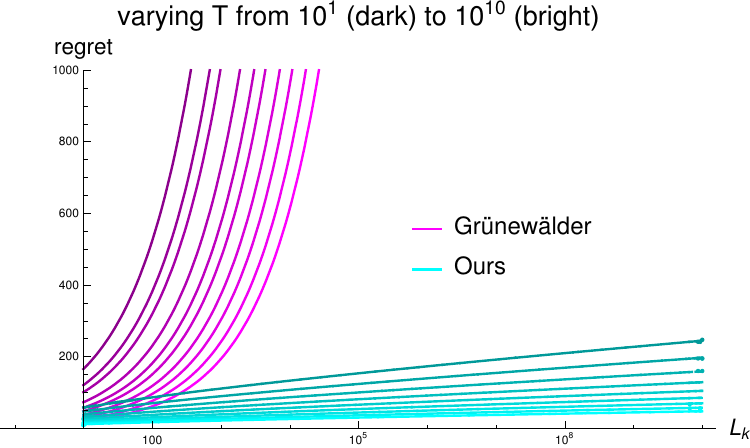}
	  \caption{Different shades correspond to different numbers of evaluations $T\in\{10^1,10^2,...,10^{10}\}$ from dark to bright, the other parameters are: \\$\alpha =1, D=5, \sigma=1$.}
	  \label{fig:continuous_T}
	\end{subfigure}
	\caption{The regret as a function of the Lipschitz constant $L_k$. Comparison of the bound from \cite{Grunewalder2010-uu} (magenta) and ours \ref{eq:continuous_bound} (cyan).}
	\label{fig:continuous}
\end{figure}
\ref{thm:continuous} also provides another insight: To allow for straightforward optimization of the acquisition function (e.g. EI, UCB), the domain is often discretized in practical Bayesian optimization. \ref{thm:continuous} tells us how fine this discretization should be to still achieve performance guarantees.


\section{Relation to Standard EI and UCB} \label{sec:relation_to_standard}

An empirical comparison (\ref{sec:experiments}) indicates EI2/UCB2
require more evaluations than EI/UCB to attain a given regret, but
not more than twice as many. This matches our intuition: We would
expect standard EI/UCB to perform better because i) any given step,
evaluating at a potential maximizer instead of a minimizer will clearly
lead to a larger immediate reduction in expected regret and ii) we
would not expect an evaluation at a potential minimizer to provide
any more useful information than an evaluation at a potential maximizer.
Further, we would not expect EI2/UCB2 to perform much worse because
a substantial fraction of its evaluations (in expectation half, for
a centered GP) will be maximizations.

If we could prove that EI/UCB performs better than EI2/UCB2 for maximization,
this would imply that the regret bounds presented above apply to EI/UCB. Unfortunately, proving
this formally appears to be nontrivial. Nevertheless, we are able
to give weaker regret bounds for standard EI/UCB which we discuss
in the following. 

\subsection{Upper Bound on the Bayesian Simple Regret for Standard EI and UCB}

Let us now consider the standard, one-sided versions of EI (\ref{def:ei_extremization_standard}) and UCB (\ref{def:ucb_extremization_standard}), which are identical to EI2 (\ref{def:ei_extremization}) and UCB2 (\ref{def:ucb_extremization}) except that we drop the second term in the $\max$. We obtain the following version of \ref{theorem:main}:

\begin{restatable}{theorem}{maintheoremstandard}\label{theorem:main_standard}For any instance
	$(N,T,\mu,\Sigma)$ of the problem defined in \ref{sec:po_problem_statement} with $N\ge T\ge500$, if
we follow either the $\text{\textbf{EI}}$ (\ref{def:ei_extremization_standard}) or the $\text{\textbf{UCB strategy}}$ (\ref{def:ucb_extremization_standard})
we have 
\begin{equation}
\frac{\mathbb{E}\left[\hat{\check{F}}\right]-\mathbb{E}\left[\boldsymbol{\hat{Y}-\check{F}}\right]}{\mathbb{E}\left[\hat{\check{F}}\right]}\le1-\left(1-T^{-\frac{1}{2\sqrt{\pi}}}\right)\sqrt{\frac{\log(T)-\log\left(3\log^{\frac{3}{2}}(T)\right)}{\log(N)}}.
\end{equation}

\end{restatable} 

\begin{proof} The proof is very similar to the one of \ref{theorem:main}
and can be found in \ref{sec:proof_standard}. \end{proof}

Note that we marked the changes in bold. Now, instead of a guarantee
on $\hat{\check{Y}}$, we provide a guarantee on $\hat{Y}-\check{F}$,
i.e. the difference between the best obtained value and the function
minimum. We can then derive from that a version of \ref{coro:maximization}:

\begin{corollary}\label{coro:maximization_standard}

	For any instance $(N,T,\mu,\Sigma)$
	of the problem defined in \ref{sec:po_problem_statement} with zero
	mean $\mu_{n}=0~\forall n$ and $N\ge T\ge500$, if we follow either $\text{\textbf{standard EI}}$
(\ref{def:ei_extremization_standard}) or $\text{\textbf{UCB strategy}}$
(\ref{def:ucb_extremization_standard}), we have 
\begin{equation}
normreg\le\boldsymbol{2}\left(1-\left(1-T^{-\frac{1}{2\sqrt{\pi}}}\right)\sqrt{\frac{\log(T)-\log\left(3\log^{\frac{3}{2}}(T)\right)}{\log(N)}}\right).
\end{equation}

\end{corollary}

\begin{proof} The proof is analogous to the one of \ref{coro:maximization}.

\end{proof}

The important thing to note here is the appearance of the factor 2
in the bound. This means that asymptotically we have 
\begin{equation}
\text{normreg}\le\boldsymbol{2}\left(1-\frac{\sqrt{\log T}}{\sqrt{\log N}}\right)+\epsilon
\end{equation}
and 
\begin{equation}
T\le N^{(1-\text{normreg}/\boldsymbol{2}+\epsilon)}
\end{equation}
which is weaker compared to \ref{eq:asymptotic_regret} and \ref{eq:asymptotic_T}.
We believe that this gap is not due to EI/UCB actually performing
worse than EI2/UCB2, but rather an artifact of the proof.

\section{Limitations}\label{sec:limitations}
While we believe that the results above are insightful, there are a number of limitations one should be aware of:

As discussed in \ref{sec:relation_to_standard}, the regret bounds we derive for standard EI and UCB are weaker than the ones for EI2 and UCB2, despite the intuition and empirical evidence that EI/UCB most likely perform no worse for maximization than EI2/UCB2. It would be interesting to close this gap.

Another limitation is that the bounds only hold for the noise-free setting. We believe that this limitation is acceptable because the problem of noisy observations is mostly orthogonal to the problem studied herein. Furthermore, the naive solution of reducing the noise by evaluating multiple times at each point leads to qualitatively similar regret bounds, see \ref{sec:observation_noise} for a more detailed discussion.

Further, the bounds from \ref{coro:maximization}, \ref{thm:continuous}, and \ref{coro:maximization_standard} only hold for zero prior mean (equivalently, constant prior mean). This assumption is not uncommon in the GP optimization literature (see e.g. \cite{gpucb}) but it may be limiting if one has prior knowledge about where good function values lie. It is likely possible to extend the results in this article to arbitrary prior means.

Finally, there are limitations that hold generally for the GP optimization literature: 1) In a naive implementation, the computational cost is cubic in the number of evaluations $T$ and 2) the assumption that the true function is drawn from a Gaussian Process is typically not realistic and only made for analytical convenience. It is hence not clear whether the relations we uncovered herein apply to realistic optimization settings or if they are mostly an artifact of the GP assumption.

Summarizing, it is clear that the results in the present paper have little direct practical relevance. Instead, the intention is to develop a theoretical understanding of the problem setting.

\section{Conclusion}

We have characterized GP optimization in the setting where finding the global optimum is impossible because the number of evaluations is too small with respect to the domain size. We derived regret-bounds for the finite-arm setting which are independent of the prior covariance, and we showed that they are tight. Further, we derived regret-bounds for GP optimization in continuous domains that depend on the Lipschitz constant of the covariance function and the maximum variance.
In contrast to previous work, our bounds are non-vacuous even when the domain size is very large relative to the number of evaluations.
Therefore, they provide novel insights into the performance of GP optimization in this challenging setting.
In particular, they show that even when the number of evaluations is far too small to find the global optimum, we can find nontrivial function values (e.g. values that achieve a certain ratio with the optimal value).

\newpage

\bibliographystyle{plainnat}

\bibliography{references}

\newpage

\appendix

\section{Relation to Adaptive Submodularity}\label{app:as}

Submodularity is a property of set functions with far-reaching implications.
Most importantly here, it allows for efficient approximate optimization
\cite{nemhauser1978}, given the additional condition of monotonicity.
This fact has been exploited in many information gathering applications,
see \cite{krause14survey} for an overview.

In \cite{golovin11adaptive}, the authors extend the notion of submodularity
to adaptive problems, where decisions are based on information acquired
online. This is precisely the setting we consider in the present paper.
However, as we will show shortly, our function maximization problem
is not submodular. Nevertheless, our proof is inspired by the notion
of adaptive sumodularity.

Consider the following definitions, which we adapted to the notation
in the present paper: 

\begin{definition}(\citet{golovin11adaptive}) The Conditional Expected
	Marginal Benefit with respect to some utility function $u$ is defined
	as 
	\begin{equation}
	\Delta_{u}(a|a_{1:t},y_{1:t})=\mathbb{E}\left[u(F,A_{1:t+1})-u(F,A_{1:t})|Y_{1:t}=y_{1:t},A_{1:t}=a_{1:t},A_{t+1}=a\right].
	\end{equation}
\end{definition}

\begin{definition}\label{def:adaptive_submodular}(\citet{golovin11adaptive})
	Adaptive submodularity holds if for any $t\le k\in\mathbb{N}$, any
	$a_{1:k},y_{1:k}$ and any $a$ we have
	\begin{equation}
	\Delta_{u}(a|a_{1:t},y_{1:t})\ge\Delta_{u}(a|a_{1:k},y_{1:k}).
	\end{equation}
\end{definition} Intuitively, in an adaptively submodular problem the expected
benefit of any given action $a$ decreases the more information we
gather. \citet{golovin11adaptive} show that if a problem is adaptively
submodular (along with some other condition), then the greedy policy
will converge exponentially to the optimal policy. 

\subsection{Gaussian-Process Optimization is not Adaptively Submodular}

In the following we make a simple argument why GP optimization is
not generally adaptively submodular. It is not entirely clear what
is the right utility function $u$, but our argument holds for any
plausible choice. 

Consider a function $F_{1:N}$ with all values mutually independent,
except for $F_{1}$ and $F_{2}$ which are negatively correlated.
Further, suppose that we made an observation $y_{1}$ which is far
larger than the upper confidence bounds on $F_{1}$ and $F_{2}$.
Any reasonable choice of utility function would yield an extremely
small conditional expected marginal benefit for $A_{2}=2$, since
we would not expect this to give us any information about the optimum.
Now suppose we evaluate the function at $A_{2}=1$ and observe a $y_{2}$
such that the posterior mean of $F_{2}$ is approximately equal to
$y_{1}$. Now, the conditional expected marginal benefit of evaluating
at $A_{3}=2$ should be substantial for any reasonable utility, since
the maximum might lie at that point. 
More generally, through unlikely observations the GP landscape can change completely and points which seemed uninteresting before can become interesting, which violates the diminishing-returns property of adaptive submodularity \ref{def:adaptive_submodular}.

\section{Relation to GP-UCB}\label{sec:gpucb}

The bounds from \cite{gpucb} are only
meaningful if we are allowed to make a sufficient number of evaluations to attain low uncertainty
over the entire GP. 
The reason is that these bounds
depend on a term called the information gain $\gamma_T$,
which represents the maximum information that can be acquired about the GP with $T$ 
evaluations. As long as the GP still has large uncertainty in some areas,
each additional evaluation may add a substantial amount of information (there is no saturation) and $\gamma_T$, and hence the cumulative regret, will keep growing.

To see this, consider Lemma 5.3 in \cite{gpucb} (we use a slightly different notation here):
The information gain of a set of points $X={x_{1},...,x_{T}}$
can be expressed as
\begin{equation}
G(X):=I(y_{X};f_{X})=\frac{1}{2}\sum_{t=1}^{T}\log(1+\sigma_{y}^{-2}\sigma^{2}(x_{t}|x_{1:t-1}))
\end{equation}
where $\sigma^{2}(x_{t}|x_{1:t-1})$ is the predictive variance after
evaluating at $x_{1:t-1}$ and $\sigma_{y}^{2}$ is the variance of
the observation noise\footnote{Note that the information gain goes to infinity as the observation
noise $\sigma_{y}$ goes to zero, which is in fact another reason
why the results from \cite{gpucb} are not directly applicable to
our setting. However, this is a technicality that can be resolved (in the most naive way, one could add artificial noise).}. Hence, we can write the information
gain for $T+1$ points as
\begin{equation}
G(X\cup\{x_{T+1}\})=G(X)+\frac{1}{2}\log(1+\sigma_{y}^{-2}\sigma^{2}(x_{T+1}|x_{1:T})). \label{eq:info_gain_sum}
\end{equation}
Now let $X^{*}:=\max_{X:|X|=T}G(X)$ be the points that maximize the
information gain. By definition (see equation 7 in \cite{gpucb}),
we have
\begin{align}
\gamma_{T} & :=G(X^{*})
\end{align}
that is, $\gamma_{T}$ is the maximum information that can be acquired
using $T$ points. For $T+1$ points we have
\begin{align}
\gamma_{T+1} & =\max_{X,x_{T+1}}G(X\cup\{x_{T+1}\})\\
 & \ge\max_{x_{T+1}}G(X^{*}\cup\{x_{T+1}\})
\end{align}
where the inequality follows from the fact that maximizing over $X,x_{T+1}$
jointly will at least yield as high a value as just picking $X^{*}$
from the previous optimization and optimizing only over $x_{T+1}$.
Plugging in \ref{eq:info_gain_sum}, we have
\begin{equation}
\gamma_{T+1}\ge G(X^{*})+\frac{1}{2}\max_{x_{T+1}}\log(1+\sigma_{y}^{-2}\sigma^{2}(x_{T+1}|x_{1:T}^{*}))
\end{equation}
and hence 
\begin{equation}
\gamma_{T+1}\ge\gamma_{T}+\frac{1}{2}\max_{x_{T+1}}\log(1+\sigma_{y}^{-2}\sigma^{2}(x_{T+1}|x_{1:T}^{*})).
\end{equation}
This means that if $T$ is not large enough to explore the GP reasonably
well everywhere (i.e., there are still $x$ such that $\sigma^{2}(x|x_{1:T}^{*})$
is large), then adding an observation can add substantial
information, i.e. $\gamma_{T+1}$ is substantially larger than $\gamma_{T}$
(which means the regret grows substantially).

As a more concrete case, suppose we have a GP which a priori has a
uniform variance $\sigma^{2}(x)=s^{2}\quad\forall x$. In addition,
suppose that the GP domain is large with respect to $T$, in the sense
that it is not possible to reduce the variance everywhere substantially
by observing $T$ (or less) points, i.e. we have $\max_{x_{t}}\sigma(x_{t}|x_{1:t-1})\approx s~\forall x_{1:t-1},t\le T+1$.
We hence have
\begin{align}
\gamma_{T}&=\max_{x_{1:T}}\frac{1}{2}\sum_{t=1}^{T}\log(1+\sigma_{y}^{-2}\sigma^{2}(x_{t}|x_{1:t-1}))\\
&\approx\frac{1}{2}T\log(1+\sigma_{y}^{-2}s^{2}).
\end{align}

This linear growth will continue until $T$ is large enough such that
the uncertainty of the GP can be reduced substantially everywhere.

Since the bound on the cumulative regret $R_{T}$ is of the form $\sqrt{T\gamma_{T}}$
(see Theorem 1 in \cite{gpucb}) it will hence also grow linearly
in $T$. \cite{gpucb} then bound the suboptimality of the optimization
by the average regret $R_{T}/T$ (see the paragraph on regret in Section
2 of \cite{gpucb}), which does not decrease as long as $R_{T}$
grows linearly in $T$.

\section{The Continuous-Domain Setting}\label{sec:cont_setting}
In the continuous setting, we characterize the GP by $L_{k}$ and
$\sigma$, which are properties of the kernel $k$: 
\begin{align}
	|k(x,x)-k(x,y)|&\le L_{k}\left\Vert x-y\right\Vert _{\infty}\quad\forall x,y\in\mathcal{A}\\
k(x,x) & \le\sigma^{2}\quad\forall x\in\mathcal{A}
\end{align}
where $\mathcal{A}$ is the $D$-dimensional unit cube (note that
to use a domain other than the unit cube we can simply rescale). The
setting we are interested in is 
\begin{equation}
T\ll\left(\frac{L_{k}}{2\sigma^{2}}\right)^{D}=:m(L_{k},\sigma,D),
\end{equation}
which implies that a large part of the GP may remain unexplored, as
will become clear in the following comparison to related work:

As discussed in the introduction and in \ref{sec:gpucb}, the results from \cite{gpucb} only apply when we can reduce the maximum variance
of the GP using $T$ evaluations. This would
require that we can acquire information on each point $x$ that has
maximum prior variance $k(x,x)=\text{max}_{z}k(z,z)=\sigma^{2}$.
In order to ensure that we gather nonzero information on such a point
$x$, we have to make sure to evaluate at least one point $y$ such
that $k(x,y)>0$ (or, more realistically, $k(x,y)>\epsilon$, which
would lead to a qualitatively similar result), which is equivalent
to the condition 
\begin{equation}
|k(x,x)-k(x,y)|<k(x,x)=\sigma^{2},
\end{equation}
which we can ensure by 
\begin{equation}
L_{k}\left\Vert x-y\right\Vert _{\infty}<\sigma^{2}
\end{equation}
or equivalently by 
\begin{equation}
\left\Vert x-y\right\Vert _{\infty}<\frac{\sigma^{2}}{L_{k}}.
\end{equation}
This statement says that $x$ has to be within a cube centered at
$y$ with sidelength $2\sigma^{2}/L_{k}$. To ensure that this holds
for all $x\in\mathcal{A}$ (since in the worst case they all have
prior variance $\sigma^{2}$, which is typical), we need to cover
the domain with 
\begin{equation}
T>\left(\frac{L_{k}}{2\sigma^{2}}\right)^{D}=m(L_{k},\sigma,D)
\end{equation}
cubes and hence evaluations.

\section{A Note on Observation Noise}\label{sec:observation_noise}

Our goal here was to focus on the issue of large domains, without
the added difficulty of noisy observations, such as to allow a clearer
view of the core problem. Interestingly, the proofs apply practically
without any changes to the setting with observation noise. The caveat
is that the regret bounds are on the largest $\text{\textbf{noisy observation}}$
$\hat{Y}$ rather than the largest retrieved $\text{\textbf{function value}}$
$\max_{t}F_{A_{t}}$ (the two are identical in the noise-free setting).

As a naive way of obtaining regret bounds on $\max_{t}F_{A_{t}}$,
one could simply evaluate each point $n$ times and use the average
observation as a pseudo observation. Choosing $n$ large enough, all
pseudo observations $Y_{1:T}$ will be close to their respective function
values $F_{A_{1:T}}$ with high probability. To guarantee that all
$T$ pseudo observations are within $\epsilon$ of the true function
values with probability $\delta$, we would need $n=\log(T/\delta)f(\sigma_{y},\epsilon)$
(this follows from union bound over $T$ observations), where $f$
is some function that is not relevant here and $\sigma_{y}$ is the
noise standard deviation. We can now simply replace $T$ with $T/\left(\log(T/\delta)f(\sigma_{y},\epsilon)\right)$
in all the theorems (to be precise, we would also have to add $\epsilon$
to the regret, but it can be made arbitrarily small). While this solution
is impractical, it is interesting to note that the dependence of the
resulting regret-bounds on the domain size $N$ and Lipschitz constant
$L_{k}$ does not change. The dependence on $T$ is also identical,
up to a $\log$ factor. This suggests that the relations we uncovered
in this paper between the regret, the number of evaluations $T$,
the domain size $N$, the Lipschitz constant $L_{k}$ remain qualitatively
the same in the presence of observation noise.

\section{Definitions of Standard EI/UCB}

\begin{definition}[EI]\label{def:ei_extremization_standard} An
	agent follows the EI strategy when it picks its actions according
	to 
	\begin{align}
	A_{t+1} & =\op{argmax}_{n\in[N]}\op{ei}\left(\hat{Y}_{t}|M_{t}^{n},\sqrt{C_{t}^{nn}}\right)
	\end{align}
	with the expected improvement $\op{ei}$ as defined in \ref{def:ei}.
	\end{definition}
	
	\begin{definition}[UCB]\label{def:ucb_extremization_standard}
	An agent follows the UCB strategy when it picks its actions according
	to 
	\begin{align}
	A_{t+1} & =\op{argmax}_{n\in[N]}\left(M_{t}^{n}+\sqrt{C_{t}^{nn}2\log N}\right).
	\end{align}
	\end{definition}

\section{Empirical Comparison between EI/UCB and EI2/UCB2}
\label{sec:experiments}

We conducted a number of within-model experiments, where the ground-truth function is a sample drawn from the GP. The continuous-domain experiments use the \cite{gpy2014} library and the discrete-domain experiments use scikit-learn (\cite{Pedregosa2011-hb}). The code for all the experiments is publicly accessible\footnote{\tiny\url{https://github.com/mwuethri/regret-Bounds-for-Gaussian-Process-Optimization-in-Large-Domains}}.

\subsection{Continuous Domain}
We defined a GP $G$ on a $D$-dimensional unit cube with a squared-exponential
kernel with length scale $l$. The smaller the length-scale and the
larger the dimensionality, the harder the problem. For the experiments
that follow, we chose the ranges of $D,l$ such that we cover the
classical setting (where the global optimum can be identified) as
well as the large-domain setting (where this is not possible). To make this scenario computationally tractable, we discretize the domain using a grid with $1000$ points, and we allow the algorithm to evaluate at $T=50$ points.

The true expected regret
\begin{equation}
\mathbb{E}\left[\sup_{a\in\mathcal{A}}G(a)-\hat{Y}\right]=r(l,D,T)
\end{equation}
is a function of the length-scale $l$, the dimension $D$, and the
number of function evaluations $T$. We compute this quantity empirically
using $1000$ samples (i.e. $1000$ randomly-drawn ground-truth functions). In \ref{fig:continuous_ei_vs_ei2_plot}, for instance, we plot this empirical expected regret as a function of the number of evaluations $T$.
\begin{figure}
	\centering
	\includegraphics[width=0.5\linewidth]{./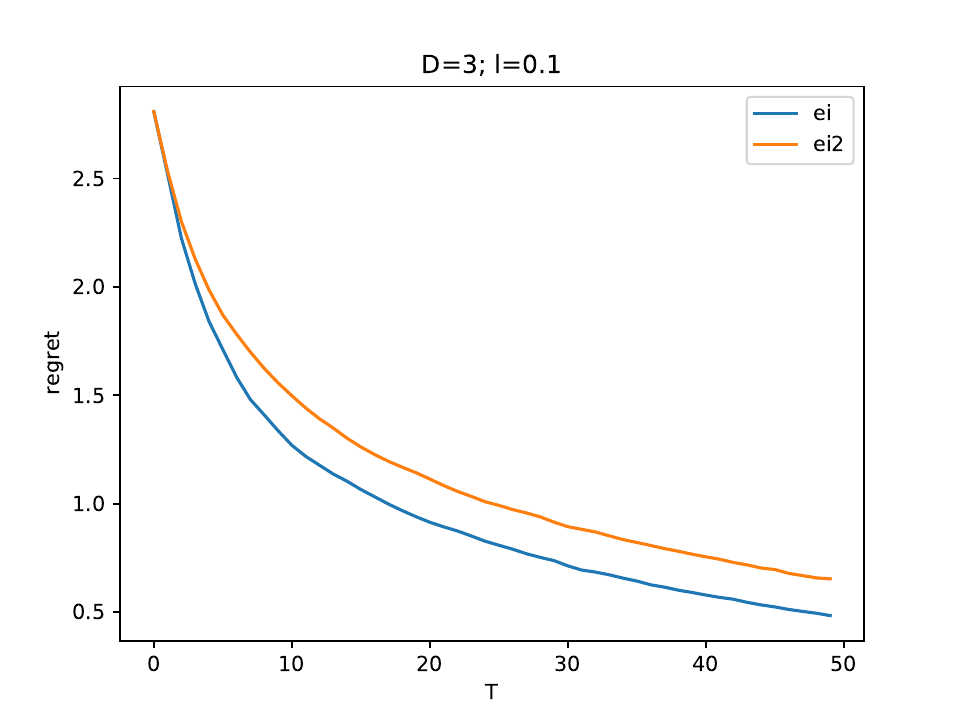}
	\caption{The empirical expected regret as a function of the number of evaluations $T$.}
	\label{fig:continuous_ei_vs_ei2_plot}
\end{figure}
In this example, EI performs slightly better than EI2. To gain a more quantitative understanding, it is instructive to look at how many evaluations $T$
are required to attain a given regret $R=r(l,D,T)$: 
\begin{equation}
T=t(R,l,D).
\end{equation}
We can then compare the required number of steps for EI and EI2:
\begin{equation}
\frac{t_{ei}(R,l,D)}{t_{ei2}(R,l,D)},
\end{equation}
which we report in \ref{fig:continuous_ei_vs_ei2}.
\begin{table}
\centering
\caption{Fraction $\frac{t_{ei}(R,l,D)}{t_{ei2}(R,l,D)}$ for continuous 
    domains. NaN entries correspond to the case where the given regret was 
    not attained after $T=50$ evaluations.}
\label{fig:continuous_ei_vs_ei2}
\begin{tabular}{llrrrrr}
\toprule
  &       &   2.5 &   2.0 &   1.5 &   1.0 &   0.5 \\
D & l &       &       &       &       &       \\
\midrule
1 & 0.003 &  1.00 &  1.00 &  0.87 &  0.84 &   NaN \\
  & 0.010 &  1.00 &  1.00 &  0.80 &  0.80 &  0.74 \\
  & 0.030 &  1.00 &  1.00 &  1.00 &  1.00 &  0.78 \\
  & 0.100 &  1.00 &  1.00 &  1.00 &  1.00 &  0.75 \\
  & 0.300 &  1.00 &  1.00 &  1.00 &  1.00 &  1.00 \\
2 & 0.003 &  1.00 &  1.00 &  1.00 &  1.00 &   NaN \\
  & 0.010 &  1.00 &  0.86 &  1.00 &   NaN &   NaN \\
  & 0.030 &  0.67 &  1.00 &  0.91 &  0.71 &   NaN \\
  & 0.100 &  1.00 &  1.00 &  0.80 &  0.78 &  0.75 \\
  & 0.300 &  1.00 &  1.00 &  1.00 &  1.00 &  0.83 \\
3 & 0.003 &  1.00 &  1.00 &  1.00 &  1.00 &   NaN \\
  & 0.010 &  1.00 &  1.00 &  1.00 &   NaN &   NaN \\
  & 0.030 &  1.00 &  1.00 &  1.06 &   NaN &   NaN \\
  & 0.100 &  1.00 &  1.00 &  0.73 &  0.69 &   NaN \\
  & 0.300 &  1.00 &  1.00 &  0.75 &  0.86 &  0.73 \\
4 & 0.003 &  1.00 &  1.00 &  1.00 &   NaN &   NaN \\
  & 0.010 &  1.00 &  1.00 &  1.00 &   NaN &   NaN \\
  & 0.030 &  1.00 &  1.00 &  1.00 &   NaN &   NaN \\
  & 0.100 &  1.00 &  1.00 &  0.84 &   NaN &   NaN \\
  & 0.300 &  1.00 &  0.75 &  0.86 &  0.71 &  0.70 \\
\bottomrule
\end{tabular}
\end{table}

In one entry EI2 appears to perform slightly better, we have $t_{ei2} = 1.06 t_{ei}$. This is may be due to the variance of the empirical estimation.
In all other entries, we have $0.5t_{ei2}\le t_{ei}\le t_{ei2}$,
which means that EI always reaches the given expected regret $R$
faster than EI2, but not more than twice as fast. 
This is what we
intuitively expected: EI should do better than EI2, because it does
not waste evaluations on minimization, but not much better, since in expectation every second
evaluation of EI2 is a maximization. Note that the entries which are
$1$, i.e. both algorithms perform equally well, correspond to i) particularly
simple settings (large $l$, low $D$) where both algorithms find
good values in just a handful of evaluations or ii) particularly hard settings where there is no essentially no correlation between different points in the discretized domain.

For UCB and UCB2 (where we used a fixed confidence level) we obtain
similar results, see \ref{fig:continuous_ucb_vs_ucb2}.
\begin{table}
\centering
\caption{Fraction $\frac{t_{ucb}(R,l,D)}{t_{ucb2}(R,l,D)}$ for 
    continuous domains. NaN entries correspond to the case where the 
    given regret was not attained after $T=50$ evaluations.}
\label{fig:continuous_ucb_vs_ucb2}
\begin{tabular}{llrrrrr}
\toprule
  &       &   2.5 &   2.0 &   1.5 &   1.0 &   0.5 \\
D & l &       &       &       &       &       \\
\midrule
1 & 0.003 &  1.00 &  1.00 &  0.89 &  0.86 &   NaN \\
  & 0.010 &  1.00 &  1.00 &  0.80 &  0.90 &  0.81 \\
  & 0.030 &  1.00 &  1.00 &  1.00 &  1.00 &  0.90 \\
  & 0.100 &  1.00 &  1.00 &  1.00 &  1.00 &  1.00 \\
  & 0.300 &  1.00 &  1.00 &  1.00 &  1.00 &  1.00 \\
2 & 0.003 &  1.00 &  1.00 &  1.00 &  1.00 &   NaN \\
  & 0.010 &  1.00 &  0.86 &  1.00 &   NaN &   NaN \\
  & 0.030 &  0.67 &  0.83 &  0.83 &  0.77 &   NaN \\
  & 0.100 &  1.00 &  1.00 &  0.80 &  0.73 &  0.78 \\
  & 0.300 &  1.00 &  1.00 &  1.00 &  1.00 &  0.83 \\
3 & 0.003 &  1.00 &  1.00 &  1.00 &  1.00 &   NaN \\
  & 0.010 &  1.00 &  1.00 &  1.00 &   NaN &   NaN \\
  & 0.030 &  1.00 &  1.00 &  1.00 &   NaN &   NaN \\
  & 0.100 &  1.00 &  0.83 &  0.75 &  0.76 &   NaN \\
  & 0.300 &  1.00 &  1.00 &  0.75 &  1.00 &  0.71 \\
4 & 0.003 &  1.00 &  1.00 &  1.00 &   NaN &   NaN \\
  & 0.010 &  1.00 &  1.00 &  1.00 &   NaN &   NaN \\
  & 0.030 &  1.00 &  1.00 &  1.00 &   NaN &   NaN \\
  & 0.100 &  1.00 &  0.86 &  0.83 &   NaN &   NaN \\
  & 0.300 &  1.00 &  0.75 &  0.87 &  0.75 &  0.66 \\
\bottomrule
\end{tabular}
\end{table}

\subsection{Band Covariance Matrices}
Next, we compare EI/UCB with EI2/UCB2 in the discrete setting with $N=100$. We use band covariance matrices, where the diagonal elements are equal to $1$ and there are a number of nonzero elements to the right and the left of the diagonal. We vary width of this band and the value the off-diagonal elements take, we report the results in \ref{fig:band_ei_vs_ei2} for EI vs EI2 and in \ref{fig:band_ucb_vs_ucb2} for UCB vs UCB2. Similarly to the case of continuous domains, we see that $0.5t_{ei2}\le t_{ei}\le t_{ei2}$ (and the equivalent for UCB).
\begin{table}
\centering
\caption{Fraction $\frac{t_{ei}(\text{band\_size, 
    band\_corr})}{t_{ei2}(\text{band\_size, band\_corr})}$ 
    for finite band covariance matrices. The covariance matries are identity 
    matrices with $\text{band\_size}$ many elements with value $\text{band\_corr}$ 
    added to each side of the diagonal. NaN entries correspond to the 
    case where the given regret was not attained after $T=20$ evaluations.}
\label{fig:band_ei_vs_ei2}
\begin{tabular}{llrrrrr}
\toprule
   &       &  2.00 &  1.55 &  1.10 &  0.65 &  0.20 \\
band\_size & band\_corr &       &       &       &       &       \\
\midrule
0  &  0.00 &   1.0 &  1.00 &  1.00 &   NaN &   NaN \\
2  & -0.20 &   1.0 &  0.75 &  0.86 &  0.82 &   NaN \\
3  &  0.20 &   1.0 &  1.00 &  0.87 &  0.89 &   NaN \\
5  & -0.10 &   1.0 &  1.00 &  1.00 &  0.94 &   NaN \\
   &  0.20 &   1.0 &  1.00 &  1.00 &  0.88 &   NaN \\
10 &  0.10 &   1.0 &  1.00 &  1.00 &  0.94 &   NaN \\
40 &  0.05 &   1.0 &  1.00 &  0.87 &  0.94 &   NaN \\
\bottomrule
\end{tabular}
\end{table}

\begin{table}
\centering
\caption{Fraction $\frac{t_{ucb}(\text{band\_size, band\_corr})}
    {t_{ucb2}(\text{band\_size, band\_corr})}$ for finite band covariance 
    matrices. The covariance matries are identity matrices with 
    $\text{band\_size}$ many elements with value $\text{band\_corr}$ added 
    to each side of the diagonal. NaN entries correspond to the 
    case where the given regret was not attained after $T=20$ evaluations.}
\label{fig:band_ucb_vs_ucb2}
\begin{tabular}{llrrrrr}
\toprule
   &       &  2.00 &  1.55 &  1.10 &  0.65 &  0.20 \\
band\_size & band\_corr &       &       &       &       &       \\
\midrule
0  &  0.00 &   1.0 &   1.0 &  1.00 &   NaN &   NaN \\
2  & -0.20 &   1.0 &   1.0 &  0.86 &  0.87 &   NaN \\
3  &  0.20 &   1.0 &   1.0 &  0.87 &  0.89 &   NaN \\
5  & -0.10 &   1.0 &   1.0 &  1.00 &  0.88 &   NaN \\
   &  0.20 &   1.0 &   1.0 &  1.00 &  0.88 &   NaN \\
10 &  0.10 &   1.0 &   1.0 &  1.00 &  0.83 &   NaN \\
40 &  0.05 &   1.0 &   1.0 &  1.00 &  1.00 &   NaN \\
\bottomrule
\end{tabular}
\end{table}

\subsection{Randomly-Sampled Covariance Matrices}
Finally, we sample covariances (of size $N=200$) randomly from an inverse Wishart distribution (with $400$ degrees of freedom and identity scale matrix). We report the results in \ref{fig:wishart_ei_vs_ei2} for EI vs EI2 and in \ref{fig:wishart_ucb_vs_ucb2} for UCB vs UCB2. As in the previous experiments, we see that $0.5t_{ei2}\le t_{ei}\le t_{ei2}$ (and the equivalent for UCB).
\begin{table}
\centering
\caption{Fraction $\frac{t_{ei}(\text{wishart\_seed})}
    {t_{ei2}(\text{wishart\_seed})}$ for covariance matrices 
    drawn from a Wishart distribution. NaN entries correspond to the 
    case where the given regret was not attained after $T=30$ evaluations.}
\label{fig:wishart_ei_vs_ei2}
\begin{tabular}{lrrrrr}
\toprule
{} &  0.20 &  0.15 &  0.10 &  0.05 &  0.00 \\
wishart\_seed &       &       &       &       &       \\
\midrule
1            &   1.0 &  0.67 &  0.83 &  0.78 &   NaN \\
2            &   1.0 &  1.00 &  0.83 &  0.87 &   NaN \\
3            &   1.0 &  1.00 &  0.83 &  0.83 &   NaN \\
4            &   1.0 &  1.00 &  1.00 &  0.82 &   NaN \\
5            &   1.0 &  1.00 &  1.00 &  0.82 &   NaN \\
\bottomrule
\end{tabular}
\end{table}

\begin{table}
\centering
\caption{Fraction $\frac{t_{ucb}(\text{wishart\_seed})}
    {t_{ucb2}(\text{wishart\_seed})}$ for covariance matrices 
    drawn from a Wishart distribution. NaN entries correspond to the 
    case where the given regret was not attained after $T=30$ evaluations.}
\label{fig:wishart_ucb_vs_ucb2}
\begin{tabular}{lrrrrr}
\toprule
{} &  0.20 &  0.15 &  0.10 &  0.05 &  0.00 \\
wishart\_seed &       &       &       &       &       \\
\midrule
1            &   1.0 &   1.0 &  0.83 &  0.82 &   NaN \\
2            &   1.0 &   1.0 &  0.83 &  0.87 &   NaN \\
3            &   1.0 &   1.0 &  0.83 &  0.88 &   NaN \\
4            &   1.0 &   1.0 &  1.00 &  0.87 &   NaN \\
5            &   1.0 &   1.0 &  1.00 &  0.81 &   NaN \\
\bottomrule
\end{tabular}
\end{table}

\newpage

\section{Proof of \ref{theorem:main}}

\label{sec:proof}

In this section we prove \ref{theorem:main}. As we have seen in the
previous section, our problem is not adaptively submodular. Nevertheless,
the following proof is heavily inspired by the proof in \cite{golovin11adaptive}.
We derive a less strict condition than adaptive submodularity which
is applicable to our problem and implies that we converge exponentially
to the optimum$\cdot\beta$: \begin{lemma} \label{lemma:bo_main_inequality}For
	any problem of the type defined in \ref{sec:po_problem_statement},
	we have for any $\alpha,\beta>0$ 
	\begin{align}
		\beta\mathbb{E}\left[\hat{\check{F}}\right]-\mathbb{E}\left[\hat{\check{Y}}_{t}\right] & \le\alpha\left(\mathbb{E}\left[\hat{\check{Y}}_{t+1}\right]-\mathbb{E}\left[\hat{\check{Y}}_{t}\right]\right)\quad\forall t\in\{1:T-1\}\label{eq:bo_main_inequality}\\
		& \Downarrow\nonumber \\
		(1-e^{-\frac{T-1}{\alpha}})\beta\mathbb{E}\left[\hat{\check{F}}\right] & \le\mathbb{E}\left[\hat{\check{Y}}_{T}\right].
	\end{align}
	i.e. the first inequality implies the second inequality.\end{lemma}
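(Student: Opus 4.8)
The plan is to read the hypothesis \ref{eq:bo_main_inequality} as a one-step contraction on the suboptimality gap $g_t := \beta\,\mathbb{E}[\hat{\check F}] - \mathbb{E}[\hat{\check Y}_t]$ and then iterate it, in the spirit of the classical greedy-submodular analysis. Rearranging \ref{eq:bo_main_inequality} gives $\mathbb{E}[\hat{\check Y}_{t+1}] - \mathbb{E}[\hat{\check Y}_t] \ge \tfrac1\alpha g_t$, hence for every $t\in\{1,\dots,T-1\}$
\begin{equation}
g_{t+1} = g_t - \bigl(\mathbb{E}[\hat{\check Y}_{t+1}] - \mathbb{E}[\hat{\check Y}_t]\bigr) \le \bigl(1 - \tfrac1\alpha\bigr) g_t .
\end{equation}

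Next I would record two elementary facts. First, $\hat{\check Y}_t = \hat Y_t - \check Y_t$ is nondecreasing in $t$, since $\hat Y_t$ is a running maximum and $\check Y_t$ a running minimum; therefore $t\mapsto\mathbb{E}[\hat{\check Y}_t]$ is nondecreasing and $t\mapsto g_t$ is nonincreasing. Second, $\mathbb{E}[\hat{\check F}] = \mathbb{E}[\hat F - \check F]\ge 0$ (as $\hat F\ge\check F$) and $\hat{\check Y}_1 = Y_1 - Y_1 = 0$, so $g_1 = \beta\,\mathbb{E}[\hat{\check F}]\ge 0$.

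Then I would split on the sign of $g_T$. If $g_T<0$, then $\mathbb{E}[\hat{\check Y}_T] > \beta\,\mathbb{E}[\hat{\check F}] \ge \bigl(1-e^{-(T-1)/\alpha}\bigr)\beta\,\mathbb{E}[\hat{\check F}]$ because $0\le 1-e^{-(T-1)/\alpha}\le 1$ and $\mathbb{E}[\hat{\check F}]\ge 0$, so the conclusion holds trivially. If instead $g_T\ge 0$, monotonicity of $g$ gives $g_t\ge g_T\ge 0$ for all $t\le T$; multiplying the contraction through by the nonnegative number $g_t$ and using $1-x\le e^{-x}$ at $x=1/\alpha$ (valid for every $\alpha>0$) yields
\begin{equation}
0 \le g_{t+1} \le e^{-1/\alpha} g_t , \qquad t = 1, \dots, T-1 .
\end{equation}
Telescoping from $t=1$ to $t=T-1$ and using $g_1 = \beta\,\mathbb{E}[\hat{\check F}]$ gives $g_T \le e^{-(T-1)/\alpha}\,\beta\,\mathbb{E}[\hat{\check F}]$, which rearranges to $\bigl(1-e^{-(T-1)/\alpha}\bigr)\beta\,\mathbb{E}[\hat{\check F}] \le \mathbb{E}[\hat{\check Y}_T]$, as claimed.

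The one point that calls for genuine care is the regime $\alpha<1$, where the one-step factor $1-\tfrac1\alpha$ is negative and naive chaining of the inequality would be invalid. This is exactly what the case analysis above absorbs: in the nontrivial case $g_T\ge0$, the monotonicity of $\mathbb{E}[\hat{\check Y}_t]$ forces $g_t\ge0$ throughout, which both legitimizes multiplying by $g_t$ and lets us replace $1-\tfrac1\alpha$ by the larger nonnegative quantity $e^{-1/\alpha}$ before telescoping (and when $\alpha<1$ this simply pins $g_t$ to $0$, which is consistent with the claim). Everything else is a routine induction, so I do not anticipate any further obstacle.
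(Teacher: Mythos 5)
Your proof is correct and follows essentially the same route as the paper: define the gap $\delta_t=\beta\,\mathbb{E}[\hat{\check F}]-\mathbb{E}[\hat{\check Y}_t]$, turn the hypothesis into the contraction $\delta_{t+1}\le(1-\tfrac1\alpha)\delta_t$, pass to $e^{-1/\alpha}$ via $1+x\le e^x$, telescope, and use $\hat{\check Y}_1=0$. Your case analysis on the sign of $g_T$ is a small added refinement the paper omits (it silently multiplies $1-\tfrac1\alpha\le e^{-1/\alpha}$ by $\delta_t$ without noting that this needs $\delta_t\ge0$), and it correctly closes that gap.
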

\begin{proof} The proof is closely related to the adaptive submodularity
	proof by \citet{golovin11adaptive}. Defining $\delta_{t}:=\beta\mathbb{E}\left[\hat{\check{F}}\right]-\mathbb{E}\left[\hat{\check{Y}}_{t}\right]\forall t\in\{1:T\}$,
	we can rewrite the first inequality as
	
	\begin{align*}
		\delta_{t} & \le\alpha(\delta_{t}-\delta_{t+1})\quad\forall t\in\{1:T-1\}\\
		\delta_{t+1} & \le(1-\frac{1}{\alpha})\delta_{t}\quad\forall t\in\{1:T-1\}
	\end{align*}
	Since the function $e^{x}$ is convex, we have $e^{x}\ge1+x\quad\forall x$.
	Using this fact, we obtain the inequality
	
	\begin{align*}
		\delta_{t+1} & \le e^{-\frac{1}{\alpha}}\delta_{t}\quad\forall t\in\{1:T-1\}\\
		\delta_{T} & \le e^{-\frac{T-1}{\alpha}}\delta_{1}
	\end{align*}
	Now we can substitute $\delta_{T}=\beta\mathbb{E}\left[\hat{\check{F}}\right]-\mathbb{E}\left[\hat{\check{Y}}_{T}\right]$
	and $\delta_{1}=\beta\mathbb{E}\left[\hat{\check{F}}\right]-\mathbb{E}\left[\hat{\check{Y}}_{1}\right]=\beta\mathbb{E}\left[\hat{\check{F}}\right]$
	(since $\hat{\check{Y}}_{1}=0)$:
	
	\begin{align*}
		\beta\mathbb{E}\left[\hat{\check{F}}\right]-\mathbb{E}\left[\hat{\check{Y}}_{T}\right] & \le e^{-\frac{T-1}{\alpha}}\beta\mathbb{E}\left[\hat{\check{F}}\right]\\
		(1-e^{-\frac{T-1}{\alpha}})\beta\mathbb{E}\left[\hat{\check{F}}\right] & \le\mathbb{E}\left[\hat{\check{Y}}_{T}\right].
	\end{align*}
	
\end{proof} 
Hence, if for some $\alpha,\beta>0$ we can show that \ref{eq:bo_main_inequality}
holds, \ref{lemma:bo_main_inequality} yields a lower bound on the
expected utility.

\subsection{Specialization for the Extremization Problem}

\begin{lemma} \label{lemma:extremization_main_inequality}For any
	problem of the type defined in \ref{sec:po_problem_statement} we
	have for any $\alpha,\beta>0$ 
	\begin{align}
		\mathbb{E}\left[\beta\hat{\check{F}}-\hat{\check{Y}}_{t}|m_{t},c_{t},\hat{y}_{t},\check{y}_{t}\right] & \le\alpha\mathbb{E}\left[\hat{\check{Y}}_{t+1}-\hat{\check{Y}}_{t}|m_{t},c_{t},\hat{y}_{t},\check{y}_{t}\right]\label{eq:extremization_main_inequality}\\
		& \quad\quad\quad\quad\forall t\in\{1:T-1\},m_{t}\in\mathbb{R}^{N},c_{t}\in\mathbb{S}_{+}^{N},\hat{y}_{t}\ge\check{y}_{t}\in\mathbb{R}\nonumber \\
		& \Downarrow\nonumber \\
		(1-e^{-\frac{T-1}{\alpha}})\beta\mathbb{E}\left[\hat{\check{F}}\right] & \le\mathbb{E}\left[\hat{\check{Y}}\right].
	\end{align}
\end{lemma}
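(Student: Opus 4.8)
The plan is to recognise that \ref{lemma:extremization_main_inequality} is an immediate corollary of \ref{lemma:bo_main_inequality}: the hypothesis \ref{eq:extremization_main_inequality} is nothing but a \emph{conditional} form of the hypothesis \ref{eq:bo_main_inequality}, so it suffices to integrate out the conditioning variables $M_t, C_t, \hat Y_t, \check Y_t$ and then quote \ref{lemma:bo_main_inequality} verbatim.

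Concretely, I would argue as follows. The assumed inequality \ref{eq:extremization_main_inequality} is stated for every realisation $(m_t,c_t,\hat y_t,\check y_t)$ of $(M_t,C_t,\hat Y_t,\check Y_t)$, i.e. it holds pointwise (for almost every such realisation) and for every $t\in\{1:T-1\}$. Taking the expectation of both sides with respect to the (marginal) joint law of $(M_t,C_t,\hat Y_t,\check Y_t)$ preserves the inequality by monotonicity of $\mathbb{E}$ — here one uses that all quantities involved are integrable, which is automatic since the prior is Gaussian — and the tower property of conditional expectation collapses each side:
\begin{align}
\mathbb{E}\left[\,\mathbb{E}\left[\beta\hat{\check{F}}-\hat{\check{Y}}_t\mid M_t,C_t,\hat Y_t,\check Y_t\right]\right]&=\beta\,\mathbb{E}\left[\hat{\check{F}}\right]-\mathbb{E}\left[\hat{\check{Y}}_t\right],\\
\mathbb{E}\left[\,\mathbb{E}\left[\hat{\check{Y}}_{t+1}-\hat{\check{Y}}_t\mid M_t,C_t,\hat Y_t,\check Y_t\right]\right]&=\mathbb{E}\left[\hat{\check{Y}}_{t+1}\right]-\mathbb{E}\left[\hat{\check{Y}}_t\right].
\end{align}
This yields precisely $\beta\,\mathbb{E}[\hat{\check{F}}]-\mathbb{E}[\hat{\check{Y}}_t]\le\alpha\left(\mathbb{E}[\hat{\check{Y}}_{t+1}]-\mathbb{E}[\hat{\check{Y}}_t]\right)$ for all $t\in\{1:T-1\}$, which is exactly the hypothesis \ref{eq:bo_main_inequality} of \ref{lemma:bo_main_inequality}. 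Invoking that lemma gives $(1-e^{-(T-1)/\alpha})\beta\,\mathbb{E}[\hat{\check{F}}]\le\mathbb{E}[\hat{\check{Y}}_T]=\mathbb{E}[\hat{\check{Y}}]$, which is the claim.

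There is no real obstacle in this argument; it is a purely mechanical de-conditioning step. The only places that call for a word of care are (i) checking that the conditioning notation in \ref{eq:extremization_main_inequality} is being integrated against the correct marginal distribution of $(M_t,C_t,\hat Y_t,\check Y_t)$ so that the tower property applies as written (and noting in passing that $\hat{\check Y}_t$ is a deterministic function of $\hat y_t,\check y_t$ under the conditioning, which simplifies the bookkeeping), and (ii) the brief integrability remark. All the substance of the result lies elsewhere, namely in subsequently establishing that \ref{eq:extremization_main_inequality} does hold for a suitable pair $\alpha,\beta$ when the actions are chosen by EI2 or UCB2; that is where the difficulty is concentrated, and it is handled in the lemmas that follow.
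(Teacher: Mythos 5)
Your argument is exactly the paper's: take the expectation over $(M_t,C_t,\hat Y_t,\check Y_t)$ via the tower property to turn the conditional inequality \ref{eq:extremization_main_inequality} into the unconditional one \ref{eq:bo_main_inequality}, and then invoke \ref{lemma:bo_main_inequality}. The added remarks on integrability and on which marginal is being integrated against are fine but do not change the substance; the proposal is correct.
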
 \begin{proof} It is easy to see that the implication
	\begin{align}
		\mathbb{E}\left[\beta\hat{\check{F}}-\hat{\check{Y}}_{t}|m_{t},c_{t},\hat{y}_{t},\check{y}_{t}\right] & \le\alpha\mathbb{E}\left[\hat{\check{Y}}_{t+1}-\hat{\check{Y}}_{t}|m_{t},c_{t},\hat{y}_{t},\check{y}_{t}\right]\\
		& \quad\quad\quad\quad\forall t\in[T-1],m_{t}\in\mathbb{R}^{N},c_{t}\in\mathbb{S}_{+}^{N},\hat{y}_{t}\ge\check{y}_{t}\in\mathbb{R}\nonumber \\
		& \Downarrow\nonumber \\
		\mathbb{E}\left[\beta\hat{\check{F}}-\hat{\check{Y}}_{t}\right] & \le\alpha\mathbb{E}\left[\hat{\check{Y}}_{t+1}-\hat{\check{Y}}_{t}\right]\quad\forall t\in\{1:T-1\}.\label{eq:intermediate1}
	\end{align}
	holds, since taking the expectation with respect to $M_{t},C_{t},\hat{Y}_{t},\check{Y}_{t}$
	on both sides of the first line yields the second line. Since \ref{eq:intermediate1}
	is identical to \ref{lemma:bo_main_inequality}, the desired implication
	follows from these two implications. \end{proof} To prove that \ref{eq:extremization_main_inequality}
holds, we will derive a lower bound for the right-hand side and an
upper bound for the left-hand side.

\subsection{Lower Bound for the Right-Hand Side}

\begin{lemma} \label{lemma:bound_rhs} For any instance $(N,T,\mu,\Sigma)$
	of the problem defined in \ref{sec:po_problem_statement}, if we follow
	either the EI2 (\ref{def:ei_extremization}) or the UCB2 (\ref{def:ucb_extremization})
	strategy, we have 
	\begin{align}
		\mathbb{E} & \left[\hat{\check{Y}}_{t+1}-\hat{\check{Y}}_{t}|m_{t},c_{t},\hat{y}_{t},\check{y}_{t}\right]\label{eq:sdfasdfasdf}\\
		& \ge\max\left\{ \op{ei}\left(\hat{y}_{t}\bigg|m_{t}^{n_{ucb}},\sqrt{c_{t}^{n_{ucb}n_{ucb}}}\right),\op{ei}\left(-\check{y}_{t}\bigg|-m_{t}^{n_{ucb}},\sqrt{c_{t}^{n_{ucb}n_{ucb}}}\right)\right\} \nonumber 
	\end{align}
	with $\op{ei}$ as defined in \ref{def:ei} and 
	\begin{equation}
	n_{ucb}:=\op{argmax}_{n\in[N]}\max\left\{ -\hat{y}_{t}+m_{t}^{n}+\sqrt{c_{t}^{nn}2\log N},\check{y}_{t}-m_{t}^{n}+\sqrt{c_{t}^{nn}2\log N}\right\} 
	\end{equation}
	for any $t\in\{1:T-1\},m_{t}\in\mathbb{R}^{N},c_{t}\in\mathbb{S}_{+}^{N},\hat{y}_{t}\ge\check{y}_{t}\in\mathbb{R}$.
\end{lemma}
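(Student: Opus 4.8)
The goal is to lower-bound the expected single-step increase $\mathbb{E}[\hat{\check{Y}}_{t+1}-\hat{\check{Y}}_{t}\mid m_t,c_t,\hat y_t,\check y_t]$ by the expected-improvement quantity evaluated at the arm $n_{ucb}$ that UCB2 would select. The first observation is that for \emph{any} fixed arm $a$, the one-step gain in the spread $\hat{\check{Y}}$ is at least as large as either the gain in the running maximum or the gain in the running minimum considered separately, because $\hat Y_{t+1}\ge\hat Y_t$ and $\check Y_{t+1}\le\check Y_t$ always hold, so
\begin{align}
\hat{\check{Y}}_{t+1}-\hat{\check{Y}}_{t} \;=\; (\hat Y_{t+1}-\hat Y_t) + (\check Y_t - \check Y_{t+1}) \;\ge\; \max\{\hat Y_{t+1}-\hat Y_t,\ \check Y_t - \check Y_{t+1}\}.
\end{align}
Now $\hat Y_{t+1}-\hat Y_t = \max\{Y_{t+1}-\hat y_t,0\}$ when we evaluate at arm $a$ at step $t+1$, and conditionally on $m_t,c_t$ the observation $Y_{t+1}$ is Gaussian with mean $m_t^{a}$ and variance $c_t^{aa}$. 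Hence $\mathbb{E}[\hat Y_{t+1}-\hat Y_t\mid\ldots,A_{t+1}=a]=\op{ei}(\hat y_t\mid m_t^{a},\sqrt{c_t^{aa}})$ by the definition in~\ref{def:ei}, and symmetrically $\mathbb{E}[\check Y_t-\check Y_{t+1}\mid\ldots,A_{t+1}=a]=\op{ei}(-\check y_t\mid -m_t^{a},\sqrt{c_t^{aa}})$ (a sign flip turns the running minimum into a running maximum). So for every fixed arm $a$,
\begin{align}
\mathbb{E}\left[\hat{\check{Y}}_{t+1}-\hat{\check{Y}}_{t}\mid m_t,c_t,\hat y_t,\check y_t,\ A_{t+1}=a\right]\;\ge\;\max\left\{\op{ei}\!\left(\hat y_t\mid m_t^{a},\sqrt{c_t^{aa}}\right),\ \op{ei}\!\left(-\check y_t\mid -m_t^{a},\sqrt{c_t^{aa}}\right)\right\}.
\end{align}

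**From a fixed arm to the actual policy.** For the EI2 policy the conclusion is essentially immediate: EI2 picks the arm $a$ maximizing exactly the right-hand side above, so in particular it does at least as well as the arm $n_{ucb}$, giving the claimed bound. The slightly less trivial point is that the displayed inequality must be combined correctly with the fact that $A_{t+1}$ is itself determined by $m_t,c_t,\hat y_t,\check y_t$ under either deterministic policy, so conditioning on $A_{t+1}$ adds nothing and the left-hand side of~\ref{eq:sdfasdfasdf} equals the per-arm expectation at $a=A_{t+1}$. For the UCB2 policy, $A_{t+1}=n_{ucb}$ by definition (\ref{def:ucb_extremization} is literally the rule defining $n_{ucb}$ here), so plugging $a=n_{ucb}$ into the per-arm bound again yields~\ref{eq:sdfasdfasdf} directly. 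For EI2 we instead use that the EI2 objective at its chosen arm is $\ge$ the EI2 objective at $n_{ucb}$, which is exactly the right-hand side of~\ref{eq:sdfasdfasdf}; so both policies are handled by the same per-arm inequality evaluated at $n_{ucb}$.

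**Where the work is.** The only genuinely substantive step is verifying the identity $\mathbb{E}[\max\{Y_{t+1}-\hat y_t,0\}\mid m_t,c_t,A_{t+1}=a]=\op{ei}(\hat y_t\mid m_t^{a},\sqrt{c_t^{aa}})$ and its sign-flipped counterpart. This is the observation that $Y_{t+1}=F_{A_{t+1}}$ is, conditionally on the history summarized by $(m_t,c_t)$, exactly $\mathcal{N}(m_t^{a},c_t^{aa})$ — which uses the Gaussian-posterior structure of the GP and the noise-free evaluation model $Y_t=F_{A_t}$ — together with the closed form $\op{ei}(\tau\mid\mu,\sigma)=\int\max\{x-\tau,0\}\,\mathcal{N}(x\mid\mu,\sigma)\,dx$ from~\eqref{eq:ei_normalization}. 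One should also note the $t\ge 1$ bookkeeping: $\hat Y_t,\check Y_t$ are well-defined for $t\ge1$ so the statement is vacuous-free on $\{1:T-1\}$. I do not anticipate a real obstacle; the subtlety, if any, is purely notational — keeping straight which conditioning set ($m_t,c_t$ versus the full history $A_{:t},Y_{:t}$) one is working under and checking that the EI2/UCB2 action rules are measurable with respect to $(m_t,c_t,\hat y_t,\check y_t)$ so that the "for any fixed arm" bound transfers to "for the arm the policy picks."
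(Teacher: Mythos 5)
Your proposal is correct and follows essentially the same route as the paper: decompose the one-step gain into the (nonnegative) gains of the running maximum and minimum, identify each conditional expectation with $\op{ei}(\cdot\mid m_t^a,\sqrt{c_t^{aa}})$ via the Gaussian posterior, lower-bound the sum by the max, and then handle UCB2 by $A_{t+1}=n_{ucb}$ and EI2 by its optimality over the same objective. No substantive differences from the paper's argument.
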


\begin{proof} Developing the expectation on the left hand side of
	\ref{eq:extremization_main_inequality} we have 
	\begin{align}
		\mathbb{E} & \left[\hat{\check{Y}}_{t+1}-\hat{\check{Y}}_{t}|m_{t},c_{t},\hat{y}_{t},\check{y}_{t}\right]\\
		& =\mathbb{E}\left[\max\left\{ F_{A_{t+1}}-\hat{y}_{t},0\right\} +\max\left\{ -F_{A_{t+1}}+\check{y}_{t},0\right\} \bigg|m_{t},c_{t},\hat{y}_{t},\check{y}_{t}\right]\\
		& =\mathbb{E}\left[\op{ei}\left(\hat{y}_{t}\bigg|m_{t}^{A_{t+1}},\sqrt{c_{t}^{A_{t+1}A_{t+1}}}\right)+\op{ei}\left(-\check{y}_{t}\bigg|-m_{t}^{A_{t+1}},\sqrt{c_{t}^{A_{t+1}A_{t+1}}}\right)\bigg|m_{t},c_{t},\hat{y}_{t},\check{y}_{t}\right]\\
		& \ge\mathbb{E}\left[\max\left\{ \op{ei}\left(\hat{y}_{t}\bigg|m_{t}^{A_{t+1}},\sqrt{c_{t}^{A_{t+1}A_{t+1}}}\right),\op{ei}\left(-\check{y}_{t}\bigg|-m_{t}^{A_{t+1}},\sqrt{c_{t}^{A_{t+1}A_{t+1}}}\right)\right\} \bigg|m_{t},c_{t},\hat{y}_{t},\check{y}_{t}\right]\label{bla}
	\end{align}
	where we have used \ref{def:ei}, and the inequality follows from
	the fact that the expected improvement ($\op{ei}$) is always $\ge0$.
	The action $A_{t+1}$ is a function of $M_{t},C_{t},\hat{Y}_{t},\check{Y}_{t}$.
	If we follow the EI2 strategy (\ref{def:ei_extremization}), we have
	
	\begin{align}
		\ref{bla} & =\max_{n\in[N]}\max\left\{ \op{ei}\left(\hat{y}_{t}|m_{t}^{n},\sqrt{c_{t}^{nn}}\right),\op{ei}\left(-\check{y}_{t}|-m_{t}^{n},\sqrt{c_{t}^{nn}}\right)\right\} \label{eq:rhs_intermediate_ei}
	\end{align}
	and if we follow the UCB2 (\ref{def:ucb_extremization}) strategy,
	we have 
	\begin{equation}
	\ref{bla}=\max\left\{ \op{ei}\left(\hat{y}_{t}\bigg|m_{t}^{n_{ucb}},\sqrt{c_{t}^{n_{ucb}n_{ucb}}}\right),\op{ei}\left(-\check{y}_{t}\bigg|-m_{t}^{n_{ucb}},\sqrt{c_{t}^{n_{ucb}n_{ucb}}}\right)\right\} .\label{eq:rhs_intermediate_ucb}
	\end{equation}
	Clearly we have $\ref{eq:rhs_intermediate_ei}\ge\ref{eq:rhs_intermediate_ucb}$,
	hence for both strategies it holds that $\ref{bla}\ge\ref{eq:rhs_intermediate_ucb}$
	which concludes the proof.
	
\end{proof}

\subsection{Upper Bound for the Left-Hand Side}

In analogy with \ref{def:ei}, we define

\begin{definition}[Multivariate Expected Improvement]\label{def:nei}
	For a family of jointly Gaussian distributed RVs $(F_{n})_{n\in[N]}$
	with mean $m\in\mathbb{R}^{N}$ and covariance $c\in\mathbb{S}_{+}^{N}$
	and a threshold $\tau\in\mathbb{R}$, we define the multivariate expected
	improvement as 
	\begin{align}
		\op{mei}(\tau|m,c): & =\mathbb{E}\left[\max\left\{ \max_{n\in[N]}F_{n}-\tau,0\right\} \right]\\
		& =\int_{\mathbb{R}^{N}}\max\left\{ \max_{n\in[N]}f_{n}-\tau,0\right\} \mathcal{N}(f|m,c)df.
	\end{align}
\end{definition}

\begin{lemma} \label{lemma:bound_lhs} For any instance $(N,T,\mu,\Sigma)$
	of the problem defined in \ref{sec:po_problem_statement}, if we follow
	either the EI2 (\ref{def:ei_extremization}) or the UCB2 (\ref{def:ucb_extremization})
	strategy, we have for any $0<\beta\le1$ 
	\begin{align}
		\mathbb{E} & \left[\beta\hat{\check{F}}-\hat{\check{Y}}_{t}|m_{t},c_{t},\hat{y}_{t},\check{y}_{t}\right]\label{eq:sdfasdfasdf-1}\\
		& \le\beta2\max\left\{ \op{mei}(0|m_{t}-\hat{y}_{t},c_{t}),\op{mei}(0|\check{y}_{t}-m_{t},c_{t})\right\} +(1-\beta)(-\hat{y}_{t}+\check{y}_{t})\nonumber 
	\end{align}
	for any $t\in\{1:T-1\},m_{t}\in\mathbb{R}^{N},c_{t}\in\mathbb{S}_{+}^{N},\hat{y}_{t}\ge\check{y}_{t}\in\mathbb{R}$.
	
\end{lemma}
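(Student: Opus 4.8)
The plan is to collapse the left-hand side onto the multivariate expected improvement $\op{mei}$ of \ref{def:nei} by decomposing $\hat{\check{F}}$ around the currently observed extremes and then exploiting that, conditioned on the posterior statistics, $F$ is Gaussian. The first ingredient is the distributional identity $F\mid M_t=m_t,C_t=c_t,\hat{Y}_t=\hat{y}_t,\check{Y}_t=\check{y}_t\sim\mathcal N(m_t,c_t)$: it holds because $M_t,C_t,\hat{Y}_t,\check{Y}_t$ are all $\sigma(A_{:t},Y_{:t})$-measurable while $F\mid A_{:t},Y_{:t}\sim\mathcal N(M_t,C_t)$ by definition of the posterior, so conditioning only on the values these statistics take does not change the law of $F$ (apply the tower rule on the corresponding event). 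Note that this step, and hence the whole lemma, makes no use of the EI2 (\ref{def:ei_extremization}) or UCB2 (\ref{def:ucb_extremization}) rule — the inequality is policy-independent, and only $\beta>0$ is actually used (the restriction $\beta\le1$ is merely carried along for consistency with later steps).

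Second, I would write $\hat{\check{F}}=\hat{F}-\check{F}=(\hat{F}-\hat{y}_t)+(\hat{y}_t-\check{y}_t)+(\check{y}_t-\check{F})$. Since the observed maximum and minimum are attained at already-evaluated arms, we have $\hat{y}_t\le\hat{F}$ and $\check{F}\le\check{y}_t$, so the first and last summands are nonnegative and
\[
  \hat{\check{F}}=\max\{\hat{F}-\hat{y}_t,0\}+(\hat{y}_t-\check{y}_t)+\max\{\check{y}_t-\check{F},0\}.
\]
Multiplying by $\beta>0$, subtracting $\hat{\check{Y}}_t=\hat{y}_t-\check{y}_t$ (a constant under the conditioning), and taking conditional expectations reduces the task to evaluating $\mathbb E[\max\{\hat{F}-\hat{y}_t,0\}\mid\cdots]$ and $\mathbb E[\max\{\check{y}_t-\check{F},0\}\mid\cdots]$.

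For these I would invoke \ref{def:nei} together with the translation identity $\op{mei}(\tau\mid m,c)=\op{mei}(0\mid m-\tau,c)$ (substitute $f\mapsto f-\tau$ in the defining integral). Since $\hat{F}=\max_n F_n$ with $F\sim\mathcal N(m_t,c_t)$, the first expectation equals $\op{mei}(\hat{y}_t\mid m_t,c_t)=\op{mei}(0\mid m_t-\hat{y}_t,c_t)$; writing $\check{y}_t-\check{F}=\max_n(-F_n)-(-\check{y}_t)$ and using $-F\sim\mathcal N(-m_t,c_t)$, the second equals $\op{mei}(-\check{y}_t\mid -m_t,c_t)=\op{mei}(0\mid\check{y}_t-m_t,c_t)$. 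Collecting terms,
\begin{align*}
  \mathbb E\!\left[\beta\hat{\check{F}}-\hat{\check{Y}}_t\mid m_t,c_t,\hat{y}_t,\check{y}_t\right]&\le\beta\big(\op{mei}(0\mid m_t-\hat{y}_t,c_t)+\op{mei}(0\mid\check{y}_t-m_t,c_t)\big)\\
  &\quad+(1-\beta)(-\hat{y}_t+\check{y}_t),
\end{align*}
and applying $a+b\le2\max\{a,b\}$ to the two nonnegative $\op{mei}$ terms (nonnegativity is immediate from \ref{def:nei}) yields exactly the bound of \ref{lemma:bound_lhs}.

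The calculation is short; the only delicate part, such as it is, is bookkeeping rather than analysis: getting the $\min$/$\max$ sign flip right so that the $\check{F}$ contribution becomes an $\op{mei}$ at threshold $0$ with mean $\check{y}_t-m_t$ (not $m_t-\check{y}_t$), and stating the conditional-distribution identity cleanly. There is no genuine analytic obstacle here — the real work of the theorem lies in the subsequent steps that bound $\op{mei}(0\mid\cdot,\cdot)$ and choose $\alpha,\beta$.
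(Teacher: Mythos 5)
Your proposal is correct and follows essentially the same route as the paper: the same decomposition of $\beta\hat{\check{F}}-\hat{\check{Y}}_{t}$ into $\beta(\hat{F}-\hat{y}_{t})+\beta(\check{y}_{t}-\check{F})+(1-\beta)(-\hat{y}_{t}+\check{y}_{t})$, recognition of the two conditional expectations as $\op{mei}(0\mid m_{t}-\hat{y}_{t},c_{t})$ and $\op{mei}(0\mid\check{y}_{t}-m_{t},c_{t})$, and the final $a+b\le2\max\{a,b\}$ step. The only (harmless) differences are that you justify the $\max\{\cdot,0\}$ step via $\hat{y}_{t}\le\hat{F}$ and $\check{F}\le\check{y}_{t}$ where the paper simply uses $x\le\max\{x,0\}$, and that you spell out the conditional-distribution and policy-independence observations that the paper leaves implicit.
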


\begin{proof}We have
	\begin{align}
		\mathbb{E} & \left[\beta\hat{\check{F}}-\hat{\check{Y}}_{t}|m_{t},c_{t},\hat{y}_{t},\check{y}_{t}\right]\\
		& =\beta\mathbb{E}\left[\hat{F}-\check{F}-\hat{y}_{t}+\check{y}_{t}|m_{t},c_{t}\right]+(1-\beta)(-\hat{y}_{t}+\check{y}_{t})\\
		& =\beta\mathbb{E}\left[\hat{F}-\hat{y}_{t}|m_{t},c_{t}\right]+\beta\mathbb{E}\left[-\check{F}+\check{y}_{t}|m_{t},c_{t}\right]+(1-\beta)(-\hat{y}_{t}+\check{y}_{t})\\
		& \le\beta\mathbb{E}\left[\max\{\hat{F}-\hat{y}_{t},0\}|m_{t},c_{t}\right]+\beta\mathbb{E}\left[\max\{-\check{F}+\check{y}_{t},0\}|m_{t},c_{t}\right]+(1-\beta)(-\hat{y}_{t}+\check{y}_{t})\\
		& =\beta\left(\op{mei}(0|m_{t}-\hat{y}_{t},c_{t})+\op{mei}(0|\check{y}_{t}-m_{t},c_{t})\right)+(1-\beta)(-\hat{y}_{t}+\check{y}_{t})\\
		& \le\beta2\max\left\{ \op{mei}(0|m_{t}-\hat{y}_{t},c_{t}),\op{mei}(0|\check{y}_{t}-m_{t},c_{t})\right\} +(1-\beta)(-\hat{y}_{t}+\check{y}_{t})
	\end{align}
	where by $m_{t}-\hat{y}_{t}$ we mean that the scalar $\hat{y}_{t}$
	is subtracted from each element of the vector $m_{t}$.
	
\end{proof}

\subsection{Upper Bound on the Regret}

\begin{theorem}\label{theorem:bound_alpha} For any instance $(N,T,\mu,\Sigma)$
	of the problem defined in \ref{sec:po_problem_statement}, if we follow
	either the EI2 (\ref{def:ei_extremization}) or the UCB2 (\ref{def:ucb_extremization})
	strategy, we have for any $\alpha>0$ and any $0<\beta\le1-\frac{1}{\sqrt{2\pi}\left(2\log N\right)^{3/2}}$
	\begin{align}
		\max_{x}\left(2\frac{\beta\left(\sqrt{2\log N}+\frac{1}{2\log N\sqrt{2\pi}}\right)-x}{\op{ei}\left(x\right)}\right) & \le\alpha\label{eq:bound_alpha_condition}\\
		& \Downarrow\nonumber \\
		(1-e^{-\frac{T-1}{\alpha}})\beta\mathbb{E}\left[\hat{\check{F}}\right] & \le\mathbb{E}\left[\hat{\check{Y}}\right]\label{eq:bound_alpha_implication}
	\end{align}
	i.e. the first line implies the second. \end{theorem}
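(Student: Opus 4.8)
The plan is to deduce \ref{theorem:bound_alpha} purely from the machinery already assembled. By \ref{lemma:extremization_main_inequality} it suffices to verify the per-step inequality \ref{eq:extremization_main_inequality}; for that I would upper-bound its left-hand side with \ref{lemma:bound_lhs} and lower-bound its right-hand side with \ref{lemma:bound_rhs}, so that the entire claim reduces to showing that hypothesis \ref{eq:bound_alpha_condition} implies
\begin{align}
& 2\beta\max\left\{\op{mei}(0|m_t-\hat{y}_t,c_t),\op{mei}(0|\check{y}_t-m_t,c_t)\right\}+(1-\beta)(-\hat{y}_t+\check{y}_t)\nonumber\\
& \qquad\le\alpha\max\left\{\op{ei}\left(\hat{y}_t\big|m_t^{n_{ucb}},\sqrt{c_t^{n_{ucb}n_{ucb}}}\right),\op{ei}\left(-\check{y}_t\big|-m_t^{n_{ucb}},\sqrt{c_t^{n_{ucb}n_{ucb}}}\right)\right\}\nonumber
\end{align}
for all $t\in\{1:T-1\}$, $m_t\in\mathbb{R}^N$, $c_t\in\mathbb{S}_{+}^{N}$, $\hat{y}_t\ge\check{y}_t$. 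Since $\hat{y}_t\ge\check{y}_t$ and $0<\beta\le1$, the slack term $(1-\beta)(-\hat{y}_t+\check{y}_t)$ is nonpositive and may simply be dropped. By the sign-flip symmetry of the setup — everything is invariant under $F\mapsto-F$, $m_t\mapsto-m_t$, $(\hat{y}_t,\check{y}_t)\mapsto(-\check{y}_t,-\hat{y}_t)$, which also preserves $n_{ucb}$ — I may assume the maximiser on the left is the ``maximisation'' term $\op{mei}(0|m_t-\hat{y}_t,c_t)=\mathbb{E}[(\max_n F_n-\hat{y}_t)^+]$. Finally, since $\op{mei}$ and $\op{ei}$ are positively homogeneous of degree one in (mean, standard deviation, threshold) jointly, and every quantity in the inequality scales this way, I would rescale so that $\max_n c_t^{nn}=1$.

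The core is a deterministic comparison between the multivariate and univariate expected improvements at the UCB-selected arm. Writing $\sigma_n:=\sqrt{c_t^{nn}}$ and $v_n:=m_t^n-\hat{y}_t$, I would use the elementary pointwise bound $(\max_n V_n)^+\le\theta+\sum_n(V_n-\theta)^+$, valid for any $\theta\ge0$, with a threshold $\theta$ tied to $n_{ucb}$ so that the optimality characterising $n_{ucb}$ (in the branch where its dominant UCB value is the maximisation one, $m_t^{n_{ucb}}+\sigma_{n_{ucb}}\sqrt{2\log N}\ge m_t^n+\sigma_n\sqrt{2\log N}$ for all $n$) forces $(\theta-v_n)/\sigma_n\ge\sqrt{2\log N}$; taking expectations and using monotonicity of $\op{ei}$ then gives
\begin{align}
\op{mei}(0|m_t-\hat{y}_t,c_t)\le\theta+N\op{ei}\left(\sqrt{2\log N}\right).\nonumber
\end{align}
The Mills-ratio estimate $\op{ei}(\tau)\le\mathcal{N}(\tau)/\tau^2$ turns the second term into $N\op{ei}(\sqrt{2\log N})\le\frac{1}{2\log N\sqrt{2\pi}}$, exactly the lower-order correction in \ref{eq:bound_alpha_condition}, while the $\theta$ term contributes the $\sqrt{2\log N}$ there. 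A short case split on whether $\theta\ge0$ (equivalently, whether the largest UCB value reaches $\hat{y}_t$) and on which of the two $\op{ei}$ terms in \ref{lemma:bound_rhs} is larger handles the remaining configurations: when $\theta<0$ the left-hand $\op{mei}$ is already bounded by $\frac{1}{2\log N\sqrt{2\pi}}$ alone, and the dominating $\op{ei}$ term is then the one associated with the other ``side''.

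With this comparison, substituting $x:=(\hat{y}_t-m_t^{n_{ucb}})/\sigma_{n_{ucb}}$ and using $\op{ei}(\hat{y}_t|m_t^{n_{ucb}},\sigma_{n_{ucb}})=\sigma_{n_{ucb}}\op{ei}(x)$ converts the target into $2\big(\beta(\sqrt{2\log N}+\tfrac{1}{2\log N\sqrt{2\pi}})-x\big)\le\alpha\op{ei}(x)$, which is precisely \ref{eq:bound_alpha_condition} taken in the worst case over $x$. The hypothesis $\beta\le1-\frac{1}{\sqrt{2\pi}(2\log N)^{3/2}}$ should enter here: using $\frac{1}{1+y}\ge1-y$ it is equivalent to $\beta\big(\sqrt{2\log N}+\tfrac{1}{2\log N\sqrt{2\pi}}\big)\le\sqrt{2\log N}$, which is what pins the relevant range of $x$ (in particular $x\le\sqrt{2\log N}$) into the region where the $\op{mei}$ bound above is the right one. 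I expect the main obstacle to be exactly this analytic core — choosing $\theta$ so that no factor worse than $\sqrt{2\log N}$ and $\frac{1}{2\log N\sqrt{2\pi}}$ survives, and organising the case analysis so that each configuration closes using only the slack that is actually available; the homogeneity and symmetry reductions, the Mills-ratio estimate, and the final substitution are routine.
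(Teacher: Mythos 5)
Your high-level architecture matches the paper's: reduce to the per-step inequality via \ref{lemma:extremization_main_inequality}, bound the two sides with \ref{lemma:bound_lhs} and \ref{lemma:bound_rhs}, control the multivariate expected improvement by a UCB-tied threshold plus a union bound and a Mills-ratio estimate (this is exactly \ref{lemma:bound_nei}), and normalize to the scalar $x=(\hat{y}_t-m_t^{n_{ucb}})/\sqrt{c_t^{n_{ucb}n_{ucb}}}$. However, there is a genuine gap at your very first reduction: you discard the term $(1-\beta)(\check{y}_t-\hat{y}_t)$ on the grounds that it is nonpositive. That term is not slack, it is load-bearing, and without it the per-step inequality you are left to prove does not follow from \ref{eq:bound_alpha_condition} (and in fact fails for the $\alpha$ needed in the main theorem when $T\ll N$). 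Concretely, after your reductions the numerator retains the additive correction $2\beta\cdot\frac{1}{2\log N\sqrt{2\pi}}$, a strictly positive quantity, while the denominator $\op{ei}(x)$ tends to $0$ as $x\to\infty$ (take $m=0$, $c=I$, $\hat{y}_t$ large), so the supremum of the ratio over admissible configurations is $+\infty$. A quantitative witness: $N$ i.i.d.\ standard-normal arms with $\hat{y}_t$ slightly above $\sqrt{2\log N}$ give $\op{mei}(0|m-\hat{y}_t,c)\gtrsim \frac{N}{2}\op{ei}(\hat{y}_t)$, so the ratio is of order $N$, exceeding the $\alpha\approx T/\mathrm{polylog}(T)$ that the main theorem requires. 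In the paper the retained term, which after centering becomes $-(1-\beta)\hat{y}$ and after normalization $-(1-\beta)x$, is precisely what makes the numerator negative for $x\ge\sqrt{2\log N}$ (so that region can be dismissed) and what upgrades the coefficient of $x$ from $-\beta$ to $-1$: your stated final target $2\bigl(\beta(\sqrt{2\log N}+\tfrac{1}{2\log N\sqrt{2\pi}})-x\bigr)\le\alpha\op{ei}(x)$ carries the full $-x$, which cannot be produced from $\beta\max\{\sqrt{2\log N}-x,0\}$ alone once the $(1-\beta)$ term is gone.

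A second, related gap: the correction term in the $\op{mei}$ bound carries $\max_n\sqrt{c^{nn}}$, while the denominator carries $\sqrt{c^{n_{ucb}n_{ucb}}}$, and these need not coincide; your rescaling to $\max_n c^{nn}=1$ does not remove the ratio $\max_n\sqrt{c^{nn}}/\sqrt{c^{n_{ucb}n_{ucb}}}$, which can be arbitrarily large. The paper resolves this via $\max_n\sqrt{c^{nn}}\le\sqrt{c^{n_{ucb}n_{ucb}}}+m^{n_{ucb}}/\sqrt{2\log N}$ (a consequence of the optimality defining $n_{ucb}$ and $m\ge0$), and then absorbs the resulting $\beta m^{n_{ucb}}/(\sqrt{2\pi}(2\log N)^{3/2})$ term into $(1-\beta)\hat{y}$ using exactly the hypothesis $\beta\le1-\frac{1}{\sqrt{2\pi}(2\log N)^{3/2}}$ (the step from \ref{eq:regretChain4} to \ref{eq:regretChain5}). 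You assign that hypothesis a different role, namely pinning $x\le\sqrt{2\log N}$, which it does not accomplish on its own. Both gaps are repaired by carrying the $(1-\beta)(\check{y}_t-\hat{y}_t)$ term through the whole argument; the remainder of your outline is sound.
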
 

\begin{proof} According to \ref{lemma:extremization_main_inequality},
	we have 
	\begin{align}
		& \ref{eq:bound_alpha_implication}\nonumber \\
		& \Uparrow\nonumber \\
		& \frac{\mathbb{E}\left[\beta\hat{\check{F}}-\hat{\check{Y}}_{t}|m_{t},c_{t},\hat{y}_{t},\check{y}_{t}\right]}{\mathbb{E}\left[\hat{\check{Y}}_{t+1}-\hat{\check{Y}}_{t}|m_{t},c_{t},\hat{y}_{t},\check{y}_{t}\right]}\le\alpha\nonumber \\
		& \quad\quad\forall t\in\{1:T-1\},m_{t}\in\mathbb{R}^{N},c_{t}\in\mathbb{S}_{+}^{N},\hat{y}_{t}\ge\check{y}_{t}\in\mathbb{R}.\label{eq:regretChain-1}
	\end{align}
	In the following, we will find a simpler expression which implies
	\ref{eq:regretChain-1} and therefore \ref{eq:bound_alpha_implication}.
	Then we will simplify the new expression further, until we finally
	arrive at \ref{eq:bound_alpha_condition} through an unbroken chain
	of implications.
	
	Using the lower bound from \ref{lemma:bound_rhs} and the upper bound
	from \ref{lemma:bound_lhs} we can write 
	\begin{align}
		& \ref{eq:regretChain-1}\nonumber \\
		& \Uparrow\nonumber \\
		& \frac{\beta2\max\left\{ \op{mei}(0|m-\hat{y},c),\op{mei}(0|\check{y}-m,c)\right\} +(1-\beta)(-\hat{y}+\check{y})}{\max\left\{ \op{ei}\left(\hat{y}\bigg|m^{n_{ucb}},\sqrt{c^{n_{ucb}n_{ucb}}}\right),\op{ei}\left(-\check{y}\bigg|-m^{n_{ucb}},\sqrt{c^{n_{ucb}n_{ucb}}}\right)\right\} }\le\alpha\nonumber \\
		& \quad\quad\forall m\in\mathbb{R}^{N},c\in\mathbb{S}_{+}^{N},\hat{y}\ge\check{y}\in\mathbb{R},\label{eq:regretChain0}\\
		& \quad\quad\quad\quad n_{ucb}=\op{argmax}_{n\in[N]}\max\left\{ -\hat{y}+m^{n}+\sqrt{c^{nn}2\log N},\check{y}-m^{n}+\sqrt{c^{nn}2\log N}\right\} \nonumber 
	\end{align}
	where we have dropped the time indices, since they are irrelevant
	here. It is easy to see that all terms in \ref{eq:regretChain0} are
	invariant to a common shift in $m,\hat{y}$ and $\check{y}$. Hence,
	we can impose a constraint on these variables, without changing the
	condition. We choose the constraint $\hat{y}=-\check{y}$ to simplify
	the expression 
	\begin{align}
		& \ref{eq:regretChain0}\nonumber \\
		& \Updownarrow\nonumber \\
		& 2\frac{\beta\max\left\{ \op{mei}(0|m-\hat{y},c),\op{mei}(0|-\hat{y}-m,c)\right\} -(1-\beta)\hat{y}}{\max\left\{ \op{ei}\left(\hat{y}\bigg|m^{n_{ucb}},\sqrt{c^{n_{ucb}n_{ucb}}}\right),\op{ei}\left(\hat{y}\bigg|-m^{n_{ucb}},\sqrt{c^{n_{ucb}n_{ucb}}}\right)\right\} }\le\alpha\nonumber \\
		& \quad\quad\forall m\in\mathbb{R}^{N},c\in\mathbb{S}_{+}^{N},\hat{y}\in\mathbb{R}_{\ge0},\label{eq:regretChain1}\\
		& \quad\quad\quad\quad n_{ucb}=\op{argmax}_{n\in[N]}\max\left\{ -\hat{y}+m^{n}+\sqrt{c^{nn}2\log N},-\hat{y}-m^{n}+\sqrt{c^{nn}2\log N}\right\} .\nonumber 
	\end{align}
	Clearly, the denominator is invariant with respect to any sign flips
	in the elements of $m$. The numerator is maximized if all elements
	of $m$ have the same sign, no matter if positive or negative. Hence,
	we can restrict the above conditions to $m$ with positive entries,
	which means in all maximum operators the left term is active 
	\begin{align}
		& \ref{eq:regretChain1}\nonumber \\
		& \Updownarrow\nonumber \\
		& 2\frac{\beta\op{mei}(0|m-\hat{y},c)-(1-\beta)\hat{y}}{\op{ei}\left(\hat{y}\bigg|m^{n_{ucb}},\sqrt{c^{n_{ucb}n_{ucb}}}\right)}\le\alpha\nonumber \\
		& \quad\quad\forall m\in\mathbb{R}_{\ge0}^{N},c\in\mathbb{S}_{+}^{N},\hat{y}\in\mathbb{R}_{\ge0},n_{ucb}=\op{argmax}_{n\in[N]}\left(m^{n}+\sqrt{c^{nn}2\log N}\right).\label{eq:regretChain2}
	\end{align}
	Inserting the bound from \ref{lemma:bound_nei} we have 
	\begin{align}
		& \ref{eq:regretChain2}\nonumber \\
		& \Uparrow\nonumber \\
		& 2\frac{\beta\left(\max\left\{ \max_{n\in[N]}(m^{n}-\hat{y}+\sqrt{c^{nn}2\log N}),0\right\} +\frac{\max_{n\in[N]}\sqrt{c^{nn}}}{2\sqrt{2\pi}\log\left(N\right)}\right)-(1-\beta)\hat{y}}{\op{ei}\left(\hat{y}\bigg|m^{n_{ucb}},\sqrt{c^{n_{ucb}n_{ucb}}}\right)}\le\alpha\nonumber \\
		& \quad\quad\forall m\in\mathbb{R}_{\ge0}^{N},c\in\mathbb{S}_{+}^{N},\le\alpha,\hat{y}\in\mathbb{R}_{\ge0},n_{ucb}=\op{argmax}_{n\in[N]}\left(m^{n}+\sqrt{c^{nn}2\log N}\right)\\
		& \Updownarrow\nonumber \\
		& 2\frac{\beta\left(\max\left\{ m^{n_{ucb}}-\hat{y}+\sqrt{c^{n_{ucb}n_{ucb}}2\log N},0\right\} +\frac{\max_{n\in[N]}\sqrt{c^{nn}}}{2\sqrt{2\pi}\log\left(N\right)}\right)-(1-\beta)\hat{y}}{\op{ei}\left(\hat{y}\bigg|m^{n_{ucb}},\sqrt{c^{n_{ucb}n_{ucb}}}\right)}\le\alpha\nonumber \\
		& \quad\quad\forall m\in\mathbb{R}_{\ge0}^{N},c\in\mathbb{S}_{+}^{N},\le\alpha,\hat{y}\in\mathbb{R}_{\ge0},n_{ucb}=\op{argmax}_{n\in[N]}\left(m^{n}+\sqrt{c^{nn}2\log N}\right).\label{eq:regretChain3}
	\end{align}
	It holds for any $n\in[N]$ that $m^{n}\ge0$ and 
	\[
	m^{n_{ucb}}+\sqrt{c^{n_{ucb}n_{ucb}}}\sqrt{2\log N}\ge m^{n}+\sqrt{c^{nn}2\log N},
	\]
	from which it follows that 
	\begin{align}
		\frac{m^{n_{ucb}}}{\sqrt{2\log N}}+\sqrt{c^{n_{ucb}n_{ucb}}} & \ge\sqrt{c^{nn}}.
	\end{align}
	Using this fact we can write 
	\begin{align}
		& \ref{eq:regretChain3}\nonumber \\
		& \Uparrow\nonumber \\
		& 2\frac{\beta\left(\max\{m^{n_{ucb}}-\hat{y}+\sqrt{c^{n_{ucb}n_{ucb}}}\sqrt{2\log N},0\}+\frac{\sqrt{c^{n_{ucb}n_{ucb}}}}{2\log N\sqrt{2\pi}}+\frac{m^{n_{ucb}}}{\sqrt{2\pi}\left(2\log N\right)^{3/2}}\right)-(1-\beta)\hat{y}}{\op{ei}\left(\hat{y}\bigg|m^{n_{ucb}},\sqrt{c^{n_{ucb}n_{ucb}}}\right)}\le\alpha\nonumber \\
		& \quad\quad\forall m\in\mathbb{R}_{\ge0}^{N},c\in\mathbb{S}_{+}^{N},\le\alpha,\hat{y}\in\mathbb{R}_{\ge0},n_{ucb}=\op{argmax}_{n\in[N]}\left(m^{n}+\sqrt{c^{nn}2\log N}\right).\label{eq:regretChain4}
	\end{align}
	For any $\beta\le1-\frac{1}{1+\sqrt{2\pi}\left(2\log N\right)^{3/2}}\le1-\frac{1}{\sqrt{2\pi}\left(2\log N\right)^{3/2}}$
	we have 
	\begin{align}
		& \ref{eq:regretChain4}\nonumber \\
		& \Uparrow\nonumber \\
		& 2\frac{\beta\left(\max\{m^{n_{ucb}}-\hat{y}+\sqrt{c^{n_{ucb}n_{ucb}}}\sqrt{2\log N},0\}+\frac{\sqrt{c^{n_{ucb}n_{ucb}}}}{2\log N\sqrt{2\pi}}\right)-(1-\beta)(\hat{y}-m^{n_{ucb}})}{\op{ei}\left(\hat{y}\bigg|m^{n_{ucb}},\sqrt{c^{n_{ucb}n_{ucb}}}\right)}\le\alpha\nonumber \\
		& \quad\quad\forall m\in\mathbb{R}_{\ge0}^{N},c\in\mathbb{S}_{+}^{N},\le\alpha,\hat{y}\in\mathbb{R}_{\ge0},n_{ucb}=\op{argmax}_{n\in[N]}\left(m^{n}+\sqrt{c^{nn}2\log N}\right).\label{eq:regretChain5}
	\end{align}
	According to \ref{def:ei}, we have 
	\[
	\op{ei}\left(\hat{y}\bigg|\mu,\sqrt{c^{n_{ucb}n_{ucb}}}\right)=\sqrt{c^{n_{ucb}n_{ucb}}}\op{ei}\left(\frac{\hat{y}-m^{n_{ucb}}}{\sqrt{c^{n_{ucb}n_{ucb}}}}\right).
	\]
	Using this fact, we obtain 
	\begin{align}
		& \ref{eq:regretChain5}\nonumber \\
		& \Updownarrow\nonumber \\
		& 2\frac{\beta\left(\max\left\{ \frac{m^{n_{ucb}}-\hat{y}}{\sqrt{c^{n_{ucb}n_{ucb}}}}+\sqrt{2\log N},0\right\} +\frac{1}{2\log N\sqrt{2\pi}}\right)-(1-\beta)\frac{\hat{y}-m^{n_{ucb}}}{\sqrt{c^{n_{ucb}n_{ucb}}}}}{\op{ei}\left(\frac{\hat{y}-m^{n_{ucb}}}{\sqrt{c^{n_{ucb}n_{ucb}}}}\right)}\le\alpha\nonumber \\
		& \quad\quad\forall m\in\mathbb{R}_{\ge0}^{N},c\in\mathbb{S}_{+}^{N},\le\alpha,\hat{y}\in\mathbb{R}_{\ge0},n_{ucb}=\op{argmax}_{n\in[N]}\left(m^{n}+\sqrt{c^{nn}2\log N}\right).\label{eq:regretChain6}
	\end{align}
	Defining $x:=\frac{\hat{y}-m^{n_{ucb}}}{\sqrt{c^{n_{ucb}n_{ucb}}}}$
	we can simplify this condition as 
	\begin{align}
		& \ref{eq:regretChain6}\nonumber \\
		& \Updownarrow\nonumber \\
		& 2\frac{\beta\left(\max\left\{ \sqrt{2\log N}-x,0\right\} +\frac{1}{2\log N\sqrt{2\pi}}\right)-(1-\beta)x}{\op{ei}\left(x\right)}\le\alpha\quad\quad\forall x\in\mathbb{R}.\label{eq:regretChain7}
	\end{align}
	For any $x\ge\sqrt{2\log N}$ the numerator of the left hand side
	is negative 
	\begin{align}
		\beta\left(\frac{1}{2\log N\sqrt{2\pi}}\right)-(1-\beta)x & \le\beta\left(\frac{1}{2\log N\sqrt{2\pi}}\right)-(1-\beta)\sqrt{2\log N}\\
		& \le(1-\frac{1}{\sqrt{2\pi}\left(2\log N\right)^{3/2}})\left(\frac{1}{2\log N\sqrt{2\pi}}\right)-\frac{1}{\sqrt{2\pi}\left(2\log N\right)}\\
		& =-\frac{1}{2\log N\sqrt{2\pi}}\left(\frac{1}{2\log N\sqrt{2\pi}}\right)\\
		& \le0
	\end{align}
	and since $\alpha$ and the denominator are both positive, \ref{eq:regretChain7}
	is satisfied. Hence, we only need to consider the case where $x\le\sqrt{2\log N}$
	and can therefore write 
	\begin{align}
		& \ref{eq:regretChain7}\nonumber \\
		& \Updownarrow\nonumber \\
		& 2\frac{\beta\left(\sqrt{2\log N}+\frac{1}{2\log N\sqrt{2\pi}}\right)-x}{\op{ei}\left(x\right)}\le\alpha\quad\forall x\le\sqrt{2\log N}.
	\end{align}
	From this chain of implications and \ref{lemma:extremization_main_inequality}
	the result of \ref{theorem:bound_alpha} follows.\end{proof}

Finally, using the previous results, we can obtain the desired bound
on the regret (\ref{theorem:main}), which we restate here for convenience:

\maintheorem*

\begin{proof} We can rewrite \ref{theorem:bound_alpha}
	as 
	
	\begin{align}
		\left(1-e^{-\frac{T}{\max_{x}\left(2\frac{\beta\left(\sqrt{2\log N}+\frac{1}{2\log N\sqrt{2\pi}}\right)-x}{\op{ei}\left(x\right)}\right)}}\right)\beta & \le\frac{\mathbb{E}\left[\hat{\check{Y}}\right]}{\mathbb{E}\left[\hat{\check{F}}\right]}\\
		\forall N\ge T\ge500,\mu,\Sigma, & 0<\beta\le1-\frac{1}{\sqrt{2\pi}\left(2\log N\right)^{3/2}}\nonumber 
	\end{align}
	where we have restricted the inequality to $N\ge T\ge500$, a condition
	we need later . What is left to be done is to simplify this bound,
	such that it becomes interpretable. We can rewrite the above as 
	\begin{align}
		\left(1-e^{-\frac{T}{\max_{x}\left(2\frac{\beta\left(\frac{1}{\sqrt{2\pi}a^{2}}+a\right)-x}{\op{ei}\left(x\right)}\right)}}\right)\beta & \le\frac{\mathbb{E}\left[\hat{\check{Y}}\right]}{\mathbb{E}\left[\hat{\check{F}}\right]}\label{eq:9990}\\
		\forall N\ge T\ge500,\mu,\Sigma, & 0<\beta\le1-\frac{1}{\sqrt{2\pi}a^{3}},a=\sqrt{2\log N}.\nonumber 
	\end{align}
	
	\subsection*{Choosing a $\beta$}
	
	We choose 
	\begin{align}
		\beta & \text{=}\frac{b-\frac{\text{ei}(b)}{\text{ei}'(b)}}{\frac{1}{\sqrt{2\pi}a^{2}}+a}\label{eq:chosen_beta}
	\end{align}
	where $\text{ei}'$ is the derivative of $\text{ei}$, and 
	\begin{equation}
	b=\sqrt{\sqrt{2\log T}^{2}-2\log\left(3\ 2^{-3/2}\sqrt{2\log T}^{3}\right)}.\label{eq:def_b}
	\end{equation}
	Before we can continue, we need to show that this $\beta$ satisfies
	the condition in \ref{eq:9990}. First of all, from \ref{eq:def_b}
	and the conditions in \ref{eq:9990} we can derive some relations
	which will be useful later on 
	\begin{align}
		2.17\le b & \le\sqrt{a^{2}-2\log\left(3\ 2^{-3/2}a^{3}\right)}\le a.\label{eq:relations_ab}
	\end{align}
	It is easy to see that $0<\beta$ holds, it remains to be shown that
	\begin{align}
		\beta & \le1-\frac{1}{\sqrt{2\pi}a^{3}}.
	\end{align}
	Inserting the definition of $\beta$ \ref{eq:chosen_beta} and simplifying
	we have 
	\begin{align}
		\frac{b-\frac{\text{ei}(b)}{\text{ei}'(b)}}{\frac{1}{\sqrt{2\pi}a^{2}}+a} & \le1-\frac{1}{\sqrt{2\pi}a^{3}}\\
		b-\frac{\text{ei}(b)}{\text{ei}'(b)} & \leq a-\frac{1}{2\pi a^{5}}.
	\end{align}
	Using \ref{def:ei} and the lower bound in \ref{lemma:bounds_on_normal_ccdf},
	we obtain a sufficient condition 
	\begin{equation}
	\frac{b^{3}}{b^{2}-1}\leq a-\frac{1}{2\pi a^{5}}.
	\end{equation}
	Since we have $b\le a$ (see \ref{eq:relations_ab}) we obtain the
	sufficient condition 
	\begin{equation}
	\frac{1}{2\pi b^{5}}+\frac{b^{3}}{b^{2}-1}\leq a.
	\end{equation}
	From $2.17\le b$ (see \ref{eq:relations_ab}) it follows that $2\pi b^{5}\ge b^{2}-1$,
	and hence we can further simplify to obtain a sufficient condition
	\begin{equation}
	b+\frac{1}{b-1}\leq a.
	\end{equation}
	Given \ref{eq:relations_ab} it is easy to see that both sides are
	positive, hence we can square each side to obtain 
	\begin{equation}
	b^{2}+\frac{2b}{b-1}+\frac{1}{(b-1)^{2}}\leq a^{2}.\label{eq:beta_condition_final}
	\end{equation}
	Now we will show that this condition is satisfied by \ref{eq:def_b}.
	We have from \ref{eq:relations_ab} 
	\begin{align}
		b & \le\sqrt{a^{2}-2\log\left(3\ 2^{-3/2}a^{3}\right)}\\
		2\log\left(3\ 2^{-3/2}a^{3}\right)+b^{2} & \leq a^{2}
	\end{align}
	and since $b\le a$, we have 
	\begin{equation}
	2\log\left(3\ 2^{-3/2}b^{3}\right)+b^{2}\leq a^{2}.
	\end{equation}
	This implies \ref{eq:beta_condition_final}, to see this we use the
	above inequality to bound $a^{2}$ in \ref{eq:beta_condition_final}
	\begin{align}
		b^{2}+\frac{2b}{b-1}+\frac{1}{(b-1)^{2}} & \le2\log\left(3\ 2^{-3/2}b^{3}\right)+b^{2}\\
		\frac{2b}{b-1}+\frac{1}{(b-1)^{2}} & \le2\log\left(3\ 2^{-3/2}b^{3}\right).
	\end{align}
	It is easy to verify that for any $b\ge2.17$, the left-hand side
	is decreasing and the right-hand side is increasing. Hence, it is
	sufficient to show that it holds for $b=2.17$ which is easily done
	by evaluating at that value. This concludes the proof that our choice
	of $\beta$ \ref{eq:chosen_beta} satisfies the conditions in \ref{eq:9990}.
	We can now insert this value to obtain
	
	\begin{align}
		\left(1-e^{-\frac{T}{\max_{x}\left(2\frac{b-\frac{\text{ei}(b)}{\text{ei}'(b)}-x}{\op{ei}\left(x\right)}\right)}}\right)\frac{b-\frac{\text{ei}(b)}{\text{ei}'(b)}}{\frac{1}{\sqrt{2\pi}a^{2}}+a} & \le\frac{\mathbb{E}\left[\hat{\check{Y}}\right]}{\mathbb{E}\left[\hat{\check{F}}\right]}\label{eq:9990-1-1}\\
		\forall N\ge T\ge500,\bar{\mu},\Sigma,a=\sqrt{2\log N}, & b=\sqrt{\sqrt{2\log T}^{2}-2\log\left(3\ 2^{-3/2}\sqrt{2\log T}^{3}\right)}\nonumber 
	\end{align}
	
	\subsection*{Optimizing for $x$}
	
	Here we show that
	
	\begin{equation}
	\max_{x}\left(2\frac{b-\frac{\text{ei}(b)}{\text{ei}'(b)}-x}{\op{ei}\left(x\right)}\right)\le-\frac{2}{\text{ei}'(b)}.
	\end{equation}
	We have 
	\begin{align}
		\max_{x}\left(2\frac{b-\frac{\text{ei}(b)}{\text{ei}'(b)}-x}{\op{ei}\left(x\right)}\right) & =\max_{\delta}\left(2\frac{b-\frac{\text{ei}(b)}{\text{ei}'(b)}-b-\delta}{\op{ei}\left(b+\delta\right)}\right)\\
		& =\max_{\delta}\left(2\frac{\text{ei}(b)+\delta\text{ei}'(b)}{-\text{ei}'(b)\op{ei}\left(b+\delta\right)}\right).
	\end{align}
	Since the ei function is convex, we have $\text{ei}(b)+\delta\text{ei}'(b)\leq\text{ei}(b+\delta)$.
	Using this and the fact that the denominator is positive (since ei
	is always positive and $\op{ei}'$ is always negative), we have 
	\begin{equation}
	\max_{x}\left(2\frac{b-\frac{\text{ei}(b)}{\text{ei}'(b)}-x}{\op{ei}\left(x\right)}\right)\le-\frac{2}{\text{ei}'(b)}.
	\end{equation}
	Inserting this result into \ref{eq:9990-1-1}, we obtain 
	\begin{align}
		\left(1-e^{\frac{T\text{ei}'(b)}{2}}\right)\frac{b-\frac{\text{ei}(b)}{\text{ei}'(b)}}{\frac{1}{\sqrt{2\pi}a^{2}}+a} & \le\frac{\mathbb{E}\left[\hat{\check{Y}}\right]}{\mathbb{E}\left[\hat{\check{F}}\right]}\label{eq:9990-1-1-1}\\
		\forall N\ge T\ge500,\mu,\Sigma,a=\sqrt{2\log N}, & b=\sqrt{\sqrt{2\log T}^{2}-2\log\left(3\ 2^{-3/2}\sqrt{2\log T}^{3}\right)}.\nonumber 
	\end{align}
	
	\subsection*{Simplifying the bound further}
	
	Now, all that is left to do is to simplify this bound a bit further.
	
	\subsubsection*{Bounding the left factor}
	
	First of all, we show that the left factor satisfies 
	\begin{equation}
	1-e^{\frac{1}{2}T\text{ei}'(b)}\geq1-T^{-\frac{1}{2\sqrt{\pi}}}.\label{eq:left_factor_bound}
	\end{equation}
	Using \ref{def:ei} and the lower bound from \ref{lemma:bounds_on_normal_ccdf},
	we obtain 
	\begin{align}
		1-e^{\frac{1}{2}T\text{ei}'(b)} & \ge1-e^{-\frac{\left(b^{2}-1\right)e^{-\frac{b^{2}}{2}}T}{2\sqrt{2\pi}b^{3}}}\\
		& \ge1-e^{-\frac{e^{-\frac{b^{2}}{2}}T}{3\sqrt{2\pi}b}}
	\end{align}
	where the second inequality is easily seen to hold true, since we
	have $b\ge2.17$. Inserting \ref{eq:def_b}, and bounding further
	we obtain 
	\begin{align}
		1-e^{-\frac{e^{-\frac{b^{2}}{2}}T}{3\sqrt{2\pi}b}} & =1-e^{-\frac{\log^{\frac{3}{2}}(T)}{2\sqrt{\pi}\sqrt{\log(T)-\log\left(3\log^{\frac{3}{2}}(T)\right)}}}\\
		& \ge1-e^{-\frac{\log^{\frac{3}{2}}(T)}{2\sqrt{\pi}\sqrt{\log(T)}}}\\
		& =1-T^{-\frac{1}{2\sqrt{\pi}}}
	\end{align}
	from which \ref{eq:left_factor_bound} follows.
	
	\subsubsection*{Bounding the right factor}
	
	Now we will show that the right factor from \ref{eq:9990-1-1-1} satisfies
	\begin{equation}
	\frac{b-\frac{\text{ei}(b)}{\text{ei}'(b)}}{\frac{1}{\sqrt{2\pi}a^{2}}+a}\geq\sqrt{\frac{\log(T)-\log\left(3\log^{\frac{3}{2}}(T)\right)}{\log(N)}}.\label{eq:right_factor_bound}
	\end{equation}
	Using \ref{def:ei} and the upper bound from \ref{lemma:bounds_on_normal_ccdf},
	we obtain 
	\begin{equation}
	\frac{b-\frac{\text{ei}(b)}{\text{ei}'(b)}}{\frac{1}{\sqrt{2\pi}a^{2}}+a}\ge\frac{2\sqrt{\pi}a^{2}b^{5}}{\left(2\sqrt{\pi}a^{3}+\sqrt{2}\right)\left(b^{4}-b^{2}+3\right)}.\label{eq:right_factor_bound-1}
	\end{equation}
	Now, to lower bound this further, we show that 
	\begin{equation}
	\left(2\sqrt{\pi}a^{3}+\sqrt{2}\right)\left(b^{4}-b^{2}+3\right)\le\left(2\sqrt{\pi}a^{3}\right)b^{4}.\label{eq:denominator_boundssfdf}
	\end{equation}
	Equivalently, we can show that 
	\begin{equation}
	-2\sqrt{\pi}a^{3}b^{2}+6\sqrt{\pi}a^{3}+\sqrt{2}b^{4}-\sqrt{2}b^{2}+3\sqrt{2}\leq0.
	\end{equation}
	Since $b\ge2.17$, we have $3\sqrt{2}-\sqrt{2}b^{2}\leq0$, and hence
	
	\begin{equation}
	-2\sqrt{\pi}a^{3}b^{2}+6\sqrt{\pi}a^{3}+\sqrt{2}b^{4}\leq0\label{eq:sdfasdfasdf-2}
	\end{equation}
	is a sufficient condition. The derivative of the left-hand side 
	\begin{align}
		4\sqrt{2}b^{2}-4\sqrt{\pi}a^{3} & \le4\sqrt{2}a^{2}-4\sqrt{\pi}a^{3}\\
		& =(4\sqrt{2}-4\sqrt{\pi}a)a^{2}
	\end{align}
	is always negative (since $a\ge2.17$). Hence, it is sufficient to
	show that \ref{eq:sdfasdfasdf-2} holds for the minimal $b=2.17$,
	which can easily be verified to hold true for any $a\ge2.17$. Hence,
	we have shown that \ref{eq:denominator_boundssfdf} holds, and inserting
	it into \ref{eq:right_factor_bound-1}, we obtain 
	\begin{align}
		\frac{b-\frac{\text{ei}(b)}{\text{ei}'(b)}}{\frac{1}{\sqrt{2\pi}a^{2}}+a} & \ge\frac{2\sqrt{\pi}a^{2}b^{5}}{\left(2\sqrt{\pi}a^{3}\right)b^{4}}\\
		& =\frac{b}{a}\\
		& =\sqrt{\frac{\log(T)-\log\left(3\log^{\frac{3}{2}}(T)\right)}{\log(N)}}.
	\end{align}
	
	\subsection*{Final bound}
	
	Finally, inserting \ref{eq:left_factor_bound} and \ref{eq:right_factor_bound}
	into \ref{eq:9990-1-1-1}, we obtain 
	\begin{align}
		\left(1-T^{-\frac{1}{2\sqrt{\pi}}}\right)\sqrt{\frac{\log(T)-\log\left(3\log^{\frac{3}{2}}(T)\right)}{\log(N)}} & \le\frac{\mathbb{E}\left[\hat{\check{Y}}\right]}{\mathbb{E}\left[\hat{\check{F}}\right]}\quad\forall N\ge T\ge500,\mu,\Sigma.\label{eq:9990-1-1-1-1}
	\end{align}
	
	This implies straightforwardly the bound on the regret 
	\begin{equation}
	\frac{\mathbb{E}\left[\hat{\check{F}}\right]-\mathbb{E}\left[\hat{\check{Y}}\right]}{\mathbb{E}\left[\hat{\check{F}}\right]}\le1-\left(1-T^{-\frac{1}{2\sqrt{\pi}}}\right)\sqrt{\frac{\log(T)-\log\left(3\log^{\frac{3}{2}}(T)\right)}{\log(N)}}\quad\forall N\ge T\ge500,\mu,\Sigma.
	\end{equation}
	
\end{proof}

\section{Proof of \ref{theorem:main_standard}}

\label{sec:proof_standard} 

The proof of \ref{theorem:main_standard} is very similar to the one
of \ref{theorem:main} in \ref{sec:proof}. Here we discuss the parts
that are different.

 \begin{lemma} \label{lemma:bo_main_inequality_standard}

For any problem of the type defined in \ref{sec:po_problem_statement},
we have for any $\alpha,\beta>0$ 
\begin{align}
\beta\mathbb{E}\left[\hat{\check{F}}\right]-\mathbb{E}\left[\hat{Y}_{t}-\check{F}\right] & \le\alpha\left(\mathbb{E}\left[\hat{Y}_{t+1}-\hat{Y}_{t}\right]\right)\quad\forall t\in\{1:T-1\}\label{eq:bo_main_inequality_standard}\\
 & \Downarrow\nonumber \\
(1-e^{-\frac{T-1}{\alpha}})\beta\mathbb{E}\left[\hat{\check{F}}\right] & \le\mathbb{E}\left[\hat{Y}_{T}-\check{F}\right].
\end{align}
i.e. the first inequality implies the second inequality. \end{lemma} 

\begin{proof}

This result is obtained from \ref{lemma:bo_main_inequality} by replacing
$\hat{\check{Y}}_{t}$ with $\hat{Y}_{t}-\check{F}$. It is easy to
see that the proof of \ref{lemma:bo_main_inequality} goes through
with this change.

\end{proof}

Hence, if for some $\alpha,\beta>0$ we can show that \ref{eq:bo_main_inequality_standard}
holds, \ref{lemma:bo_main_inequality_standard} yields a lower bound
on the expected utility.

\begin{lemma} \label{lemma:extremization_main_inequality_standard}For
any problem of the type defined in \ref{sec:po_problem_statement}
we have for any $\alpha,\beta>0$ 
\begin{align}
\mathbb{E}\left[\beta\hat{\check{F}}-(\hat{Y}_{t}-\check{F})|m_{t},c_{t},\hat{y}_{t}\right] & \le\alpha\mathbb{E}\left[\hat{Y}_{t+1}-\hat{Y}_{t}|m_{t},c_{t},\hat{y}_{t}\right]\label{eq:extremization_main_inequality_standard}\\
 & \quad\quad\quad\quad\forall t\in\{1:T-1\},m_{t}\in\mathbb{R}^{N},c_{t}\in\mathbb{S}_{+}^{N},\hat{y}_{t}\in\mathbb{R}\nonumber \\
 & \Downarrow\nonumber \\
(1-e^{-\frac{T-1}{\alpha}})\beta\mathbb{E}\left[\hat{\check{F}}\right] & \le\mathbb{E}\left[\hat{Y}-\check{F}\right].
\end{align}
\end{lemma} 

\begin{proof}

The proof follows the same logic as the one from \ref{lemma:extremization_main_inequality}.

\end{proof}

\begin{theorem}\label{theorem:bound_alpha_standard} For any instance
$(N,T,\mu,\Sigma)$ of the problem defined in \ref{sec:po_problem_statement},
if we follow either the EI (\ref{def:ei_extremization_standard})
or the UCB (\ref{def:ucb_extremization_standard}) strategy, we have
for any $\alpha>0$ and any $0<\beta\le1-\frac{1}{\sqrt{2\pi}\left(2\log N\right)^{3/2}}$
\begin{align}
\max_{x}\left(2\frac{\beta\left(\sqrt{2\log N}+\frac{1}{2\log N\sqrt{2\pi}}\right)-x}{\op{ei}\left(x\right)}\right) & \le\alpha\label{eq:bound_alpha_condition_standard}\\
 & \Downarrow\nonumber \\
(1-e^{-\frac{T-1}{\alpha}})\beta\mathbb{E}\left[\hat{\check{F}}\right] & \le\mathbb{E}\left[\hat{Y}-\check{F}\right]\label{eq:bound_alpha_implication_standard}
\end{align}
i.e. the first line implies the second. \end{theorem} 
\begin{proof}

According to \ref{lemma:extremization_main_inequality_standard},
we have 
\begin{align}
 & \ref{eq:bound_alpha_implication_standard}\nonumber \\
 & \Uparrow\nonumber \\
 & \frac{\mathbb{E}\left[\beta\hat{\check{F}}-(\hat{Y}_{t}-\check{F})|m_{t},c_{t},\hat{y}_{t}\right]}{\mathbb{E}\left[\hat{Y}_{t+1}-\hat{Y}_{t}|m_{t},c_{t},\hat{y}_{t}\right]}\le\alpha\nonumber \\
 & \quad\quad\forall t\in\{1:T-1\},m_{t}\in\mathbb{R}^{N},c_{t}\in\mathbb{S}_{+}^{N},\hat{y}_{t}\in\mathbb{R}.\label{eq:regretChain_standard_1}
\end{align}

Rearranging terms and plugging in the known variable $\hat{y}_t$ we have 
\begin{align}
 & \ref{eq:regretChain_standard_1}\nonumber \\
 & \Updownarrow\\
 & \frac{\beta\mathbb{E}\left[\hat{F}-\hat{y}_{t}|m_{t},c_{t}\right]-(1-\beta)\mathbb{E}\left[\hat{y}_{t}-\check{F}|m_{t},c_{t}\right]}{\mathbb{E}\left[\hat{Y}_{t+1}-\hat{y}_{t}|m_{t},c_{t},\hat{y}_{t}\right]}\le\alpha\nonumber \\
 & \quad\quad\forall t\in\{1:T-1\},m_{t}\in\mathbb{R}^{N},c_{t}\in\mathbb{S}_{+}^{N},\hat{y}_{t}\in\mathbb{R}.\label{eq:regretChain_standard_2}
\end{align}
Since the expected minimum function value $\check{F}$ is no larger
than the smallest mean $\check{m}_{t}$, we have

\begin{align}
 & \ref{eq:regretChain_standard_2}\nonumber \\
 & \Uparrow\\
 & \frac{\beta\mathbb{E}\left[\hat{F}-\hat{y}_{t}|m_{t},c_{t}\right]-(1-\beta)(\hat{y}_{t}-\check{m}_{t})}{\mathbb{E}\left[\hat{Y}_{t+1}-\hat{y}_{t}|m_{t},c_{t},\hat{y}_{t}\right]}\le\alpha\nonumber \\
 & \quad\quad\forall t\in\{1:T-1\},m_{t}\in\mathbb{R}^{N},c_{t}\in\mathbb{S}_{+}^{N},\hat{y}_{t}\in\mathbb{R}.\label{eq:regretChain_standard_3}
\end{align}
Since these terms are invariant to a common shift in all variables,
we can assume $\check{m}_{t}=0$ without loss of generality, and hence 

\begin{align}
 & \ref{eq:regretChain_standard_3}\nonumber \\
 & \Updownarrow\\
 & \frac{\beta\mathbb{E}\left[\hat{F}-\hat{y}_{t}|m_{t},c_{t}\right]-(1-\beta)\hat{y}_{t}}{\mathbb{E}\left[\hat{Y}_{t+1}-\hat{y}_{t}|m_{t},c_{t},\hat{y}_{t}\right]}\le\alpha\nonumber \\
 & \quad\quad\forall t\in\{1:T-1\},m_{t}\in\mathbb{R}_{\ge0}^{N},c_{t}\in\mathbb{S}_{+}^{N},\hat{y}_{t}\in\mathbb{R}_{\ge0}.\label{eq:regretChain_standard_4}
\end{align}
We have 
\begin{align}
\mathbb{E}\left[\hat{F}-\hat{y}_{t}|m_{t},c_{t}\right] & \le\mathbb{E}\left[\max\{\hat{F}-\hat{y}_{t},0\}|m_{t},c_{t}\right]\\
 & =\mathbb{E}\left[\max\{\hat{F},0\}|m_{t}-\hat{y}_{t},c_{t}\right]\\
 & =\text{mei}(0|m_{t}-\hat{y}_{t},c_{t})
\end{align}
with mei as defined in \ref{def:nei}. In addition, we have 
\begin{align}
\mathbb{E}\left[\hat{Y}_{t+1}-\hat{y}_{t}|m_{t},c_{t},\hat{y}_{t}\right] & =\mathbb{E}\left[\max\left\{ \hat{Y}_{t+1}-\hat{y}_{t},0\right\} |m_{t},c_{t},\hat{y}_{t}\right]\\
 & \ge\text{ei}\left(\hat{y}_{t}|m_{t}^{n_{ucb}},\sqrt{c_{t}^{n_{ucb}n_{ucb}}}\right)
\end{align}
with ei as defined in \ref{def:ei}. The equality follows from the fact that we know that $\hat{Y}_{t+1}\ge \hat{y}_t$. The inequality follows due to a similar argument as the one in the proof of \ref{lemma:bound_rhs}.

Substituting these terms and dropping the time index, we have 
\begin{align}
 & \ref{eq:regretChain_standard_4}\nonumber \\
 & \Uparrow\nonumber \\
 & \frac{\beta\op{mei}(0|m-\hat{y},c)-(1-\beta)\hat{y}}{\op{ei}\left(\hat{y}\bigg|m^{n_{ucb}},\sqrt{c^{n_{ucb}n_{ucb}}}\right)}\le\alpha\nonumber \\
 & \quad\quad\forall m\in\mathbb{R}^{N}_{\ge 0},c\in\mathbb{S}_{+}^{N},\hat{y}\in\mathbb{R}_{\ge 0}\label{eq:regretChain-1-1-1-1}\\
 & \quad\quad\quad\quad n_{ucb}=\op{argmax}_{n\in[N]}\max\left(m^{n}+\sqrt{c^{nn}2\log N}\right).
\end{align}
Note that this condition is implied by \ref{eq:regretChain2}, hence
the rest of the proof is identical to the one from \ref{theorem:bound_alpha}.
\end{proof}

Finally, using the previous results, we can obtain the desired bound
on the regret (\ref{theorem:main_standard}), which we restate here
for convenience:

\maintheoremstandard*

\begin{proof}
The proof is identical to the one from \ref{theorem:main}. We simply
substitute $\hat{\check{Y}}$ with $\hat{Y}-\check{F}$ and the proof
goes through unchanged otherwise.
\end{proof}

\section{Proof of \ref{lemma:lowerregretbound}}\label{app:lowerregretbound}
For convenience, we restate \ref{lemma:lowerregretbound} before the proof:
\lowerregretbound*
\begin{proof}Since the bandits are i.i.d., an optimal strategy
	is to select arms uniformly at random (without replacement),
	which means that at each step we observe an i.i.d. sample. It is known that for
	i.i.d. standard normal random variables $X_{1},...,X_{K}$, we have
	asymptotically 
	\begin{equation}
	\mathbb{E}[\max\{X_{1},...,X_{K}\}]\sim\sqrt{2\log K}\quad(K\to\infty),
	\end{equation}
	see e.g. \cite{massart2007concentration} page 66. It follows that
	$\forall\epsilon>0~\exists K:\forall N\ge T\ge K:$
	\begin{equation}
	\frac{\mathbb{E}[\hat{Y}]}{\mathbb{E}[\hat{F}]}\le\frac{\sqrt{2\log T}}{\sqrt{2\log N}}+\epsilon
	\end{equation}
	from which the desired result follows straightforwardly.
\end{proof}

\section{Proof of \ref{lemma:priordependentpolicies}}\label{app:priordependentpolicies}
For convenience, we restate \ref{lemma:priordependentpolicies} before the proof:
\priordependentpolicies*
\begin{proof}
	Suppose we have an optimization policy, and the prior mean $\mu$
	and covariance $\Sigma$ are both zero, i.e. the GP is zero everywhere.
	This will induce a distribution over the actions $A_{1:T}$ taken by the policy.
	Since there are $N$ possible actions and the policy is allowed to
	pick $T$ of them, there is at least one action $K$ which has a probability
	of no more than $T/N$ of being chosen. Now suppose that we set the
	prior covariance $\Sigma_{KK}$ to a nonzero value, while maintaining
	everything else zero. Note that this will not change the distribution
	over the actions taken by the policy unless it happens to pick
	$K$, hence the probability of action $K$ being selected does not
	change. Since the observed maximum $\hat{Y}$ is $\hat{F}$ if action $K$ is selected
	by the policy and zero otherwise, we have
	\begin{align}
	\mathbb{E}[\hat{Y}] & \le\mathbb{E}[\hat{F}]\frac{T}{N}
	\end{align}
	from which \ref{blablalladfa} follows straightforwardly.
\end{proof}

\section{Proof of \ref{thm:continuous}}\label{app:continuous}
For convenience, we restate \ref{thm:continuous} before the proof:
\continuoustheorem*
\begin{proof}
	The idea here is to pre-select a set of $N$ points at locations $X_{1:N}$
	on a grid and then sub-select points from this set during runtime using EI2 (\ref{def:ei_extremization}) or UCB2 (\ref{def:ucb_extremization}).
	The regret of this strategy can be bounded by 
	\begin{equation}
	\mathbb{E}\left[\sup_{a\in\mathcal{A}}G(a)-\hat{Y}\right]\le\mathbb{E}\left[\sup_{a\in\mathcal{A}}G(a)-\max_{i\in[N]}G_{X_{i}}\right]+\mathbb{E}\left[\max_{i\in[N]}G_{X_{i}}-\hat{Y}\right].\label{eq:combined}
	\end{equation}
	A bound on the first term is given by the main result in \citet{Grunewalder2010-uu}:
	\begin{align}
	\mathbb{E}\left[\sup_{a\in\mathcal{A}}G(a)-\max_{i\in[N]}G_{X_{i}}\right] & \le\sqrt{\frac{2L_{k}}{\left\lfloor N^{1/D}\right\rfloor }}\left(2\sqrt{\log\left(2N\right)}+15\sqrt{D}\right).\label{eq:mygrunewalder}
	\end{align}
	A bound on the second term can be derived straightforwardly from \ref{coro:maximization}:
	\begin{align}
	\mathbb{E}\left[\max_{i\in[N]}G_{A_{i}}-\hat{Y}\right] & \le\left(1-\left(1-T^{-\frac{1}{2\sqrt{\pi}}}\right)\sqrt{\frac{\log\left(\frac{T}{3\log^{\frac{3}{2}}(T)}\right)}{\log(N)}}\right)\max_{i\in[N]}G_{A_{i}}\\
	 & \le\sqrt{2}\sigma\left(\sqrt{\log(N)}-\left(1-T^{-\frac{1}{2\sqrt{\pi}}}\right)\sqrt{\log\left(\frac{T}{3\log^{\frac{3}{2}}(T)}\right)}\right).
	\end{align}
	where we have used the inequality $\max_{i\in[N]}G_{A_{i}}\le\sigma\sqrt{2\log(N)}$
	(see \cite{massart2007concentration}, Lemma 2.3). 
	
	Choosing $N=\left\lceil \frac{L_{k}}{\log(L_{k})}T^{1/D}\right\rceil ^{D}$
	we obtain 
	
	\begin{align}
	\mathbb{E}\left[\sup_{a\in\mathcal{A}}G(a)-\max_{i\in[N]}G_{X_{i}}\right]\le & \sqrt{\frac{2L_{k}}{\left\lceil \frac{L_{k}}{\log(L_{k})}T^{1/D}\right\rceil }}\left(2\sqrt{\log\left(2\left\lceil \frac{L_{k}}{\log(L_{k})}T^{1/D}\right\rceil ^{D}\right)}+15\sqrt{D}\right)\\
	\le & \sqrt{\frac{2\log(L_{k})}{T^{1/D}}}\left(2\sqrt{\log\left(2\left\lceil \frac{L_{k}}{\log(L_{k})}T^{1/D}\right\rceil ^{D}\right)}+15\sqrt{D}\right)
	\end{align}
	
	and 
	\begin{align}
	\mathbb{E}\left[\max_{i\in[N]}G_{A_{i}}-\hat{Y}\right] & \le\sqrt{2}\sigma\left(\sqrt{D\log\left(\left\lceil \frac{L_{k}}{\log(L_{k})}T^{1/D}\right\rceil \right)}-\left(1-T^{-\frac{1}{2\sqrt{\pi}}}\right)\sqrt{\log\left(\frac{T}{3\log^{\frac{3}{2}}(T)}\right)}\right).
	\end{align}
	Substituting these terms in \ref{eq:combined} yields \ref{eq:continuous_bound}.
	
	So far, we have only shown that \ref{eq:continuous_bound} holds when
	preselecting a grid as proposed in \citet{Grunewalder2010-uu} and
	then restricting our optimization policies to this preselected domain.
	However, it is easy to show that the result also holds when allowing
	EI2 (\ref{def:ei_extremization}) or the UCB2 (\ref{def:ucb_extremization})
	to select from the entire domain $\mathcal{A}$. The proof of \Cref{coro:maximization}
	is based on bounding the expected increment of the observed maximum
	at each time step \ref{eq:bo_main_inequality}. It is clear that by
	allowing the policy to select from a larger set of points, the expected
	increment cannot be smaller. 

	\subsubsection*{Convergence}


In the following, we show that the bound converges to 0 as $T\to\infty$.
Clearly, the first term in \ref{eq:continuous_bound} converges to
zero. The second term can be written (without the factor $\sqrt{2}\sigma$,
as it is irrelevant) as

\begin{equation}
\sqrt{D\log\left(\left\lceil \frac{L_{k}}{\log(L_{k})}T^{1/D}\right\rceil \right)}-\sqrt{\log\left(\frac{T}{3\log^{\frac{3}{2}}(T)}\right)}+T^{-\frac{1}{2\sqrt{\pi}}}\sqrt{\log\left(\frac{T}{3\log^{\frac{3}{2}}(T)}\right)}.
\end{equation}
Clearly, the last of these terms converges to zero. For the other two terms
we have 
\begin{align}
 & \sqrt{D\log\left(\left\lceil \frac{L_{k}}{\log(L_{k})}T^{1/D}\right\rceil \right)}-\sqrt{\log\left(\frac{T}{3\log^{\frac{3}{2}}(T)}\right)}\\
 & \le\sqrt{D\log\left(\frac{L_{k}}{\log(L_{k})}T^{1/D}+1\right)}-\sqrt{\log\left(\frac{T}{3\log^{\frac{3}{2}}(T)}\right)}\\
 & =\frac{D\log\left(\frac{L_{k}}{\log(L_{k})}T^{1/D}+1\right)-\log\left(\frac{T}{3\log^{\frac{3}{2}}(T)}\right)}{\sqrt{D\log\left(\frac{L_{k}}{\log(L_{k})}T^{1/D}+1\right)}+\sqrt{\log\left(\frac{T}{3\log^{\frac{3}{2}}(T)}\right)}}\\
 & =\frac{D\log\left(\frac{L_{k}}{\log(L_{k})}+T^{-1/D}\right)+\log\left(3\right)+\frac{3}{2}\log\left(\log(T)\right)}{\sqrt{D\log\left(\frac{L_{k}}{\log(L_{k})}T^{1/D}+1\right)}+\sqrt{\log\left(\frac{T}{3\log^{\frac{3}{2}}(T)}\right)}}.
\end{align}
The numerator grows with $\log\log T$, while the denominator grows
faster, with $\sqrt{\log T}$, which means that these terms also converge
to 0.

	\end{proof}

\section{Some Properties of Expected Improvement and Related Functions}

Here we prove some properties of the expected improvement $\op{ei}$  (\ref{def:ei})
and related functions, such that we can use them in the rest of the
proof.

\begin{lemma}[Convexity of expected improvement]\label{lemma:ei_is_convex}
	The standard expected improvement function $\op{ei}(x)$ (\ref{def:ei}) is convex
	on $\mathbb{R}$.
	
\end{lemma}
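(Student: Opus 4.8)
The plan is to read off convexity directly from the integral representation in \ref{def:ei},
\[
\op{ei}(x)=\int_{-\infty}^{\infty}\max\{y-x,0\}\,\mathcal{N}(y)\,dy .
\]
For each fixed $y\in\mathbb{R}$ the map $x\mapsto\max\{y-x,0\}$ is the pointwise maximum of the two affine functions $x\mapsto y-x$ and $x\mapsto 0$, hence convex on $\mathbb{R}$. Since $\mathcal{N}(y)\ge 0$ for all $y$, the function $\op{ei}$ is an average of convex functions against a nonnegative measure, and this preserves convexity: for $x_{0},x_{1}\in\mathbb{R}$ and $\lambda\in[0,1]$,
\begin{align*}
\op{ei}\big(\lambda x_{0}+(1-\lambda)x_{1}\big)
&=\int_{-\infty}^{\infty}\max\{y-\lambda x_{0}-(1-\lambda)x_{1},0\}\,\mathcal{N}(y)\,dy\\
&\le\int_{-\infty}^{\infty}\Big(\lambda\max\{y-x_{0},0\}+(1-\lambda)\max\{y-x_{1},0\}\Big)\mathcal{N}(y)\,dy\\
&=\lambda\,\op{ei}(x_{0})+(1-\lambda)\,\op{ei}(x_{1}),
\end{align*}
where the inequality is the pointwise convexity of the integrand and the last step is linearity of the integral. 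This already establishes convexity on all of $\mathbb{R}$.

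Alternatively, one can differentiate directly. Differentiating under the integral sign (the boundary term vanishes since the integrand is continuous in $y$ and equals $0$ at $y=x$) gives $\op{ei}'(x)=-\Phi^{c}(x)$, and differentiating once more gives $\op{ei}''(x)=\mathcal{N}(x)>0$ for every $x\in\mathbb{R}$; nonnegativity (indeed positivity) of the second derivative yields (strict) convexity. This route has the side benefit of recording the derivative identities $\op{ei}'=-\Phi^{c}$ and $\op{ei}''=\mathcal{N}$ that are used elsewhere in the proof, so it may be the preferred presentation.

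The only point requiring any care is the interchange of limit and integral: in the first approach, that nonnegatively weighted integration preserves convexity (immediate from the displayed chain), and in the second, differentiation under the integral with a moving limit of integration (justified by local domination, with the boundary contribution vanishing because the integrand is continuous and zero at $y=x$). Neither is a genuine obstacle, so this lemma is essentially a warm-up computation.
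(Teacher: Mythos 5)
Your proof is correct. The paper's own argument is exactly your second route: it simply states that the second derivative of $\op{ei}$ is $e^{-x^{2}/2}/\sqrt{2\pi}=\mathcal{N}(x)>0$ everywhere (without even recording the intermediate identity $\op{ei}'=-\Phi^{c}$, which is indeed correct and is used implicitly elsewhere in the appendix). Your primary route --- observing that $x\mapsto\max\{y-x,0\}$ is convex for each fixed $y$ as a maximum of two affine functions, and that integrating a family of convex functions against the nonnegative density $\mathcal{N}(y)\,dy$ preserves convexity --- is a genuinely different and more elementary argument: it requires no differentiation under the integral sign and no smoothness of the weight, so it would apply verbatim to $\op{ei}(\cdot\,|\,\mu,\sigma)$ for any distribution in place of the Gaussian. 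What the derivative computation buys in exchange is the explicit formulas $\op{ei}'=-\Phi^{c}$ and $\op{ei}''=\mathcal{N}$, which the paper relies on later (e.g.\ in the choice of $\beta$ and the bound on the left factor in the proof of the main theorem), so the paper's choice is the more economical one in context. Either presentation is acceptable; no gaps.
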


\begin{proof}
	
	This is easy to see since the second derivative 
	\begin{equation}
	\frac{d^{2}\op{ei}(x)}{dx^{2}}=\frac{e^{-\frac{x^{2}}{2}}}{\sqrt{2\pi}}
	\end{equation}
	is positive everywhere.
	
\end{proof}

\begin{lemma}[Bounds on normal CCDF]\label{lemma:bounds_on_normal_ccdf}
	We have the following bounds on the complementary cumulative density
	function (CCDF) of the standard normal distribution 
	\begin{equation}
	\left(\tau^{-1}-\tau^{-3}\right)\mathcal{N}(\tau)\le\Phi^{c}(\tau)\le\left(\tau^{-1}-\tau^{-3}+3\tau^{-5}\right)\mathcal{N}(\tau)\quad\forall\tau>0.
	\end{equation}
	The normal CCDF is defined as 
	\begin{equation}
	\Phi^{c}(\tau):=\int_{\tau}^{\infty}\mathcal{N}(x)dx.
	\end{equation}
	
\end{lemma}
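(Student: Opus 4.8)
The plan is to obtain both inequalities from a single device: repeated integration by parts exploiting the identity $\mathcal{N}'(x) = -x\,\mathcal{N}(x)$, which lets us write $\mathcal{N}(x) = -x^{-1}\mathcal{N}'(x)$ and thereby peel off one term of the classical asymptotic expansion of the Mills ratio at a time.

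First I would integrate by parts once. Writing $\Phi^c(\tau) = \int_\tau^\infty \mathcal{N}(x)\,dx = -\int_\tau^\infty x^{-1}\mathcal{N}'(x)\,dx$ and integrating by parts (with $u = x^{-1}$, $v = \mathcal{N}(x)$), the boundary term at $\infty$ vanishes because $x^{-1}\mathcal{N}(x) \to 0$, giving $\Phi^c(\tau) = \tau^{-1}\mathcal{N}(\tau) - \int_\tau^\infty x^{-2}\mathcal{N}(x)\,dx$. Applying the same step to the residual integral, $\int_\tau^\infty x^{-2}\mathcal{N}(x)\,dx = \tau^{-3}\mathcal{N}(\tau) - 3\int_\tau^\infty x^{-4}\mathcal{N}(x)\,dx$, so that
\[
\Phi^c(\tau) = \left(\tau^{-1} - \tau^{-3}\right)\mathcal{N}(\tau) + 3\int_\tau^\infty x^{-4}\mathcal{N}(x)\,dx.
\]
Since the remaining integral is strictly positive for $\tau > 0$, the lower bound follows at once.

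For the upper bound I would integrate by parts once more on the leftover integral: $\int_\tau^\infty x^{-4}\mathcal{N}(x)\,dx = \tau^{-5}\mathcal{N}(\tau) - 5\int_\tau^\infty x^{-6}\mathcal{N}(x)\,dx \le \tau^{-5}\mathcal{N}(\tau)$, again because the discarded integral is nonnegative. Substituting this into the displayed identity yields $\Phi^c(\tau) \le \left(\tau^{-1} - \tau^{-3} + 3\tau^{-5}\right)\mathcal{N}(\tau)$, completing the proof.

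There is no genuine obstacle here; the only points needing (minor) care are verifying that each boundary term at infinity vanishes — which holds since $x^{-k}\mathcal{N}(x) \to 0$ as $x \to \infty$ for every $k \ge 0$ — and tracking the alternating signs together with the coefficients $1, 3, 5$ generated by differentiating $x^{-1}, x^{-3}, x^{-5}$. The two bounds are precisely consecutive partial sums of the standard (divergent) asymptotic series for $\Phi^c(\tau)/\mathcal{N}(\tau)$, and they sandwich $\Phi^c(\tau)$ because the error integrands $x^{-2k}\mathcal{N}(x)$ are positive throughout the range of integration.
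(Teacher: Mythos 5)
Your proof is correct and follows essentially the same route as the paper: repeated integration by parts via $\mathcal{N}(x) = -x^{-1}\mathcal{N}'(x)$, yielding the identity $\Phi^{c}(\tau)=\left(\tau^{-1}-\tau^{-3}\right)\mathcal{N}(\tau)+3\int_{\tau}^{\infty}x^{-4}\mathcal{N}(x)\,dx=\left(\tau^{-1}-\tau^{-3}+3\tau^{-5}\right)\mathcal{N}(\tau)-15\int_{\tau}^{\infty}x^{-6}\mathcal{N}(x)\,dx$ and then discarding the positive remainder integrals with the appropriate signs. Your write-up is in fact slightly more explicit than the paper's, which states the chain of equalities and concludes that the bounds ``straightforwardly follow.''
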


\begin{proof}
	
	Using integration by parts, we can write for any $\tau>0$ 
	\begin{align}
		\Phi^{c}(\tau) & =\int_{\tau}^{\infty}\mathcal{N}(x)dx\\
		& =\tau^{-1}\mathcal{N}(\tau)-\int_{\tau}^{\infty}x^{-2}\mathcal{N}(x)dx\\
		& =\left(\tau^{-1}-\tau^{-3}\right)\mathcal{N}(\tau)+3\int_{\tau}^{\infty}x^{-4}\mathcal{N}(x)dx\\
		& =\left(\tau^{-1}-\tau^{-3}+3\tau^{-5}\right)\mathcal{N}(\tau)-15\int_{\tau}^{\infty}x^{-6}\mathcal{N}(x)dx.
	\end{align}
	From this the bounds straightforwardly follow
	
\end{proof}

\begin{lemma}[Bounds on the expected improvement]\label{lemma:bound_ei}
	We have the following bound for the expected improvement (\ref{def:ei})
	\begin{align}
		\left(\tau^{-2}-3\tau^{-4}\right)\mathcal{N}(\tau)\le\op{ei}(\tau)\le\tau^{-2}\mathcal{N}(\tau)\quad\forall\tau>0.
	\end{align}
\end{lemma}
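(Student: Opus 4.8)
The plan is to reduce everything to the closed-form expression for the expected improvement together with the CCDF bounds already established. Recall from \ref{def:ei} that
\begin{equation}
\op{ei}(\tau)=\mathcal{N}(\tau)-\tau\Phi^{c}(\tau),
\end{equation}
so the whole statement is just an algebraic manipulation of the two-sided bound on $\Phi^{c}(\tau)$ from \ref{lemma:bounds_on_normal_ccdf}, valid for $\tau>0$.

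First I would prove the upper bound. Since $\tau>0$, multiplying the lower CCDF bound $\Phi^{c}(\tau)\ge(\tau^{-1}-\tau^{-3})\mathcal{N}(\tau)$ by $-\tau$ reverses the inequality, giving $-\tau\Phi^{c}(\tau)\le(-1+\tau^{-2})\mathcal{N}(\tau)$. Adding $\mathcal{N}(\tau)$ to both sides yields $\op{ei}(\tau)\le\tau^{-2}\mathcal{N}(\tau)$, as claimed.

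Symmetrically, for the lower bound I would multiply the upper CCDF bound $\Phi^{c}(\tau)\le(\tau^{-1}-\tau^{-3}+3\tau^{-5})\mathcal{N}(\tau)$ by $-\tau<0$ to get $-\tau\Phi^{c}(\tau)\ge(-1+\tau^{-2}-3\tau^{-4})\mathcal{N}(\tau)$, and adding $\mathcal{N}(\tau)$ gives $\op{ei}(\tau)\ge(\tau^{-2}-3\tau^{-4})\mathcal{N}(\tau)$. There is essentially no obstacle here: the only things being used are the identity for $\op{ei}$, the sign of $\tau$, and \ref{lemma:bounds_on_normal_ccdf}; the ``hard part'', such as it is, was already done in establishing the CCDF bounds via repeated integration by parts.
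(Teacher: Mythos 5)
Your proof is correct and follows exactly the route the paper takes: it starts from the identity $\op{ei}(\tau)=\mathcal{N}(\tau)-\tau\Phi^{c}(\tau)$ and applies the two-sided CCDF bounds from \ref{lemma:bounds_on_normal_ccdf}, with the sign flip from multiplying by $-\tau$ handled correctly. The paper simply states that the result "follows straightforwardly" from those bounds; you have spelled out the same algebra explicitly.
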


\begin{proof} From \ref{def:ei} we have that 
	\begin{align}
		\op{ei}(\tau)=\mathcal{N}(\tau)-\tau\Phi^{c}(\tau).
	\end{align}
	The desired result follows straightforwardly using the bounds from
	\ref{lemma:bounds_on_normal_ccdf}.
	
\end{proof}

\begin{lemma}[Upper bound on the multivariate expected improvement]\label{lemma:bound_nei}
	For a family of jointly Gaussian distributed RVs $(F_{n})_{n\in[N]}$
	with mean $m\in\mathbb{R}^{N}$ and covariance $c\in\mathbb{S}_{+}^{N}$
	and a threshold $\tau\in\mathbb{R}$, we can bound the multivariate
	expected improvement (\ref{def:nei}) as follows 
	\begin{align}
		\op{mei}(\tau|m,c) & =\mathbb{E}\left[\max\left\{ \max_{n\in[N]}F_{n}-\tau,0\right\} \right]\\
		& \le\max\left\{ \max_{n\in[N]}(m_{n}-\tau+\sqrt{c_{nn}2\log N}),0\right\} +\frac{\max_{n\in[N]}\sqrt{c_{nn}}}{2\sqrt{2\pi}\log\left(N\right)}.
	\end{align}
	
\end{lemma}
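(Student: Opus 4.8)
The plan is to reduce the multivariate expected improvement to $N$ copies of the univariate $\op{ei}$ via a union bound, and then close using the bounds already established in \ref{lemma:bound_ei} and \ref{lemma:bounds_on_normal_ccdf}. Throughout I work in the regime $N\ge 2$ in which the lemma is applied, so that $\sqrt{2\log N}>0$, and I may assume $c_{nn}>0$ for every $n$, since an index with $c_{nn}=0$ has $F_n=m_n$ deterministically and, by the argument below, contributes nothing. The first step is to peel off the deterministic part using subadditivity of the positive part $(\cdot)^+$: with the constant $c:=\max_{n\in[N]}\bigl(m_n+\sqrt{c_{nn}2\log N}\bigr)$ one has $\max\{\max_n F_n-\tau,0\}=\bigl((\max_n F_n-c)+(c-\tau)\bigr)^+\le(\max_n F_n-c)^+ + (c-\tau)^+$. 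Taking expectations, $(c-\tau)^+=\max\{\max_n(m_n-\tau+\sqrt{c_{nn}2\log N}),0\}$ is exactly the first term of the claimed bound, so it remains to prove $\mathbb{E}\bigl[(\max_n F_n - c)^+\bigr]\le \frac{\max_n\sqrt{c_{nn}}}{2\sqrt{2\pi}\log N}$.

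Next I would expand this residual expectation by the tail identity $\mathbb{E}[X^+]=\int_0^\infty\Pr[X>s]\,ds$, which after the substitution $u=c+s$ gives $\int_c^\infty\Pr[\max_n F_n>u]\,du$. Applying the union bound $\Pr[\max_n F_n>u]\le\sum_{n}\Phi^c\!\bigl((u-m_n)/\sqrt{c_{nn}}\bigr)$ and then substituting $v=(u-m_n)/\sqrt{c_{nn}}$ in the $n$-th integral yields $\sum_n\sqrt{c_{nn}}\int_{(c-m_n)/\sqrt{c_{nn}}}^\infty\Phi^c(v)\,dv$. Since $c\ge m_n+\sqrt{c_{nn}2\log N}$, every lower limit is at least $\sqrt{2\log N}>0$, so by monotonicity each term is at most $\sqrt{c_{nn}}\int_{\sqrt{2\log N}}^\infty\Phi^c(v)\,dv$.

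The third step identifies the tail integral of the CCDF with the univariate expected improvement: by Fubini (equivalently integration by parts) $\int_{\tau}^\infty\Phi^c(v)\,dv=\int_\tau^\infty (x-\tau)\mathcal{N}(x)\,dx=\op{ei}(\tau)$, and then \ref{lemma:bound_ei} gives $\op{ei}(\tau)\le\tau^{-2}\mathcal{N}(\tau)$. Taking $\tau=\sqrt{2\log N}$ and using $\mathcal{N}(\sqrt{2\log N})=\tfrac{1}{\sqrt{2\pi}}e^{-\log N}=\tfrac{1}{\sqrt{2\pi}\,N}$, we get $\int_{\sqrt{2\log N}}^\infty\Phi^c(v)\,dv\le\frac{1}{2\sqrt{2\pi}\,N\log N}$, so each of the $N$ summands is at most $\frac{\max_n\sqrt{c_{nn}}}{2\sqrt{2\pi}\,N\log N}$; summing over $n\in[N]$ gives $\frac{\max_n\sqrt{c_{nn}}}{2\sqrt{2\pi}\log N}$, which together with the first step completes the proof.

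The step carrying the real content is this last cancellation: the crude Mill's-ratio estimate $\Phi^c(\tau)\le\tau^{-1}\mathcal{N}(\tau)$ on its own is not sharp enough, and it is the stronger $\op{ei}(\tau)\le\tau^{-2}\mathcal{N}(\tau)$ combined with the exact value $\mathcal{N}(\sqrt{2\log N})=\tfrac{1}{\sqrt{2\pi}N}$ that makes the factor $N$ from the union bound cancel and leaves the clean constant $\tfrac{1}{2\sqrt{2\pi}\log N}$. Everything else --- subadditivity of $(\cdot)^+$, the tail identity, the per-coordinate change of variables, and the degenerate $c_{nn}=0$ case --- is routine bookkeeping.
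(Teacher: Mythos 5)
Your proof is correct and follows essentially the same route as the paper's: truncate at the level $b=\max_n\bigl(m_n+\sqrt{c_{nn}2\log N}\bigr)$, union-bound the tail to reduce to $N$ univariate expected improvements at argument $\sqrt{2\log N}$, and use $\op{ei}(\tau)\le\tau^{-2}\mathcal{N}(\tau)$ so that $\mathcal{N}(\sqrt{2\log N})=\tfrac{1}{\sqrt{2\pi}N}$ cancels the factor $N$. The only difference is cosmetic bookkeeping --- you peel off the constant via subadditivity of $(\cdot)^+$ and the identity $\int_\tau^\infty\Phi^c(v)\,dv=\op{ei}(\tau)$, where the paper splits the integral $\int_0^\infty zp(z)\,dz$ at $b$ by hand --- and your version is arguably tidier.
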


\begin{proof} Defining $Z:=\max_{n\in[N]}F_{n}$, we can write 
	\begin{align}
		\op{mei}(0|m,c) & =\mathbb{E}\left[\max\left\{ Z,0\right\} \right]\\
		& =\int_{-\infty}^{\infty}\max\{z,0\}p(z)dz\\
		& =\int_{0}^{\infty}zp(z)dz\\
		& \stackrel{{\scriptstyle b\ge0}}{=}\int_{0}^{b}zp(z)dz+\int_{b}^{\infty}zp(z)dz\label{eq:introduced_condition_on_b}\\
		& =\int_{0}^{b}zp(z)dz+\int_{0}^{\infty}\int_{b}^{\infty}[x\le z]p(z)dzdx\\
		& =\int_{0}^{b}zp(z)dz+\int_{0}^{\infty}P(Z\ge\max(x,b))dx\\
		& =\int_{0}^{b}zp(z)dz+bP(Z\ge b)+\int_{b}^{\infty}P(Z\ge x)dx.\label{eq:mei_bound_chain_1}
	\end{align}
	Since $\int_{0}^{b}zp(z)dz\le\int_{0}^{b}bp(z)dz$ for any $b\ge0$,
	we can bound this as 
	\begin{align}
		\ref{eq:mei_bound_chain_1} & \le bP(0\le Z\le b)+bP(Z\ge b)+\int_{b}^{\infty}P(Z\ge x)dx\\
		& =bP(0\le Z)+\int_{b}^{\infty}P(Z\ge x)dx\\
		& \le b+\int_{b}^{\infty}P(Z\ge x)dx\\
		& =b+\int_{b}^{\infty}P(\lor_{n\in[N]}(F_{n}\ge x))dx\label{eq:mei_bound_chain_2}
	\end{align}
	where we have used the fact that the event $Z\ge x$ is identical
	to the event $\lor_{n\in[N]}(F_{n}\ge x)$. Using the union bound
	we can now write 
	\begin{align}
		\ref{eq:mei_bound_chain_2} & \le b+\sum_{n\in[N]}\int_{b}^{\infty}P(F_{n}\ge x)dx\\
		& =b+\sum_{n\in[N]}\int_{b}^{\infty}\int_{-\infty}^{\infty}[f_{n}\ge x]p(f_{n})df_{n}dx\\
		& =b+\sum_{n\in[N]}\int_{-\infty}^{\infty}\max\{f_{n}-b,0\}p(f_{n})df_{n}.\label{eq:mei_bound_chain_3}
	\end{align}
	Noticing that the summands in the above term match the definition
	of expected improvement (\ref{def:ei}), we can write 
	\begin{align}
		\ref{eq:mei_bound_chain_3} & =b+\sum_{n\in[N]}\op{ei}\left(b|m_{n},\sqrt{c_{nn}}\right)\\
		& \le b+N\max_{n\in[N]}\op{ei}\left(b|m_{n},\sqrt{c_{nn}}\right)\\
		& =b+N\max_{n\in[N]}\sqrt{c_{nn}}\op{ei}\left(\frac{b-m_{n}}{\sqrt{c_{nn}}}\right)\label{eq:mei_bound_chain_4}
	\end{align}
	where we have used \ref{eq:ei_normalization} in the last line. Using
	\ref{lemma:bound_ei} we obtain for any $b>\max_{n}m_{n}$ 
	\begin{align}
		\ref{eq:mei_bound_chain_4} & \le b+N\max_{n\in[N]}\left(\sqrt{c_{nn}}\left(\frac{b-m_{n}}{\sqrt{c_{nn}}}\right)^{-2}\mathcal{N}\left(\frac{b-m_{n}}{\sqrt{c_{nn}}}\right)\right).\label{eq:mei_bound_chain_6}
	\end{align}
	This inequality hence holds for any $b$ which satisfies $b>\max_{n}m_{n}$
	and $b\ge0$ (from \ref{eq:introduced_condition_on_b}). We pick the
	following value which clearly satisfies these conditions 
	\begin{align}
		b & =\max\left\{ \max_{n\in[N]}\left(m_{n}+\sqrt{c_{nn}2\log N}\right),0\right\} \label{eq:bo_b_hat}
	\end{align}
	from which it follows that 
	\begin{align}
		\frac{b-m_{n}}{\sqrt{c_{nn}}} & \ge\sqrt{2\log N}\quad\forall n\in[N].
	\end{align}
	Given this fact it is easy to see that 
	\begin{align}
		\ref{eq:mei_bound_chain_6} & \le b+N\max_{n\in[N]}\left(\sqrt{c_{nn}}\sqrt{2\log N}^{-2}\mathcal{N}\left(\sqrt{2\log N}\right)\right)\\
		& =b+\frac{1}{2\sqrt{2\pi}\log N}\max_{n\in[N]}\sqrt{c_{nn}}\\
		& =\max\left\{ \max_{n\in[N]}\left(m_{n}+\sqrt{c_{nn}2\log N}\right),0\right\} +\frac{\max_{n\in[N]}\sqrt{c_{nn}}}{2\sqrt{2\pi}\log N}.
	\end{align}
	It is easy to see that $\op{mei}(\tau|m,c)=\op{mei}(0|m-\tau,c)$,
	which concludes the proof. \end{proof}

\end{document}